\title{\LARGE\bfseries Approximate Global Convergence of Independent Learning in Multi-Agent Systems}
\author{Ruiyang Jin\textsuperscript{$*,1$}, Zaiwei Chen\textsuperscript{$\dagger,1$}, Yiheng Lin\textsuperscript{$\dagger,2$}, Jie Song\textsuperscript{$*,2$}, and Adam Wierman\textsuperscript{$\dagger,3$}\\
{\small
\textsuperscript{$*$}\textit{Peking University,} \href{mailto:jry@pku.edu.edu}{\textit{\textsuperscript{$1$}jry@pku.edu.edu}}, 
\href{mailto:jie.song@pku.edu.cn}{\textit{\textsuperscript{$2$}jie.song@pku.edu.cn}}}\\
{\small
\textsuperscript{$\dagger$}\textit{California Institute of Technology,} 
\href{mailto:zchen458@caltech.edu}{\textit{\textsuperscript{$1$}zchen458@caltech.edu}},
\href{mailto:yihengl@caltech.edu}{\textit{\textsuperscript{$2$}yihengl@caltech.edu}},
\href{mailto:adamw@caltech.edu}{\textit{\textsuperscript{$3$}adamw@caltech.edu}}
}
}
\date{\vspace{-0.4 in}}
\begin{document}
\maketitle

\begin{abstract}
Independent learning (IL), despite being a popular approach in practice to achieve scalability in large-scale multi-agent systems, usually lacks global convergence guarantees. In this paper, we study two representative algorithms, independent $Q$-learning and independent natural actor-critic, within value-based and policy-based frameworks, and provide the first finite-sample analysis for approximate global convergence. The results imply a sample complexity of $\tilde{\mathcal{O}}(\epsilon^{-2})$ up to an error term that captures the dependence among agents and characterizes the fundamental limit of IL in achieving global convergence. To establish the result, we develop a novel approach for analyzing IL by constructing a separable Markov decision process (MDP) for convergence analysis and then bounding the gap due to model difference between the separable MDP and the original one. Moreover, we conduct numerical experiments using a synthetic MDP and an electric vehicle charging example to verify our theoretical findings and to demonstrate the practical applicability of IL.
\end{abstract}

\section{Introduction}\label{sec:intro}

Reinforcement learning (RL) \cite{sutton2018reinforcement} has become a popular learning framework to solve sequential decision-making problems in the real world and has achieved great success in applications across different domains, such as Atari games \cite{mnih2015human}, the game of Go \cite{silver2017mastering}, robotics \cite{gu2017deep}, nuclear fusion control \cite{degrave2022magnetic}, etc. While the early literature on RL predominantly focused on the single-agent setting, motivated by the recent development in applications involving multi-agent interactions, e.g., in areas such as multiplayer games \cite{yang2020overview} and active voltage control \cite{wang2021multi}, multi-agent reinforcement learning (MARL) has received increasing attention.  

Compared to single-agent RL, the interplay of agents in MARL presents significant challenges for developing provably efficient learning algorithms. One of those challenges is scalability. Specifically, even when all agents share an identical interest, any MARL algorithm that uses centralized training becomes computationally intractable as the number of agents increases. To address this scalability issue, various decentralized MARL algorithms were developed, such as centralized training with decentralized execution (CTDE) \cite{lowe2017multi}, localized MARL in networked systems \cite{zhang2018fully}, and independent learning (IL) \cite{lanctot2017unified}. Among them, IL has the minimum requirements on coordination and information sharing, as each agent makes decisions purely based on its local observations, without collecting any information from other agents. Therefore, increasing the number of agents does not introduce additional communication or computational complexity in IL.

Although IL is simple, intuitive, and easy to implement, it is unclear whether agents can jointly achieve any performance guarantees through IL. Note that, in IL, each individual agent overlooks the multi-agent interactions by treating all other agents as parts of the environment, which is non-stationary due to the actions taken by other agents. Therefore, except in cases where the underlying model has a special structure, such as in zero-sum stochastic games \cite{chen2024zerosum,cai2023uncoupled} or Markov potential games \cite{leonardos2021global}, IL, in general, is not guaranteed to converge \cite{zhang2021multi}. This leads to the following fundamental question:
\begin{center}
\emph{Is it possible to achieve (approximate) global convergence for IL in multi-agent systems?}  
\end{center}

In this paper, we provide a positive answer to this question in the cooperative setting by establishing the first finite-sample bounds for the approximate global convergence of IL. In addition, we introduce the \emph{dependence level} as a novel notion to capture the fundamental limit of IL. Detailed contributions of this work are presented below.
\begin{itemize}
    \item \textbf{Approximate Global Convergence for Independent Learning.}
We consider two popular and representative forms of IL, one of which is a value-based algorithm called independent $Q$-learning (IQL), and the other is a policy-based algorithm within the actor-critic framework called independent natural actor-critic (INAC). We establish the first last-iterate finite-sample bounds measured by the global optimality gap. A key feature is that the bound asymptotically converges within a fixed error term, which is proportional to the dependence level, denoted by $\mathcal{E}$, of the Markov decision process (MDP) model. Here, the dependence level $\mathcal{E}$ characterizes how close the MARL model is to a model where each agent's transitions are independent of each other. Our finite-sample bound implies that, up to the aforementioned asymptotic error term, the sample complexity for the remaining terms to achieve an $\epsilon$-optimality is $\tilde{\mathcal{O}}(\epsilon^{-2})$, which is not improvable \cite{gheshlaghi2013minimax}. 
\item \textbf{A Novel Approach for Analyzing Independent Learning.}
The main challenge in analyzing IL is that each agent overlooks the multi-agent interactions by treating other agents as parts of the environment. To overcome this challenge, we develop a novel approach (depicted in Figure \ref{fig:analysis_idea}) that involves (1) constructing a separable MDP consisting of local transition kernels to approximate the original MDP, (2) analyzing IL as if they were implemented on the separable MDP, and finally, (3) bounding the error due to model difference between the separable MDP and the original one to establish the approximate global convergence.

The proof technique discussed above can be used beyond the analysis of IL. Specifically, in a general stochastic iterative algorithm, as long as the random process that drives the iterative algorithm can be approximated by a Markov chain (even though the original random process is not Markovian), our proof technique can be potentially applied to get finite-sample guarantees. See Appendix \ref{subsec:beyond_IL} for more details.
\begin{figure}[h]
    \centering
    \includegraphics[width=4in]{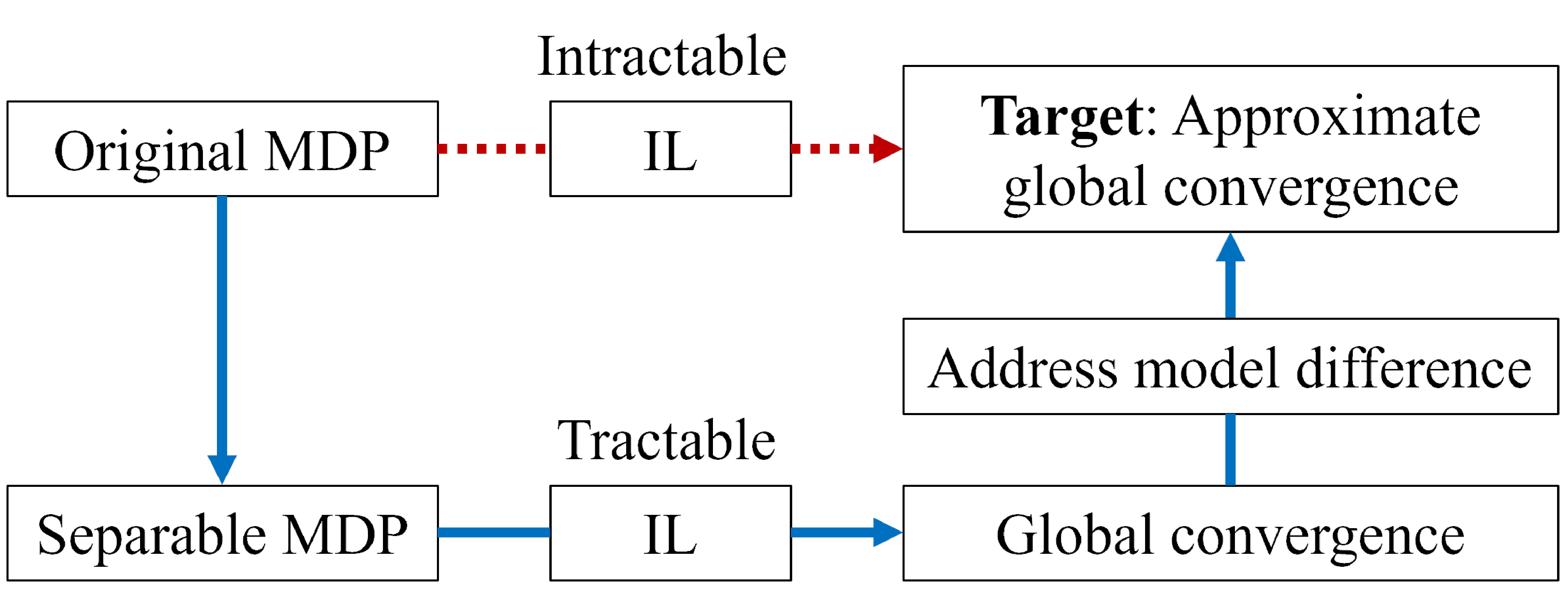}
    \caption{Our Roadmap for Analyzing IL.}
    \label{fig:analysis_idea}
\end{figure}
\item \textbf{Validation of Approximate Global Convergence of IL.} We first conduct numerical simulations on a synthetic MDP to verify the global convergence of IL, and to demonstrate that the asymptotic optimality gap of IL is determined by the dependence level. Then, we apply IL to an electric vehicle (EV) charging problem to showcase the practical applicability of our algorithms, where we employ a neural network as a form of function approximation. Notably, while our theoretical results are established for the tabular setting, our algorithms empirically achieve approximate global convergence under function approximation. 
\end{itemize}

\subsection{Related Literature}\label{sec:related_literature}

$Q$-learning and natural actor-critic are two popular and representative algorithms for value-based and policy-based methods, respectively. In the following, we discuss the related work in $Q$-learning, natural actor-critic, IL, and networked MARL.

\paragraph{$Q$-Learning.} $Q$-learning was first proposed in \cite{watkins1992q} as a data-driven variant of the $Q$-value iteration for solving the Bellman optimality equation. Due to its popularity, the convergence behavior of $Q$-learning has been extensively studied. Specifically, the asymptotic convergence of $Q$-learning has been studied in \cite{tsitsiklis1994asynchronous,jaakkola1994convergence} and finite-sample analysis in \cite{even2003learning,qu2020finite,li2023q,chen2021lyapunov}, among many others. Practically, $Q$-learning has been successfully employed in many applications in the form of the deep $Q$-network (DQN), such as games \cite{mnih2015human} and control problems \cite{kiumarsi2014reinforcement}. Although $Q$-learning has also been applied to multi-agent systems \cite{tampuu2017multiagent}, to our knowledge, the theoretical results in this domain are quite limited.

\paragraph{Natural Actor-Critic.} In RL, a popular approach for finding an optimal policy is to implement gradient-based methods directly in the policy space \cite{sutton2018reinforcement,konda2000actor}, a typical example of which is natural actor-critic \cite{bhatnagar2009natural}, where natural policy gradient \cite{kakade2001natural,xiao2022convergence,yuan2022linear} is used for the policy improvement and TD-learning \cite{sutton1988learning,bhandari2018finite,srikant2019finite} is used for the policy evaluation. Finite-sample analysis of natural actor-critic (with linear function approximation) was performed in \cite{agarwal2021theory,lan2023policy,khodadadian2022finite,chen2022ADP} and the references therein, where the state-of-the-art sample complexity is $\Tilde{\mathcal{O}}(\epsilon^{-2})$. However, they all focused on the single-agent setting. In MARL where each agent independently performs natural actor-critic, existing results do not directly extend due to multi-agent interactions.

\paragraph{Independent Learning in MARL.} IL has been widely applied in various domains, such as power systems \cite{ding2022target,jin2022deep,foruzan2018reinforcement}, communication networks \cite{liang2019spectrum, nguyen2019distributed,zhong2019deep}, etc. While IL is empirically popular, it has been demonstrated in \cite{tan1993multi} that IL may fail for tasks where coordination among agents is needed. Therefore, it is vital to understand when it is possible for IL to achieve global convergence \cite{busoniu2008comprehensive,zhang2021multi}. Although IL has been theoretically justified to some extent in certain multi-agent scenarios, such as zero-sum stochastic games \cite{chen2024zerosum,daskalakis2020independent} and Markov potential games \cite{ding2022independent,leonardos2021global}, the results are still lacking for the general setting. In this work, we establish finite-sample guarantees of IL in the cooperative setting and provide a characterization of the optimality gap based on the dependence level.

\paragraph{Networked MARL.} In networked MARL, each agent engages in information exchange only with its neighbors through an interaction network. Such localized interaction structure is commonly found in applications that involve social networks, computer networks, traffic networks, etc. \cite{lin2021multi,zhou2023b,qu2022scalable}, and it enables decentralized decision-making based on local observations and shared information from neighboring agents \cite{chu2020multi}. Theoretically, the authors of \cite{zhang2018fully} incorporate a consensus algorithm in the design of their networked MARL algorithm, and provide convergence analysis under linear function approximation. Many algorithms and analyses are proposed for the extended settings, such as continuous spaces \cite{zhang2018networked}, and stochastic networked MARL \cite{zhang2021finite}. A more detailed review of networked MARL can be found in \cite{zhang2021multi}. The works most relevant to IL within networked MARL are the scalable algorithms designed using the exponential decay property \cite{qu2022scalable,lin2021multi,qu2020scalable,zhang2023global,zhou2023b}, where the value functions of each agent have exponentially decaying correlations with agents far away from it. Convergence results with respect to the range of neighbors from which each agent collects information have been established. However, there are two key differences between our setting and this line of work: First, we do not impose the assumption of local interaction structure which requires the next local state of an agent to be only affected by the current states of its direct neighbors; Second, our IL approach only requires local information to implement, while the scalable algorithms in Networked MARL require communications between agents that are within a certain range.

\section{Problem Formulation}\label{sec:model}

Consider an MARL problem with $n$ agents. Let $\mathcal{S}$ and $\mathcal{A}$ be the global state space and the global action space, respectively. Given $i\in \{1,2,\cdots, n\}:=[n]$, let $\mathcal{S}^i$ (respectively, $\mathcal{A}^i$) be the state space (respectively, the action space) of agent $i$. In this work, we assume that $|\mathcal{S}||\mathcal{A}|<\infty$. Let $\mathcal{P}=\{P_a\in \mathbb{R}^{|\mathcal{S}|\times |\mathcal{S}|}\mid a\in\mathcal{A}\}$ be the set of transition probability matrices indexed by actions, i.e., $P_a(s, s')$ denotes the probability of transiting to the global state $s'$ after taking the joint action $a$ at the global state $s$. For each $i\in [n]$, let $\mathcal{R}^i:\mathcal{S}^i\times \mathcal{A}^i\mapsto [0,1]$ be the reward function of agent $i$. Note that restricting the image space of the reward function to $[0,1]$ is without loss of generality because we are working with a finite MARL problem. For any $s=(s^1,s^2,\cdots,s^n)\in\mathcal{S}$ and $a=(a^1,a^2,\cdots,a^n)\in\mathcal{A}$, the total one-stage reward is given by $\mathcal{R}(s,a)=\sum_{i\in[n]}\mathcal{R}^i(s^i,a^i)$. Let $\gamma\in (0,1)$ be the discount factor, which captures the weight we place on future rewards. We denote the MDP for the MARL problem as $\mathcal{M}=(\mathcal{S},\mathcal{A},\mathcal{P},\mathcal{R},\gamma)$.

Given a joint policy $\pi:\mathcal{S}\mapsto\Delta(\mathcal{A})$, where $\Delta(\mathcal{A})$ stands for the probability simplex on $\mathcal{A}$, its associated $Q$-function $Q_\pi\in\mathbb{R}^{|\mathcal{S}||\mathcal{A}|}$ is defined as
\begin{align*}
    Q_\pi(s,a)=\mathbb{E}_{\pi} \left[\sum_{k=0}^\infty \gamma^k\mathcal{R}(S_k,A_k)\;\middle|\;S_0=s,A_0=a\right], \; \forall\, (s,a)\in\mathcal{S}\times \mathcal{A},
\end{align*}
where we use $\mathbb{E}_\pi[\,\cdot\,]$ to indicate that the actions are chosen according to the joint policy $\pi$. We further define the value function $V_\pi \in \mathbb{R}^{|\mathcal{S}|}$ as $V_\pi (s)=\mathbb{E}_{a\sim \pi(\cdot \mid s)} [Q_\pi (s,a)]$ for all $s\in \mathcal{S}$. Given an initial distribution $\mu$ on the states, denote $V_\pi^\mu:= \mathbb{E}_{s\sim \mu} [V_\pi (s)]$ as the expected value of a policy $\pi$. The goal is to find a globally optimal joint policy $\pi$ such that $Q_\pi$ (or equivalently, $V_\pi$, $V_\pi^\mu$) is uniformly maximized for all $(s,a)$.

For an individual agent $i\in [n]$, we also define $Q_\pi^i\in\mathbb{R}^{|\mathcal{S}||\mathcal{A}|}$ as the $Q$-function of agent $i$ given policy $\pi$:
\begin{align*}
    Q_\pi^i (s,a)= \mathbb{E}_{\pi} \left[\sum_{k=0}^\infty \gamma^k\mathcal{R}^i(S^i_k,A^i_k)\;\middle|\;S_0=s,A_0=a\right], \; \forall \,(s,a)\in\mathcal{S}\times \mathcal{A}.
\end{align*}
Note that we have $Q_\pi(s,a)=\sum_iQ_\pi^i(s,a)$ for all $(s,a)$.

\subsection{The Dependence Level}

We now introduce the concept of \textit{dependence level} for the multi-agent MDP model to characterize the globally coupled state transitions.  This is a new notion for analyzing IL that we introduce in this paper. We first illustrate the concept of \textit{separable MDPs}, which is crucial for defining the dependence level of the multi-agent MDP. For any $i\in [n]$, let $\mathcal{P}^i$ be the set of $|\mathcal{S}^i|$ by $|\mathcal{S}^i|$ right stochastic matrices, and let
\begin{align*}
    \hat{\mathcal{Z}}=\left\{
    \{\hat{P}_a\}_{a\in \mathcal{A}}\, \middle | \,\exists\,\hat{P}_{a^i}\in\mathcal{P}^i\text{ s.t. }\hat{P}_a(s_1,s_2)=\prod_{i=1}^n\hat{P}_{a^i}(s_1^i,s_2^i),\forall\, s_1, s_2\in\mathcal{S}
    \right\}.
\end{align*}
We call the MDP with transition matrices $\{\hat{P}_a\}_{a\in\mathcal{A}}\in \hat{\mathcal{Z}}$ a separable MDP. Intuitively, while a separable MDP is defined on the joint state-action space, it can be decomposed into $n$ small MDPs that involve independently on their associated local state-action spaces.  We next define the dependence level of a multi-agent MDP model. 

\begin{definition}\label{def:dependence_level}
    \textit{An $n$-agent MDP with transition probability matrices $\{P_a\}_{a\in\mathcal{A}}$ is said to be $\mathcal{E}$-dependent if and only if}
    \begin{align}\label{trans_approx}
        \min_{\{\hat{P}_a\}_{a\in\mathcal{A}}\in\hat{\mathcal{Z}}}\max_{s,a}\left\|P_a(s,\cdot)-\hat{P}_a(s,\cdot)\right\|_{\text{TV}}=\mathcal{E},
    \end{align}
    \textit{where $\mathcal{E}$ is called the dependence level of the MDP, and $\| \cdot \|_{\text{TV}}$ is the total variation distance.}
\end{definition} 
\begin{remark}
    Since the quantity $\max_{s,a}\|P_a(s,\cdot)-\hat{P}_a(s,\cdot)\|_{\text{TV}}$ as a function of $\{\hat{P}_a\}_{a\in\mathcal{A}}$ is a continuous function and $\hat{\mathcal{Z}}$ is a compact set, 
    the $\min(\cdot)$ in Eq. (\ref{trans_approx}) is well-defined due to the extreme value theorem \cite{rudin1976principles}. 
\end{remark}
To understand Definition \ref{def:dependence_level}, suppose that the original MDP $\mathcal{M}$ is separable. Then the dependence level $\mathcal{E}=0$. In this case, we would expect IL to achieve global convergence because the problem is essentially to find the optimal policies of $n$ decoupled MDPs. More generally, the dependence level $\mathcal{E}$ can be positive, in which case the original MDP is not separable, and the exact global convergence may not be achievable. Therefore, the dependence level $\mathcal{E}$ serves as a measure of how close the original MDP is to the space of separable MDPs and captures the fundamental limit of IL for global convergence. Let 
\begin{align*}
    \hat{\mathcal{P}}= {\arg\min}_{\{\hat{P}_a\}_{a\in\mathcal{A}}\in\hat{\mathcal{Z}}}\max_{s,a}\big\|P_a(s,\cdot)-\hat{P}_a(s,\cdot)\big\|_{\text{TV}}
\end{align*}
and denote $\hat{\mathcal{M}}=(\mathcal{S},\mathcal{A},\hat{\mathcal{P}},\mathcal{R},\gamma)$, which consists of $n$ decoupled MDPs. We use the hat notation throughout the paper to denote the counterpart quantities for the separable MDP $\hat{\mathcal{M}}$. For example, we use $\hat{Q}_{\pi}\in \mathbb{R}^{|\mathcal{S}|\times|\mathcal{A}|}$ as the $Q$-function of a policy $\pi$ under $\hat{\mathcal{M}}$.

\begin{remark}
    In this work, although we assumed that the global reward is the summation of each agent's individual reward for simplicity of presentation, this assumption can be relaxed. Specifically, for each $i\in [n]$, let $\Lambda^i$ be the set of real-valued functions with domain $\mathcal{S}^i\times \mathcal{A}^i$. Then, by defining the reward dependence level $\mathcal{E}_r$ as $\mathcal{E}_r=\min_{\hat{\mathcal{R}}^i\in\Lambda^i,\forall\,i\in [n]}\max_{s, a}|\mathcal{R}(s, a)-\sum_{i\in[n]}\hat{\mathcal{R}}^i(s^i,a^i)|$, we can replace the original reward function $\mathcal{R}(s,a)$ by $\sum_{i\in[n]}\hat{\mathcal{R}}^i(s^i,a^i)$ in our analysis. Note that in this case, the reward-dependence level $\mathcal{E}_r$ will also appear in the convergence bounds.
\end{remark}

\subsection{An Illustrative Example}\label{ex:artificial}
In this subsection, we provide an illustrative example to further discuss the concept of dependence level. 

In multi-agent systems, local information sharing and decision-making coordination can significantly improve the performance of IL \cite{tan1993multi}. However, it is challenging to select the pertinent information to communicate and the agents with whom to collaborate. The following example illustrates how different options of grouping and collaboration can lead to different dependence levels, which significantly affects the performance of IL.

\begin{figure}[h]
    \centering
    \includegraphics[width=5 in]{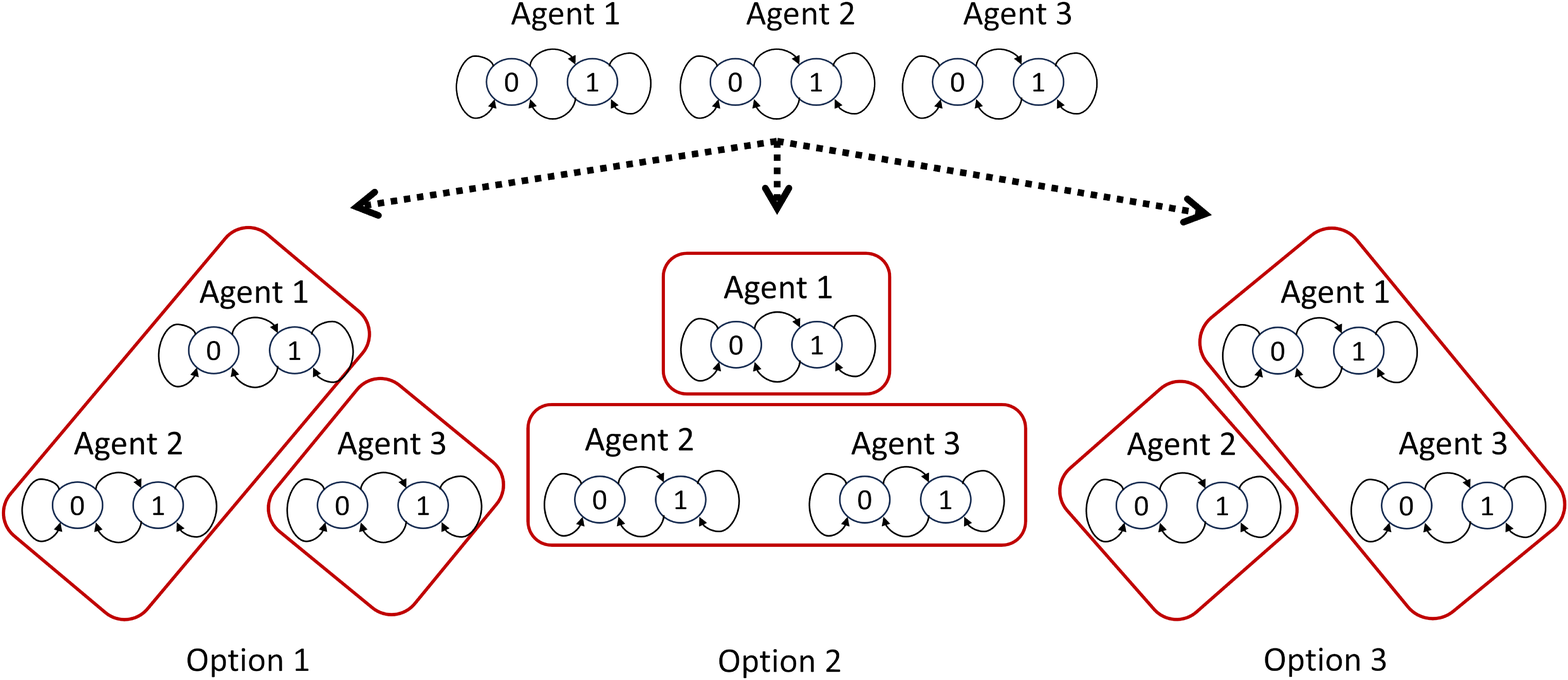}
    \caption{Illustration of the example.}
    \label{fig:example1}
\end{figure}

\begin{example}
    Consider an MDP consisting of $3$ agents. The state space of each agent is $\{0,1\}$, and the action space of each agent is also $\{0,1\}$. There are different dependencies among the states of the three agents with detailed transition probabilities shown in Appendix \ref{append:detailed_examples1}. Suppose that we are allowed to group $2$ of the $3$ agents as one agent, in which case the $2$ agents can share information and coordinate with each other. Then, we have $3$ different grouping options shown in Figure \ref{fig:example1}. After calculating the dependence level $\mathcal{E}$ for each grouping option (cf. Appendix \ref{append:detailed_examples1}), we have: Option 1 leads to $\mathcal{E}=0.5$; Option 2 leads to $\mathcal{E}=0.75$, and Option 3 leads to $\mathcal{E}=0.875$. 
\end{example}
\section{Main Results}\label{sec:main_results}
This section presents our main results. We first present the IQL and the INAC as representative IL algorithms in Sections \ref{subsec:IQL_algorithm} and \ref{susbec:INAC_algorithm}, respectively. Then, we present the finite-sample approximate global convergence guarantees for both algorithms in Section \ref{subsec:main_results}. The proof sketch of our main theorems is provided in Section \ref{subsec:proof_idea}, with detailed proof deferred to the appendix.

\subsection{Independent $Q$-Learning}\label{subsec:IQL_algorithm}

In IQL, each agent treats all other agents as parts of the environment and implements $Q$-learning on its local state-action space, as presented in Algorithm \ref{algorithm:IQL}.

\begin{algorithm}[htbp]
	\caption{Independent $Q$-Learning (Agent $i$)}
	\label{algorithm:IQL} 
	\begin{algorithmic}[1]
		\STATE \textbf{Input:} An integer $K$, a behavior policy $\pi_b^i$, and an initialization $Q^i_0=0$.
		\FOR{$k=0,1,2,\cdots,K-1$}
		\STATE Implement $A_k^i\sim \pi_b^i(\cdot\mid S_k^i)$ (simultaneously with all other agents), and observes $S^i_{k+1}$.
        \STATE Update $Q^i_k$ according to 
        \begin{align*}
            Q_{k+1}^i(S_k^i,A_k^i)=Q_k^i(S_k^i,A_k^i)+\alpha_k\left(\mathcal{R}^i(S_k^i,A_k^i) +\gamma \max_{\Bar{a}^i} Q_k^i(S_{k+1}^i,\Bar{a}^i)-Q_k^i(S_k^i,A_k^i)\right).
        \end{align*}
		\ENDFOR
        \STATE \textbf{Output:} $Q_K^i$
	\end{algorithmic}
\end{algorithm}

\paragraph{Algorithm Details.} In IQL, each agent uses a local behavior policy $\pi_b^i: \mathcal{S}^i \mapsto \Delta (\mathcal{A}^i)$ to interact with the environment to collect samples (cf. Algorithm \ref{algorithm:IQL} Line $3$) and updates its local $Q$-function estimate $Q_k^i\in \mathbb{R}^{|\mathcal{S}^i||\mathcal{A}^i|}$ according to Algorithm \ref{algorithm:IQL} Line $4$,
where $\alpha_k$ is the learning rate. Note that this is an asynchronous update because only one component (i.e., the $(S_k^i,A_k^i)$-th component) of the vector-valued local $Q$-function is updated at each step. The update of $Q$-learning can be viewed as a stochastic approximation algorithm for solving the Bellman optimality equation. See \cite{sutton2018reinforcement,bertsekas1996neuro,watkins1992q} for more details about $Q$-learning.

\begin{remark}
Unlike single-agent $Q$-learning, which is driven by a trajectory of Markovian samples,
the local sample trajectory $\{(S^i_k,A^i_k)\}_{k\geq 0}$ used for IQL, in general, does \textit{not} form a Markov chain. Therefore, the convergence of the local $Q$-function $\{Q^i_{t,k}\}_{k\geq 0}$ does not directly follow from the existing results studying single-agent $Q$-learning \cite{even2003learning,chen2021lyapunov,qu2020finite,li2023q}, which presents a challenge in the analysis. 
\end{remark}

\subsection{Independent Natural Actor-Critic}\label{susbec:INAC_algorithm}
The detailed description of the INAC algorithm is presented in Algorithm \ref{algorithm:INAC}. At a high level, INAC consists of an actor in the outer loop and a critic in the inner loop. The actor uses independent natural policy gradient (INPG) to update the policy and the critic uses independent TD-learning (ITD) to estimate the local $Q$-function, which is needed for the actor. We next elaborate on the actor and the critic in more detail.

\begin{algorithm}[htbp]
	\caption{Independent Natural Actor-Critic (Agent $i$)}
	\label{algorithm:INAC} 
	\begin{algorithmic}[1]
		\STATE \textbf{Input:} Integers $K,T$, initializations $\theta_0^i=0$ and $Q^i_{t,0}=0$ for all $t\geq 0$.
		\FOR{$t=0, 1,2,\cdots,T-1$}
		\FOR{$k=0,1,2,\cdots,K-1$}
		\STATE Implement $A_k^i\sim \pi^i_{\theta_t^i}(\cdot\mid S_k^i)$ (simultaneously with all other agents), and observes $S_{k+1}^i$.
        \STATE Update $Q_{t,k}^i$ according to \begin{align*}
            Q_{t,k+1}^i(S_k^i,A_k^i) =(1-\alpha_k)Q_{t,k}^i(S_k^i,A_k^i) +\alpha_k \left(\mathcal{R}^i(S_k^i,A_k^i)+\gamma \mathbb{E}_{a^i\sim \pi_{(t)}^i(\cdot\mid S_{k+1}^i)}\left[Q_{t,k}^i(S_{k+1}^i,a^i)\right]\right).
        \end{align*}
		\ENDFOR
        \STATE $\theta_{t+1}^i=\theta_t^i+\eta_t Q_{t,K}^i$.
		\ENDFOR
  \STATE \textbf{Output:} $\pi_{\theta_T^i}^i$
	\end{algorithmic}
\end{algorithm} 

\paragraph{Independent Natural Policy Gradient for the Actor.} Policy gradient \cite{sutton2018reinforcement} is a popular approach for solving the RL problem. The idea is to perform gradient ascent in the policy space. NPG can be viewed as a variant of the policy gradient, where the Fisher information matrix is used as a preconditioner \cite{kakade2001natural}. See \cite{kakade2001natural,agarwal2021theory,lan2023policy} for other equivalent formulations and interpretations of NPG.

To present the NPG algorithm, we consider using softmax policies with parameter $\theta\in\mathbb{R}^{|\mathcal{S}||\mathcal{A}|}$: 
\begin{align*}
    \pi_\theta(a\mid s)=\frac{\exp{(\theta_{s,a})}}{\sum_{\Bar{a}\in\mathcal{A}}\exp{(\theta_{s,\Bar{a}})}},\; \forall\, (s,a)\in\mathcal{S}\times \mathcal{A}.
\end{align*}
It was shown in \cite{agarwal2021theory} that when using softmax policies, NPG takes the following form in the parameter space (which is also called $Q$-NPG):
\begin{align}\label{OriginalNPG}
    \theta_{t+1}=\theta_t+\eta_t Q_{(t)},
\end{align}
where $\eta_t$ is the stepsize and we denote $Q_{(t)}=Q_{\pi_{\theta_t}}$. In addition, the previous update equation in the parameter space is equivalent to the following update equation in the policy space \cite{agarwal2021theory}:
\begin{align}\label{OriginalNPGpolicy}
    \pi_{(t+1)}(a\mid s)=\frac{\pi_{(t)}(a\mid s)\exp\{\eta_t Q_{(t)} (s,a)\}}{\sum_{a'\in \mathcal{A}}\pi_{(t)}(a' \mid s)\exp\{\eta_t Q_{(t)}(s, a')\}}, \; \forall\, (s,a)\in\mathcal{S}\times \mathcal{A},
\end{align}
where we denote $\pi_{(t)}=\pi_{\theta_t}$ for simplicity of presentation. 
However, carrying out the update rule in Eq. (\ref{OriginalNPG}) (or Eq. (\ref{OriginalNPGpolicy})) would require the agents to jointly estimate the global $Q$-function, which, in general, cannot be achieved with IL. 

To enable the use of IL, we propose that agent $i$ maintains its own parameter $\theta_t^i\in\mathbb{R}^{|\mathcal{S}^i||\mathcal{A}^i|}$ and updates it according to
\begin{align}\label{INPG}
    \theta_{t+1}^i=\theta_t^i+\eta_t \hat{q}_{(t)}^i,
\end{align}
where $\hat{q}_{(t)}^i\in \mathbb{R}^{|\mathcal{S}^i||\mathcal{A}^i|}$ is the local $Q$-function\footnote{Here, we use the notation $\hat{q}_{\pi}^i$ to distinguish with the $Q$-function $\hat{Q}_\pi^i$ given policy $\pi$ and agent $i$, which is defined in the global state-action space. } of agent $i$ associated with the policy $\pi_{(t)}$ under the \textit{separable MDP} model $\hat{\mathcal{M}}$.
Specifically, given a separable joint policy $\pi=(\pi^1,\pi^2,\cdots,\pi^n)$ with $\pi^i: \mathcal{S}^i\mapsto \Delta(\mathcal{A}^i)$ for any agent $i$, we define $\hat{q}_\pi^i(s^i,a^i)=\hat{\mathbb{E}}_{\pi}[\sum_{k=0}^\infty\gamma^k\mathcal{R}^i(S_k^i,A_k^i)\mid S_0^i=s^i,A_0^i=a^i]$
for all $(s^i,a^i)\in\mathcal{S}^i\times \mathcal{A}^i$ and $i\in [n]$,
where $\hat{\mathbb{E}}_\pi[\,\cdot\,]$ denotes the expectation with respect to the separable transition kernel $\hat{\mathcal{P}}$ and policy $\pi$. 
To make sense of Eq.~\eqref{INPG}, note that Eq. \eqref{INPG} is equivalent to the following update in the policy space:
\begin{align*}
    \pi^i_{(t+1)}(a^i\mid s^i)=\frac{\pi^i_{(t)}(a^i\mid s^i)\exp\{\eta_t \hat{q}_{(t)}^i (s^i,a^i)\}}{\sum_{\Bar{a}^i \in \mathcal{A}^i}\pi^i_{(t)}(\Bar{a}^i \mid s^i)\exp\{\eta_t \hat{q}_{(t)}^i(s^i, \Bar{a}^i)\}},\;\forall\,(s^i,a^i)\in\mathcal{S}^i\times \mathcal{A}^i.
\end{align*}
Suppose that the original MDP $\mathcal{M}$ itself is separable, in which case we have $\mathcal{M}=\hat{\mathcal{M}}$. Then, for any $s=(s^1,s^2,\cdots,s^n)\in\mathcal{S}$ and $a=(a^1,a^2,\cdots,a^n)\in\mathcal{A}$, it is clear that $Q_\pi(s,a)=\sum_{i\in [n]}\hat{q}_{\pi}^i(s^i,a^i)$. As a result, when each agent updates its policy parameter $\theta_t^i$ according to Eq. (\ref{INPG}), the joint policy obeys the following update rule:
\begin{align*}
    \pi_{(t+1)}(a\mid s)=\;&\prod_{i=1}^n\pi_{(t+1)}^i(a^i\mid s^i)\\
    =\;&\prod_{i=1}^n\left[\frac{\pi^i_{(t)}(a^i\mid s^i)\exp\{\eta_t \hat{q}_{(t)}^i (s^i,a^i)\}}{\sum_{\Bar{a}^i \in \mathcal{A}^i}\pi^i_{(t)}(\Bar{a}^i \mid s^i)\exp\{\eta_t^i \hat{q}_{(t)}^i(s^i, \Bar{a}^i)\}}\right]\\
    =\;&\frac{\pi_{(t)}(a\mid s)\exp\{\eta_t Q_{(t)} (s,a)\}}{\sum_{\Bar{a}\in \mathcal{A}}\pi_{(t)}(\Bar{a} \mid s)\exp\{\eta_t Q_{(t)}(s, \Bar{a})\}},\;\forall\,(s,a)\in\mathcal{S}\times \mathcal{A},
\end{align*}
which is exactly the desired $Q$-NPG update presented in Eq. (\ref{OriginalNPGpolicy}). In general, when the original $n$-agent MDP $\mathcal{M}$ is not separable, Eq. (\ref{INPG}) can be viewed as an approximation of the $Q$-NPG update in Eq. (\ref{OriginalNPGpolicy}). Explicitly characterizing such an approximation error is one of the major technical challenges in the analysis. As we shall see later, the approximation error will be captured by the dependence level $\mathcal{E}$. 

In view of Eq. \eqref{INPG}, each agent needs to estimate its local $Q$-function $\hat{q}_{(t)}^i$ to carry out the update. To achieve that, we use ITD, which is presented next.

\paragraph{Independent TD-Learning for the Critic.} Within each iteration $t\in \{0,1,\cdots,T-1\}$ of the outer loop, each agent performs policy evaluation independently according to Algorithm \ref{algorithm:INAC} Line $5$. Similarly to $Q$-learning, TD-learning can also be viewed as a stochastic approximation algorithm for solving the Bellman equation for policy evaluation. See \cite{sutton2018reinforcement,bertsekas1996neuro,sutton1988learning} for more details about TD-learning.

Finally, combining the INPG in Eq. \eqref{INPG} with the ITD for estimating $\hat{q}_{(t)}^i$ leads to the INAC presented in Algorithm \ref{algorithm:INAC}.

\subsection{Finite-Sample Analysis} \label{subsec:main_results}
To present our theoretical results, we first introduce our assumption regarding the MDP model. For any policy $\pi$, state-action pairs $(s,a), (\Bar{s},\Bar{a})\in \mathcal{S}\times \mathcal{A}$, and $k\geq 0$, let $\mathbb{P}^\pi_k(\Bar{s},\Bar{a},s,a)$ be the probability of visiting $(\Bar{s},\Bar{a})$ at time step $k$ starting at $(s,a)$ and following the policy $\pi$ thereafter, that is, $\mathbb{P}^\pi_k(\Bar{s},\Bar{a},s,a)=\mathbb{P}((S_k,A_k)=(s,a)\mid (S_0,A_0)=(\Bar{s},\Bar{a}))$, where $A_\ell \sim\pi(\cdot\mid S_\ell)$ for all $\ell\in \{1,2,\cdots,k\}$. 

\begin{assumption}\label{assum_markov_chain}
\textit{For any joint policy $\pi$, the induced Markov chain $\{(S_k,A_k)\}_{k\geq 0}$ (from the original MDP $\mathcal{M}$) is irreducible and aperiodic with a unique stationary distribution $d_\pi\in\Delta(\mathcal{S}\times \mathcal{A})$. In addition, we assume that $\sigma :=\inf_\pi \min_{s,a} d_\pi (s,a)>0$, and there exist $M_1\geq 0$ and $M_2\geq 1$ such that 
\begin{align*}
       \max_{\mathcal{N}\subseteq \mathcal{S}\times \mathcal{A}} \sup_{\pi}\max_{(\Bar{s},\Bar{a})\in\mathcal{S}\times \mathcal{A}}\left| \sum_{(s,a)\in\mathcal{N}} (d_\pi(s,a)- \mathbb{P}_k^\pi(\Bar{s},\Bar{a},s,a)) \right |\leq M_1 \exp\left(-\frac{k}{M_2}\right),\;\forall\,k\geq 0.
\end{align*}}
\end{assumption}
In RL, to successfully learn an optimal policy, it is well known that having a sufficient exploration component is necessary. Assumption \ref{assum_markov_chain} is imposed to ensure the exploration of RL agents, which states that for any joint policy $\pi$, the agents can sufficiently explore the state-action space. This type of assumption is commonly imposed in the existing literature studying RL algorithms, especially those with time-varying sampling policies. See for example \cite{lin2021multi,wu2020finite,zou2019finite,zeng2022finite,khodadadian2022finite}. 

To proceed and state our main results, we need to introduce more notation. For any $i\in [n]$ and $(\Bar{s}^i,\Bar{a}^i)\in \mathcal{S}^i\times \mathcal{A}^i$, let $d_\pi'(\Bar{s}^i,\Bar{a}^i)=\sum_{(s,a)\in \mathcal{S}\times\mathcal{A}, s^i=\Bar{s}^i,a^i=\Bar{a}^i} d_\pi(s,a)$. Denote $m=\max_{i\in [n]}|\mathcal{S}^i||\mathcal{A}^i|$ and $\sigma'=\inf_\pi\min_{i\in [n], (\Bar{s}^i,\Bar{a}^i)\in \mathcal{S}^i\times \mathcal{A}^i} d_\pi'(\Bar{s}^i,\Bar{a}^i)$, which is strictly positive under Assumption \ref{assum_markov_chain}. Next, we present the finite-sample analysis of IQL. 

\begin{theorem}\label{thm_IQL}
    Consider $\{Q_k\}_{k\geq 0}$ generated by Algorithm \ref{algorithm:IQL}. Suppose that Assumption~\ref{assum_markov_chain} is satisfied and $\alpha_k=\frac{\alpha}{k+k_0}$ with $k_0=\max (4\alpha, 2M_2\log K)$ and $\alpha\geq \frac{2}{\sigma'(1-\gamma)}$. Then, for any $\delta'\in (0,1)$, with probability at least $1-\delta'$, we have
    \begin{align} \label{eq:IQL_convergence}
            V_{\pi_*}^\mu-V_{\pi_K}^\mu  \leq  \underbrace{\frac{1}{1-\gamma}\left(\frac{2nC'_a}{\sqrt{K+k_0}}+\frac{2nC_b}{K+k_0}\right)}_{E_1:\;\text{Q-Learning Convergence Error}} +\underbrace{\frac{8n\gamma\mathcal{E}}{(1-\gamma)^3}}_{E_2:\; \text{Error due to }\mathcal{E}}, 
    \end{align}
where $\pi_*$ is an optimal policy, and $\pi_K=(\pi_k^1,\pi_k^2,\cdots,\pi_k^n)$ is the policy greedily induced by $(Q_k^1,Q_k^2,\cdots,Q_k^n)$, that is, $\pi_k^i(a^i\mid s^i)=1$ if and only if $a^i=\arg\max_{\Bar{a}^i} Q_k^i(s^i,\Bar{a}^i)$ for all $i\in [n]$, where we break the tie arbitrarily, 
 $C'_a=\frac{40\alpha}{(1-\gamma)^2}\sqrt{M_2 \log  K \left( \log  \left( \frac{4mnM_2 K}{\delta'} \right)+\log\log K \right)}$, and $C_b=  8\max\{ \frac{144M_2\alpha\log K+4M_1\sigma'(1+2M_2+4\alpha)}{(1-\gamma)^2\sigma'},\frac{2M_2\log K+k_0}{(1-\gamma)^2} \}$.
\end{theorem}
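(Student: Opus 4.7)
The plan is to follow the three-step roadmap in Figure \ref{fig:analysis_idea}: construct a virtual version of IQL on the separable MDP $\hat{\mathcal{M}}$, invoke single-agent Q-learning finite-sample results on $\hat{\mathcal{M}}$, and finally transfer the result back to the original MDP $\mathcal{M}$ by paying a price proportional to $\mathcal{E}$. The decomposition of the bound into $E_1$ and $E_2$ mirrors these three steps.

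\textbf{Analysis on the separable MDP.} On $\hat{\mathcal{M}}$, by construction, the global transition kernel factors across agents. If one couples IQL to a virtual trajectory $\{(\tilde{S}_k, \tilde{A}_k)\}$ drawn from $\hat{\mathcal{M}}$ under the same joint behavior policy $\prod_i \pi_b^i$, then each local projection $\{(\tilde{S}_k^i, \tilde{A}_k^i)\}$ is an irreducible, aperiodic Markov chain on $\mathcal{S}^i \times \mathcal{A}^i$, and the corresponding virtual iterates $\tilde{Q}_k^i$ evolve according to standard asynchronous single-agent Q-learning on the local MDP of agent $i$. Applying an off-the-shelf finite-sample bound for asynchronous Q-learning driven by Markovian samples under a $1/k$-type stepsize, combined with the exploration guarantees of Assumption \ref{assum_markov_chain} (with $\sigma'$ providing the minimum local visitation frequency and $M_1, M_2$ controlling mixing), yields with probability at least $1-\delta'$ that
\begin{align*}
\max_{i\in [n]} \|\tilde{Q}_K^i - \hat{Q}_*^i\|_\infty \leq \frac{C_a'}{\sqrt{K+k_0}} + \frac{C_b}{K+k_0},
\end{align*}
where $\hat{Q}_*^i$ is the optimal local Q-function on $\hat{\mathcal{M}}$. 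This step produces the $E_1$ term.

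\textbf{Bounding the model-difference error.} By Definition \ref{def:dependence_level}, the conditional distribution of $S_{k+1}$ given $(S_k, A_k)$ differs between $\mathcal{M}$ and $\hat{\mathcal{M}}$ by at most $\mathcal{E}$ in total variation, uniformly in $(s,a)$; marginalizing gives the analogous bound for each local component. Viewing the IQL recursion on $\mathcal{M}$ as a stochastic approximation driven by the global Markov chain $\{(S_k, A_k)\}$ (which is genuinely Markovian under $\pi_b$, even though its local projection is not), the expected drift under the global stationary distribution corresponds to a ``projected'' Bellman optimality operator whose local kernel lies within $\mathcal{E}$ of $\hat{\mathcal{P}}^i$ in TV. A $\gamma$-contraction argument in $\ell_\infty$ then shows that the fixed point of this projected operator lies within $\mathcal{O}(\mathcal{E}/(1-\gamma))$ of $\hat{Q}_*^i$, and that $Q_K^i$ concentrates around this fixed point at the same rate as $\tilde{Q}_K^i$ does around $\hat{Q}_*^i$. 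Summing over the $n$ agents (using $Q = \sum_i Q^i$) and converting Q-function error into value suboptimality via $V_{\pi_*}^\mu - V_{\pi_K}^\mu \leq \frac{2}{1-\gamma}\|Q_* - Q_K\|_\infty$, together with a simulation-lemma bound $\sup_\pi |V_\pi^\mu - \hat{V}_\pi^\mu| \leq n\mathcal{E}/(1-\gamma)^2$ comparing values across $\mathcal{M}$ and $\hat{\mathcal{M}}$, yields the additive error $E_2 = 8n\gamma\mathcal{E}/(1-\gamma)^3$.

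\textbf{Main obstacle.} The key difficulty lies in the last step: the local trajectory $\{(S_k^i, A_k^i)\}$ driving IQL is \emph{not} Markovian on $\mathcal{S}^i \times \mathcal{A}^i$, so finite-sample Q-learning results cannot be applied off-the-shelf to $Q_k^i$. The plan is therefore to lift the analysis to the global Markov chain on $\mathcal{S} \times \mathcal{A}$ (whose mixing is controlled by Assumption \ref{assum_markov_chain}) and carefully track the discrepancy between the effective one-step drift of $Q_k^i$ and the Bellman operator of the separable local MDP. The TV bound from Definition \ref{def:dependence_level} is precisely what closes this gap and produces the clean $\mathcal{E}$-dependence in $E_2$, while the $\gamma$-contraction of the Bellman operator supplies the $1/(1-\gamma)$ factors and the union bound over agents supplies the factor of $n$.
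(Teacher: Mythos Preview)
Your high-level roadmap matches the paper's, and your ``Main obstacle'' paragraph correctly identifies both the difficulty (the local trajectory $\{(S_k^i,A_k^i)\}$ is not Markovian) and the remedy (lift to the global chain $\{(S_k,A_k)\}$, which \emph{is} Markovian, and study the resulting drift). However, your first step --- coupling IQL to a \emph{virtual trajectory} $\{(\tilde S_k,\tilde A_k)\}$ on $\hat{\mathcal{M}}$ and applying off-the-shelf Q-learning bounds to the virtual iterates $\tilde Q_K^i$ --- has a genuine gap: it bounds $\|\tilde Q_K^i-\hat Q_*^i\|_\infty$, not $\|Q_K^i-\hat Q_*^i\|_\infty$, and you never explain how to transfer the bound. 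A naive coupling of the two trajectories degrades like $(1-\mathcal{E})^K$ over $K$ steps, so this route does not deliver an $\mathcal{E}$-independent $E_1$ term for the \emph{actual} iterates.

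The paper never constructs such a virtual trajectory. Instead it works directly on $Q_k^i$ by casting IQL as \emph{stochastic approximation with state aggregation} on the global chain: define $\Phi^i\in\mathbb{R}^{|\mathcal{S}||\mathcal{A}|\times|\mathcal{S}^i||\mathcal{A}^i|}$ so that $(\Phi^i Q^i)(s,a)=Q^i(s^i,a^i)$, and observe that the update of $Q_k^i$ is exactly an aggregated asynchronous SA driven by the Markovian $\{(S_k,A_k)\}$. Invoking the state-aggregation result of \cite{lin2021multi} gives, with probability $\ge 1-\delta'/n$, $\|Q_K^i-\tilde Q_*^i\|_\infty\le C_a'/\sqrt{K+k_0}+C_b/(K+k_0)$, where $\tilde Q_*^i$ is the unique fixed point of the \emph{projected} Bellman optimality operator $\Pi_1 F^i(\Phi^i\,\cdot)$ with $\Pi_1=((\Phi^i)^\top D_1\Phi^i)^{-1}(\Phi^i)^\top D_1$ and $D_1=\mathrm{diag}(d_{\pi_b})$. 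This is precisely the ``projected Bellman optimality operator'' you allude to in your second paragraph, but making it rigorous requires this state-aggregation machinery, not a coupling. The $\mathcal{E}$-part then enters through two clean fixed-point comparisons: $\|\Phi^i\tilde Q_*^i-\hat Q_{\hat\pi_*}^i\|_\infty\le 2\gamma\mathcal{E}/(1-\gamma)^2$ (using that $\Phi^i\Pi_1$ is nonexpansive in $\ell_\infty$ and $F^i$ is a $\gamma$-contraction) and $\|Q_{\pi_*}-\hat Q_{\hat\pi_*}\|_\infty\le 2n\gamma\mathcal{E}/(1-\gamma)^2$. Summing over agents and applying the performance difference lemma (rather than the cruder $2\|Q_*-Q_K\|_\infty/(1-\gamma)$ you cite) gives the stated $E_1+E_2$. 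In short: drop the virtual-trajectory step, and make the ``lift to the global chain'' idea precise via state aggregation --- that is exactly the paper's argument.
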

\begin{remark}
    Since IQL uses a fixed behavior policy $\pi_b$ to collect samples, Assumption \ref{assum_markov_chain} can be relaxed to the following weaker assumption: the Markov chain $\{(S_k,A_k)\}$ induced by $\pi_b$ is irreducible and aperiodic.
\end{remark}

The proof of Theorem~\ref{thm_IQL} is presented in Appendix~\ref{append:proof_of_thms}. Observe that the convergence bound presented in Eq. (\ref{eq:IQL_convergence}) consists of two terms. The term $E_1$ converges to zero at a rate of $\Tilde{\mathcal{O}}(1/\sqrt{K})$, which matches the convergence rate of $Q$-learning in the single-agent setting \cite{qu2020finite,li2023q,chen2021lyapunov}. The term 
$E_2$ is asymptotically non-vanishing. Note that $E_2$ is proportional to the dependence level $\mathcal{E}$, and captures the fundamental limit of IQL. In the special case where the original MDP $\mathcal{M}$ is separable, $E_2$ vanishes and we have the global convergence of IQL.

Based on Theorem \ref{thm_IQL}, we have the following sample complexity of IQL.

\begin{corollary} \label{corollary_IQL}
    Given $\epsilon >0$, for Algorithm~\ref{algorithm:IQL} to achieve $V^\mu_{\pi_*}-V^\mu_{\pi_k} \leq \epsilon+\frac{8n\gamma\mathcal{E}}{(1-\gamma)^3}$ with probability at least $1-\delta'$, the sample complexity\footnote{It was argued in \cite{khodadadian2021finite} that, given a finite-sample bound with asymptotically non-vanishing terms on the right-hand side (RHS), the interpretation of the finite-sample bound in terms of sample complexity can be ambiguous, as it is possible to trade-off the vanishing terms and the non-vanishing terms to obtain `better' sample complexity guarantees. In Corollary \ref{corollary_IQL}, we present the sample complexity in this way to allow a fair comparison with the existing literature, as a finite-sample bound with non-vanishing terms can frequently occur in RL when (1) function approximation is used, (2) off-policy sampling is used, and (3) IL is used as in our paper. } is $\Tilde{\mathcal{O}}\left(\epsilon^{-2}\right)$.
\end{corollary}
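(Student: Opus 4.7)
The proof is a direct corollary of Theorem \ref{thm_IQL}: since the dependence-level error $E_2 = \frac{8n\gamma\mathcal{E}}{(1-\gamma)^3}$ already appears on the right-hand side of the target inequality, it suffices to choose $K$ large enough that the $Q$-learning convergence error $E_1 = \frac{1}{1-\gamma}\bigl(\frac{2nC'_a}{\sqrt{K+k_0}} + \frac{2nC_b}{K+k_0}\bigr)$ is at most $\epsilon$. The plan is to split the requirement in half, demanding that each of the two summands of $E_1$ be at most $\epsilon/2$, and then solve for $K$ in each case.

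For the second summand, $C_b$ depends on $K$ only through a $\log K$ factor (and through $k_0 = \max(4\alpha, 2M_2 \log K)$), so the summand scales as $\log K / K$, up to constants depending on $n, \gamma, \alpha, M_1, M_2, \sigma'$. Setting this at most $\epsilon/2$ requires $K = \tilde{\mathcal{O}}(\epsilon^{-1})$. For the first summand, $C'_a$ grows like $\sqrt{\log K \cdot \log(4mnM_2 K/\delta')}$, so the summand scales as $\sqrt{\log K \cdot \log(K/\delta')}/\sqrt{K}$, and requiring it to be at most $\epsilon/2$ forces $K = \tilde{\mathcal{O}}(\epsilon^{-2})$. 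This second condition dominates, and gives the claimed sample complexity.

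The only minor technical subtlety is that $K$ appears inside the logarithms on the right-hand side of the inequality one is solving, so the bound is implicit rather than explicit. I would handle this with the standard two-step trick: first use the candidate choice $K = c\,\epsilon^{-2} \log^2(1/(\epsilon\delta'))$ (for a suitably large constant $c$ hiding $n, \gamma, M_1, M_2, \alpha, \sigma'$) to pin down the value of the logarithmic factors, then substitute back and verify that with this $K$ both summands of $E_1$ are indeed below $\epsilon/2$. Since each iteration of Algorithm \ref{algorithm:IQL} consumes one sample per agent, the total sample complexity equals $K$, yielding $\tilde{\mathcal{O}}(\epsilon^{-2})$ as stated. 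No real obstacle is expected; this step is bookkeeping on top of Theorem \ref{thm_IQL}.
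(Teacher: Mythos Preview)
Your proposal is correct and matches the paper's approach: the paper treats Corollary~\ref{corollary_IQL} as an immediate consequence of Theorem~\ref{thm_IQL} without giving a separate proof, and your argument---bounding $E_1\le\epsilon$ by controlling each summand and reading off $K=\tilde{\mathcal{O}}(\epsilon^{-2})$ from the $1/\sqrt{K}$ term---is exactly the intended bookkeeping.
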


As we see from Corollary \ref{corollary_IQL}, up to a model difference error that is proportional to the dependence level, Algorithm \ref{algorithm:IQL} achieves a sample complexity of $\Tilde{\mathcal{O}}(\epsilon^{-2})$ to find a optimal policy. This sample complexity is known to be optimal due to the existing lower bounds for solving RL problems \cite{gheshlaghi2013minimax}.

Next, we present the finite-sample analysis for INAC.

\begin{theorem}\label{thm_INAC}
    Consider $\{\pi_{(t)}\}_{t\geq 0}$ generated by Algorithm \ref{algorithm:INAC}. Suppose that (1) Assumption \ref{assum_markov_chain} is satisfied, (2) $\alpha_k=\alpha/(k+k_0)$ with $\alpha\geq\frac{2}{\sigma'(1-\gamma)}$ and $k_0=\max(4\alpha,2M_2\log K)$, and (3) $\eta_t$ satisfies $\eta_0=\gamma \log \vert \mathcal{A} \vert$ and $\eta_{t}\geq 2n\log \vert\mathcal{A} \vert \sum_{i=0}^{t-1}\eta_i /[(1-\gamma)\gamma^{2t-1}]$ for all $t\geq 0$. Then, for any $\delta\in (0,1)$, with probability at least $1-\delta$, we have
    \begin{align}\label{eq:INAC_convergence}
         \left\Vert Q_{\pi_*}-Q_{(T)} \right\Vert_\infty
         \leq 
         \underbrace{\frac{2}{(1-\gamma)^2} \left(\frac{nC_a}{\sqrt{K+k_0}}+\frac{nC_b}{K+k_0}\right)}_{G_1:\;\text{TD-Learning Convergence Error}} +\underbrace{\frac{4n\gamma^{T-1}}{(1-\gamma)^2} }_{G_2: \text{Actor Convergence Error}} + \underbrace{\frac{8n\gamma\mathcal{E}}{(1-\gamma)^4}}_{G_3: \text{Error due to }\mathcal{E}}, 
    \end{align}
    where $\pi_*$ is an optimal policy, $C_a$ is defined as $        C_a=\frac{40\alpha}{(1-\gamma)^2}\sqrt{M_2 \log  K \left( \log  \left( \frac{4mnTM_2 K}{\delta} \right)+\log\log K \right)},$
    and $C_b$ is defined in Theorem~\ref{thm_IQL}.
\end{theorem}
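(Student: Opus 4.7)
\medskip
\noindent\textbf{Proof plan.} The proof follows the three-step roadmap depicted in Figure~\ref{fig:analysis_idea}: construct a separable MDP $\hat{\mathcal{M}}$ approximating $\mathcal{M}$; analyze INAC as if it were run on $\hat{\mathcal{M}}$, separately treating the critic and the actor; then bound the model-mismatch error to transfer the analysis back to $\mathcal{M}$. Letting $\hat{\pi}_*$ denote an optimal policy under $\hat{\mathcal{M}}$ and applying the triangle inequality,
\[
\|Q_{\pi_*}-Q_{(T)}\|_\infty \leq \|Q_{\pi_*}-\hat{Q}_{\hat{\pi}_*}\|_\infty + \|\hat{Q}_{\hat{\pi}_*}-\hat{Q}_{\pi_{(T)}}\|_\infty + \|\hat{Q}_{\pi_{(T)}}-Q_{\pi_{(T)}}\|_\infty.
\]
The first and third boundary terms are the source of $G_3$, while the middle term, which is an analysis entirely inside $\hat{\mathcal{M}}$, splits into $G_1+G_2$. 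The crucial structural fact enabling the middle analysis is that whenever $\pi_{(t)}=\prod_i\pi^i_{(t)}$ is a product policy (which the INAC iterates always are), one has $\hat{Q}_{\pi_{(t)}}(s,a)=\sum_i\hat{q}^i_{(t)}(s^i,a^i)$, so INAC under $\hat{\mathcal{M}}$ degenerates into $n$ parallel, decoupled single-agent NACs.

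\medskip
\noindent\emph{Critic (bounding the contribution to $G_1$).} The goal is to bound $\max_{t,i}\|Q^i_{t,K}-\hat{q}^i_{(t)}\|_\infty$ by an expression of order $C_a/\sqrt{K+k_0}+C_b/(K+k_0)$. The conceptual difficulty is that the local trajectory $\{(S^i_k,A^i_k)\}_{k\ge0}$ driving ITD is \emph{not} a Markov chain on $\mathcal{S}^i\times\mathcal{A}^i$, so existing finite-sample TD analyses with Markovian noise do not apply out-of-the-box. The plan is to couple this local trajectory with a virtual trajectory generated under $\hat{\mathcal{M}}$ and policy $\pi^i_{(t)}$: the coupling gap at step $k$ is controlled in total variation by $\mathcal{E}$ (a bias that is absorbed into $G_3$), while the virtual trajectory is a geometrically mixing Markov chain by Assumption~\ref{assum_markov_chain} with visitation lower bound $\sigma'$. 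On the virtual trajectory, a generalized Moreau envelope / drift-Lyapunov argument (as in \cite{srikant2019finite,chen2022ADP}) combined with Freedman-type concentration and the stepsize $\alpha_k=\alpha/(k+k_0)$ delivers the stated high-probability bound. A union bound across $t\in\{0,\dots,T-1\}$ and $i\in[n]$ produces the extra $\log T$ factor appearing inside $C_a$ (compared to $C'_a$ in Theorem~\ref{thm_IQL}).

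\medskip
\noindent\emph{Actor (bounding the contribution to $G_2$).} Because of the separable decomposition, INPG reduces to running exact $Q$-NPG on each decoupled local MDP, but driven by the inexact critic $Q^i_{t,K}$ in place of $\hat{q}^i_{(t)}$. The plan is to invoke the linear-convergence theory of $Q$-NPG with geometrically increasing stepsizes (as in~\cite{agarwal2021theory,khodadadian2022finite,xiao2022convergence}): the prescribed choice $\eta_0=\gamma\log|\mathcal{A}|$ and $\eta_t\geq 2n\log|\mathcal{A}|\sum_{i<t}\eta_i/[(1-\gamma)\gamma^{2t-1}]$ is exactly what is needed so that the $\gamma^{T-1}$ contraction dominates the accumulated critic error. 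Applying this per agent, summing over $i\in[n]$, and using the performance-difference lemma to convert bias in the policy iterate into bias in $\hat{Q}_{\pi_{(T)}}$ yields $\|\hat{Q}_{\hat{\pi}_*}-\hat{Q}_{\pi_{(T)}}\|_\infty\le G_1+G_2$.

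\medskip
\noindent\emph{Model mismatch (bounding $G_3$) and main obstacle.} The boundary terms are controlled by a simulation-lemma-style bound: subtracting the Bellman equations $Q_\pi=\mathcal{R}+\gamma P_\pi Q_\pi$ and $\hat{Q}_\pi=\mathcal{R}+\gamma \hat{P}_\pi\hat{Q}_\pi$, using Definition~\ref{def:dependence_level} together with $\|\hat{Q}_\pi\|_\infty\leq n/(1-\gamma)$, and iterating gives $\|Q_\pi-\hat{Q}_\pi\|_\infty\leq n\gamma\mathcal{E}/(1-\gamma)^2$ for every joint $\pi$; a second round of optimality transfer (between $\hat{\pi}_*$ and $\pi_*$, and between $\pi_{(T)}$ evaluated under the two models) produces the extra factor $1/(1-\gamma)^2$, giving the $1/(1-\gamma)^4$ scaling in $G_3$. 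The hardest part of the argument is the critic step: quantifying the non-Markovian deviation of the local ITD trajectory from its separable-MDP proxy, uniformly over the time-varying policies $\pi^i_{(t)}$ across $T$ outer iterations, while coupling the resulting bias into both the TD mean-square drift and the martingale concentration. This is the novel ingredient highlighted in Figure~\ref{fig:analysis_idea} and what distinguishes the analysis from standard single-agent TD and NAC proofs.
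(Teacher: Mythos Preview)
Your overall decomposition $\|Q_{\pi_*}-Q_{(T)}\|_\infty\le\|Q_{\pi_*}-\hat{Q}_{\hat{\pi}_*}\|_\infty+\|\hat{Q}_{\hat{\pi}_*}-\hat{Q}_{(T)}\|_\infty+\|\hat{Q}_{(T)}-Q_{(T)}\|_\infty$ matches the paper's, and your treatment of the actor (reducing INPG to global $Q$-NPG on $\hat{\mathcal{M}}$ via the identity $\hat{Q}_{(t)}(s,a)=\sum_i\hat{q}^i_{(t)}(s^i,a^i)$ and invoking the geometric convergence of NPG with increasing stepsizes) and of the simulation-lemma bounds for $\|Q_\pi-\hat{Q}_\pi\|_\infty$ and $\|Q_{\pi_*}-\hat{Q}_{\hat{\pi}_*}\|_\infty$ is essentially what the paper does. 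The union bound over $t\in\{0,\dots,T-1\}$ explaining the $\log T$ in $C_a$ is also correct.

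Where you diverge from the paper is the critic analysis, and here your proposed route has a genuine gap. You plan to couple the non-Markovian local trajectory $\{(S_k^i,A_k^i)\}$ with a \emph{virtual} Markov chain on $\mathcal{S}^i\times\mathcal{A}^i$ generated under $\hat{\mathcal{M}}$ and $\pi^i_{(t)}$, asserting that this virtual chain is geometrically mixing ``by Assumption~\ref{assum_markov_chain} with visitation lower bound $\sigma'$.'' But Assumption~\ref{assum_markov_chain} and the constant $\sigma'$ concern the \emph{global} chain under the \emph{original} transition kernel $\mathcal{P}$, not local chains under $\hat{\mathcal{P}}$; nothing in the hypotheses guarantees that the local chain on $\hat{\mathcal{M}}$ mixes with constants $M_1,M_2$ or has minimum visitation $\sigma'$. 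You would need to transfer the ergodicity assumption across the model gap, which is extra work and would introduce additional $\mathcal{E}$-dependent constants. There is also a second issue you leave unaddressed: a per-step TV coupling error of order $\mathcal{E}$ will accumulate over the mixing time before it can be absorbed, so the bias injected into the TD drift and concentration terms is not obviously just $\mathcal{O}(\mathcal{E})$.

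The paper avoids both problems by a different device: it never leaves the global chain $\{(S_k,A_k)\}$, which \emph{is} Markovian under $\mathcal{M}$ and to which Assumption~\ref{assum_markov_chain} applies directly. ITD for agent $i$ is recast as stochastic approximation with \emph{state aggregation}: the full state is $(S_k,A_k)$, the aggregation map is $(s,a)\mapsto(s^i,a^i)$, and the update targets the projected operator $\Pi_2 F^i(\Phi^i\cdot)$, where $F^i$ is the Bellman operator under the \emph{original} kernel $P_a$. Existing finite-sample theory for aggregated SA (Theorem~3.1 of \cite{lin2021multi}) then gives $\|Q^i_{t,K}-\tilde{Q}^i\|_\infty\le C_a'/\sqrt{K+k_0}+C_b/(K+k_0)$ for the fixed point $\tilde{Q}^i$ of $\Pi_2 F^i(\Phi^i\cdot)$, with no $\mathcal{E}$ entering at this stage. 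Only afterwards is the $\mathcal{E}$-error introduced, once, by bounding $\|\Phi^i\tilde{Q}^i-\hat{Q}^i_{(t)}\|_\infty\le 2\gamma\mathcal{E}/(1-\gamma)^2$ via a direct comparison of the two Bellman operators (one using $P_a$, the other $\hat{P}_a$) at the fixed point. This is both simpler and what makes the stated constants come out.
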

The detailed proof of Theorem \ref{thm_INAC} is deferred to Appendix~\ref{append:proof_of_thms}. Similarly to Theorem \ref{thm_IQL}, the convergence bound in Theorem \ref{thm_INAC} is composed of terms that converge to $0$ asymptotically and a term that is proportional to the dependence level $\mathcal{E}$. Specifically, the terms $G_1$ and $G_2$ in Eq. \eqref{eq:INAC_convergence} represent the convergence error in ITD for the critic and INPG for the actor. The $\mathcal{O}(1/\sqrt{K})$ convergence rate of $G_1$ and the geometric convergence of $G_2$ agree with the existing results in the literature on TD-learning \cite{bhandari2018finite,srikant2019finite} and NPG \cite{chen2023approximate,lan2023policy}. The term $G_3$ is a constant independent of the number of iterations and captures the model difference error between the original MDP and the separable one, which is proportional to the dependence level $\mathcal{E}$. 

At first glance, it may seem unnatural that our stepsize sequence $\{\eta_t\}$ for INAC is increasing (see the recursive geometric form in Theorem \ref{thm_INAC}). To illustrate this, consider the single-agent setting. In view of the equivalent form of $Q$-NPG in Eq. (\ref{OriginalNPGpolicy}), it resembles the classical policy iteration (which has geometric convergence) when $\eta_t$ approaches infinity. This intuition was theoretically justified in \cite{xiao2022convergence,chen2022ADP,khodadadian2022linear}, as well as from the perspective of mirror descent \cite{lan2023policy}. Although using increasing stepsizes is theoretically justified, in practice, excessively large stepsizes may discourage exploration. To overcome this practical issue, one can use a more exploration-encouraging variant of INAC, such as choosing actions based on $\pi^i_{(t)}$ with probability $1-\epsilon$ and choosing actions uniformly at random with probability $\epsilon$ for some $\epsilon>0$.

Next, we derive the sample complexity of Algorithm \ref{algorithm:INAC} based on Theorem \ref{thm_INAC}. 

\begin{corollary}\label{corollary_INAC}
Given $\epsilon >0$, for Algorithm~\ref{algorithm:INAC} to achieve $  \left\Vert Q_{\pi_*}-Q_{(t)} \right\Vert_\infty \leq \epsilon+\frac{8n\gamma\mathcal{E}}{(1-\gamma)^4}$ with probability at least $1-\delta$, the sample complexity is $\Tilde{\mathcal{O}}\left(\epsilon^{-2}\right)$.
\end{corollary}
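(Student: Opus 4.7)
The plan is to instantiate Theorem~\ref{thm_INAC} (with $t=T$) and solve for the smallest $T$ and $K$ that drive each of the two vanishing error terms $G_1$ and $G_2$ below $\epsilon/2$, after which the asymptotic bias $G_3=\tfrac{8n\gamma\mathcal{E}}{(1-\gamma)^4}$ is precisely the non-vanishing offset appearing on the right-hand side of the corollary. The total sample budget per agent is $TK$, so the corollary will follow from multiplying the two resulting requirements.

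First I would handle the actor's geometric error $G_2=\tfrac{4n\gamma^{T-1}}{(1-\gamma)^2}$. Since $\gamma\in(0,1)$, imposing $G_2\leq\epsilon/2$ yields
\begin{align*}
T\;\geq\;1+\frac{\log\bigl(8n/((1-\gamma)^2\epsilon)\bigr)}{\log(1/\gamma)},
\end{align*}
so $T=\mathcal{O}(\log(1/\epsilon))$ suffices. One must verify that this choice is compatible with the recursive growth condition $\eta_{t}\geq 2n\log|\mathcal{A}|\sum_{i=0}^{t-1}\eta_i/[(1-\gamma)\gamma^{2t-1}]$, but that condition constrains only the ratio between consecutive $\eta_t$, so any finite $T$ is admissible.

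Next I would force the critic's error $G_1=\tfrac{2}{(1-\gamma)^2}\bigl(nC_a/\sqrt{K+k_0}+nC_b/(K+k_0)\bigr)$ below $\epsilon/2$. For $K$ moderately large the dominant summand is $nC_a/\sqrt{K+k_0}$, so the requirement reduces to
\begin{align*}
\sqrt{K+k_0}\;\geq\;\frac{4nC_a}{(1-\gamma)^2\,\epsilon}.
\end{align*}
The subtlety is that $C_a$ itself contains a factor $\sqrt{M_2\log K\,(\log(4mnTM_2K/\delta)+\log\log K)}$, which makes the inequality implicit in $K$. However, all occurrences of $K$ and $T$ inside $C_a$ are logarithmic, so a single-pass fixed-point argument (equivalently, inflating the explicit $\epsilon^{-2}$ solution by a $\mathrm{polylog}(\epsilon^{-1},\delta^{-1},n)$ factor) closes the loop and gives $K=\tilde{\mathcal{O}}(\epsilon^{-2})$, where the $\tilde{\mathcal{O}}$ absorbs the $\log K$, $\log T$, $\log(1/\delta)$, and $\log\log K$ factors; likewise $k_0=\max(4\alpha,2M_2\log K)$ contributes only a $\log K$ factor that is absorbed.

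The total number of samples drawn by each agent across the nested loops is $TK$, and combining the two bounds yields
\begin{align*}
TK\;=\;\mathcal{O}(\log(1/\epsilon))\cdot \tilde{\mathcal{O}}(\epsilon^{-2})\;=\;\tilde{\mathcal{O}}(\epsilon^{-2}),
\end{align*}
which is the claimed complexity. The only real obstacle is the implicit circularity in solving for $K$ due to the logarithmic dependence of $C_a$ on $K$ (and, through $T$, on $\epsilon$); this is standard and does not alter the $\epsilon^{-2}$ scaling because $\log(\epsilon^{-2}\mathrm{polylog}(\epsilon^{-1}))=\mathcal{O}(\log(\epsilon^{-1}))$. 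Everything else is bookkeeping: splitting the target tolerance $\epsilon$ equally between $G_1$ and $G_2$, inheriting $G_3$ as the non-vanishing offset from Theorem~\ref{thm_INAC}, and retaining the same failure probability $\delta$ throughout.
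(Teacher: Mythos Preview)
Your proposal is correct and follows exactly the approach the paper intends: the paper states Corollary~\ref{corollary_INAC} as an immediate consequence of Theorem~\ref{thm_INAC} without giving a separate proof, and your argument supplies the standard details (choose $T=\mathcal{O}(\log(1/\epsilon))$ to kill $G_2$, $K=\tilde{\mathcal{O}}(\epsilon^{-2})$ to kill $G_1$, absorb the logarithmic self-dependence of $C_a$ into the $\tilde{\mathcal{O}}$, and multiply). There is nothing to add.
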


Similarly to IQL, we have an $\Tilde{\mathcal{O}}(\epsilon^{-2})$ sample complexity for INAC to find a global optimal policy up to a model difference error. 

\paragraph{Comparison with Results for Networked MARL.}
In the existing literature, the work closest to ours are those analyzing networked MARL problems. The main idea in networked MARL is that, from each agent's perspective, the agents that are far away in graph distance should have a negligible impact on the agent. This is referred to as the `exponential decay property' in networked MARL \cite{qu2022scalable,lin2021multi}. Therefore, by restricting the sharing of information between agents within their $\kappa$-hop neighborhood, with a properly chosen $\kappa$, decentralized RL algorithms can achieve scalability without compromising too much on optimality. Compared to \cite{qu2022scalable,lin2021multi}, our algorithm does not require any information exchange among agents. Furthermore, to rigorously establish the exponential decay property, certain assumptions must be imposed on the underlying MDP model \cite{qu2022scalable,lin2021multi}. In this work, we do not impose structural assumptions on the underlying model except the one that guarantees exploration (cf. Assumption \ref{assum_markov_chain}), which is commonly made in the existing literature. While our results are more general and applicable to networked MARL, the usefulness depends on how small the dependence level is. The local structure in networked MDP (which limits the impact of far-away agents) may help reduce the dependence level. 

\subsection{Proof Sketch} \label{subsec:proof_idea}
Our proof follows the roadmap in Figure \ref{fig:analysis_idea}. Next, we present the proof sketch of Theorem \ref{thm_INAC}. The proof of Theorem \ref{thm_IQL} follows a similar approach. The proof consists of the following $3$ main steps. 

\paragraph{Step $1$: Convergence of ITD.} The main challenge in analyzing ITD (and also IQL) is that, as a stochastic approximation algorithm, the randomness in the algorithm comes from the stochastic process $\{(S_k^i,A_k^i)\}$, which does not necessarily form a Markov chain. Therefore, the existing results on the Markovian stochastic approximation \cite{srikant2019finite,chen2022automatica} do not apply directly here. To overcome this challenge, inspired by \cite{lin2021multi}, we model ITD (and also IQL) as a stochastic approximation algorithm with state aggregation and show that $Q_{t,K}^i$ approximates a solution to a variant of the projected Bellman equation, which is denoted as $\Tilde{Q}_t^i$.
In the end, we obtain the following convergence result with high probability: $\Vert Q_{t,K}^i - \Tilde{Q}_t^i \Vert_\infty \leq \Tilde{\mathcal{O}}(1/\sqrt{K})$.

\paragraph{Step $2$: Global Convergence on the Separable MDP.} Following our blueprint described in Figure \ref{fig:analysis_idea}, we analyze INAC as if it were implemented in the separable MDP $\hat{\mathcal{M}}$. Then, we take a further step for ITD to prove $\Vert \Tilde{Q}_t^i-\hat{q}_{(t)}^i\Vert_\infty\leq \mathcal{O}(\mathcal{E})$ using the definition of the dependence level. Altogether, we obtain the approximate convergence for ITD: $\Vert Q_{t,K} - \hat{Q}_{(t)} \Vert_\infty \leq \Tilde{\mathcal{O}}(1/\sqrt{K})+\mathcal{O}(\mathcal{E})$,
where $Q_{t,k}(s,a)=\sum_{i\in [n]}Q_{t,k}^i(s^i,a^i)$ for all $(s,a)\in\mathcal{S}\times\mathcal{A}$, and $\hat{Q}_{(t)}$ is the $Q$-function of the policy $\pi_{(t)}$ on $\hat{\mathcal{M}}$. Combining the results for ITD and INPG, we have the following result for INAC:
\begin{align}
    \Vert Q_{(T)}-\hat{Q}_{\hat{\pi}_*} \Vert_\infty \leq \Tilde{\mathcal{O}}(1/\sqrt{K})+\mathcal{O}(\gamma^T)+\mathcal{O}(\mathcal{E})\label{eq:sketch1}
\end{align}
with high probability,
where $\hat{\pi}_*$ is an optimal policy of $\hat{\mathcal{M}}$.

\paragraph{Step $3$: Bounding the Model Difference Error.} With the approximate convergence to the optimal $Q$-function of $\hat{\mathcal{M}}$, the last step is to bound the gap due to the model difference to get the approximate global convergence of the original MDP. In fact, we have $  \Vert Q_{\pi_*} -\hat{Q}_{\hat{\pi}_*} \Vert_\infty \leq \mathcal{O}(\mathcal{E})$, where we recall that $\mathcal{E}$ is the dependence level. Combining the above inequality with Eq. (\ref{eq:sketch1}) finishes the proof.

\section{Numerical Simulations}\label{sec:simulation}
Our last technical section presents numerical experiments for IL. First, we present the results of INAC and IQL applied to the synthetic MDP discussed in Section \ref{ex:artificial} to illustrate the effects of the dependence level. In Appendix \ref{subsec:ev_charging_results}, we apply IQL and INAC to an EV charging problem to demonstrate that our algorithms can be extended to the function approximation setting with approximate global convergence as well.

\begin{figure}[htbp]
    \centering
    \includegraphics[width=6in]{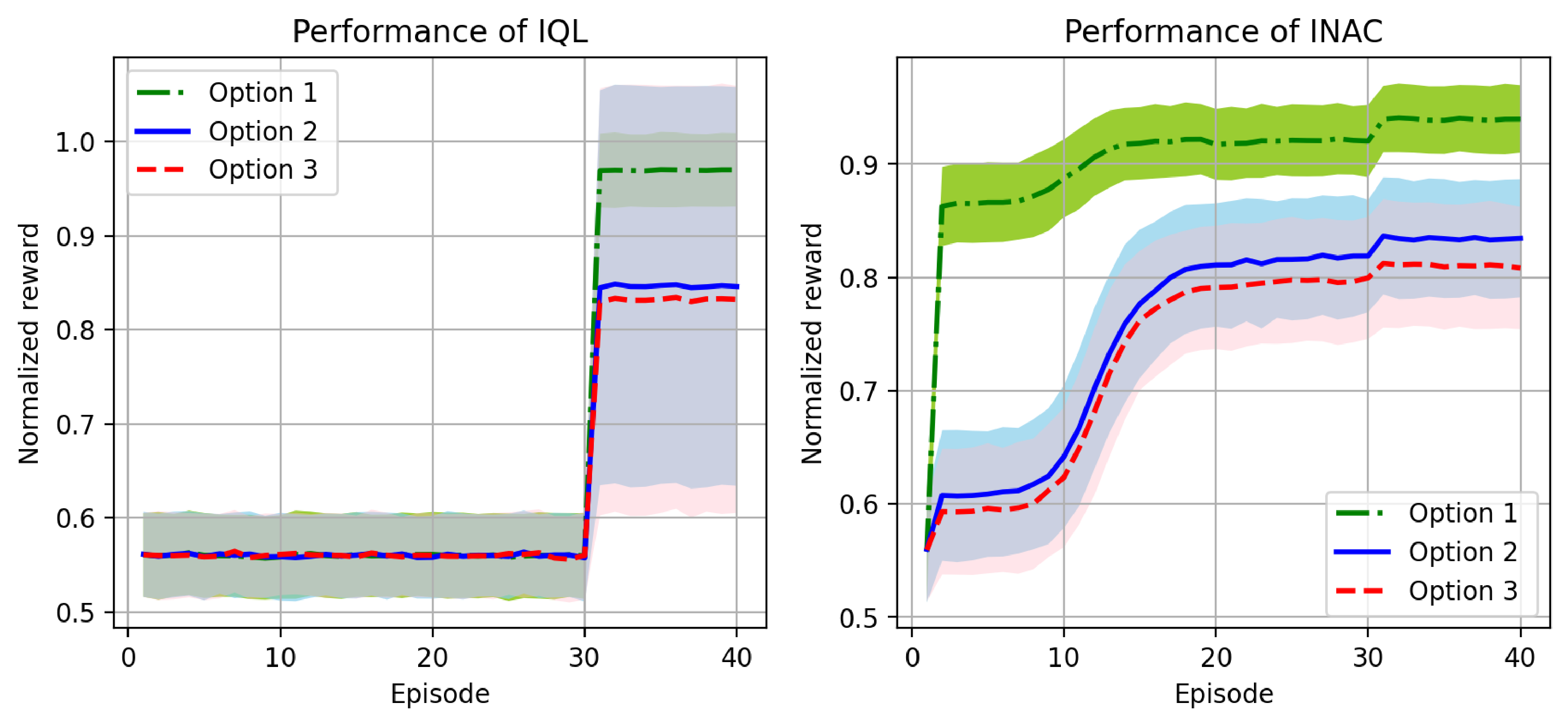}
    \caption{Performance of IQL and INAC}
    \label{fig:artificila_combined_result}
\end{figure}
We run IQL and INAC with the synthetic MDP $100$ times, respectively. The rewards are normalized by the optimal average reward and shown in Figure~\ref{fig:artificila_combined_result}, where the shaded areas denote the standard deviation. From Figure~\ref{fig:artificila_combined_result}, we see that both IQL and INAC with Option 1 can achieve better performance than Options 2 and 3, which justifies the convergence results in Theorems \ref{thm_IQL} and \ref{thm_INAC}, where the optimality gap of IL is controlled by the dependence levels. Comparing IQL with INAC, we find that IQL can achieve a smaller gap than INAC with any option, which is also consistent with our theoretical results where the asymptotically non-vanishing term (cf. $E_2$) in IQL is smaller than the corresponding term $G_3$ in INAC by a factor of $1/(1-\gamma)$.

\section{Conclusion}\label{sec:conclusion}
In this paper, we investigate the theoretical merit of IL for MARL in the cooperative setting and establish approximate global convergence for $2$ representative algorithms: IQL and INAC, both of which achieve an $\Tilde{\mathcal{O}}(\epsilon^{-2})$ sample complexity to find a global optimal policy up to an error term that is proportional to the dependence level. Methodologically, we propose a new method for analyzing IL by constructing a separable MDP, where each agent has an independent local state transition model. The model difference between the original MDP and the separable one is captured by the dependence level. Our numerical experiments justify the theoretical findings. 

There are many interesting directions for future work. First, it is worth investigating if our proof technique can be used to study other problems beyond IL, such as general non-Markovian stochastic iterative algorithms. Second, our analysis suggests that carefully adding coordination and information sharing may reduce the dependence level of the model while maintaining the scalability of the algorithm. However, theoretically characterizing the trade-off between scalability and optimality is still an open question. 

\bibliographystyle{apalike}
\bibliography{reference}

\newpage

\begin{center}
    {\LARGE\bfseries Appendices}
\end{center}

\appendix

\section{Proofs of Theorem \ref{thm_IQL} and Theorem \ref{thm_INAC}}
\label{append:proof_of_thms}
It is intractable to directly analyze IL using existing off-the-shelf RL theory since almost all of them require the samples to be Markovian or independent and identically distributed (i.i.d.), which is not the case for local state-actions that are essentially partial observations from the overall MDP model. Fortunately, note that there are some specific connections between IL and stochastic approximation with state aggregation, where an agent maintains an estimated vector with a smaller size than that of the global state space. Therefore, we will take a first step by showing that the update of IQL and ITD can be equivalently formulated as a stochastic approximation algorithm with state aggregation and derive that the corresponding algorithm converges to a fixed point. We further bound the error between the fixed point and the target $Q$-function on the separable MDP. Finally, we bound the optimality gap between the original and separable MDPs to finish the analysis.

\subsection{Connections with Stochastic Approximation with State Aggregation}
We follow the proof idea described in Section \ref{subsec:proof_idea} by first deriving the convergence of IQL and the inner loop of INAC, which is ITD. To make this paper self-contained, we present the results of stochastic approximation with state aggregation \cite{lin2021multi} in Appendix \ref{SA_with_state_aggregation}. Here, we show that IQL and ITD are special cases of such a stochastic approximation algorithm.

For stochastic approximation with state aggregation (see Appendix \ref{SA_with_state_aggregation}), the agent only maintains a vector of function values the entries of which are abstract states with a much smaller size than that of the original state space. A surjection is defined as a mapping from the original state space to the abstract state space to decide which entry should be updated at each step.

In IQL (cf. Algorithm \ref{algorithm:IQL}) and ITD (cf. Algorithm \ref{algorithm:INAC_inner}), each agent cannot estimate the $Q$-function for entries of global state-action pairs only with visibility of local states and actions. Agent $i$ only maintains a local $Q$-function $Q_k^i\in \mathbb{R}^{|\mathcal{S}^i| |\mathcal{A}^i|}$. Note that the notation of local $Q$-function is $Q_{t,k}^i$ for ITD in INAC, but the subscription $t$ is omitted here for simplicity (due to the nested-loop structure of the algorithm). Furthermore, agent $i$ uses $Q_k^i(\Bar{s}^i,\Bar{a}^i)$ to approximate the 
$Q$-function of global state-action $(s,a)$ when $(\Bar{s}^i,\Bar{a}^i)\in \mathcal{S}^i\times\mathcal{A}^i,\; (s,a)\in \mathcal{S}\times\mathcal{A}$ and $(s^i,a^i)=(\Bar{s}^i,\Bar{a}^i)$. Therefore, IQL and ITD can be both seen as a special form of stochastic approximation with state aggregation in a very natural way. Specifically, define the surjection $h_1: \mathcal{S}\mapsto \mathcal{S}^i$ and $h_2: \mathcal{A}\mapsto \mathcal{A}^i$ as
\[h_1(s)=s^i,\qquad h_2(a)=a^i, \]
for all $s\in\mathcal{S}$ and $a\in\mathcal{A}$. That means when the global state-action is $(s,a)$, agent $i$ uses the surjections $h_1(s)$ and $h_2(a)$ to decide which local entry should be updated. Formally, define the matrix $\Phi^i\in \mathbb{R}^{|\mathcal{S}||\mathcal{A}|\times|\mathcal{S}^i||\mathcal{A}^i|}$ as
\[
\Phi^i(s,a,\Bar{s}^i,\Bar{a}^i)=
\begin{cases}
    1, & \text{if} \ h_1(s)=\Bar{s}^i, \ h_2(a)=\Bar{a}^i, \\
    0, & \text{otherwise},
\end{cases}
\;\forall\, (s,a)\in\mathcal{S}\times \mathcal{A}, (\Bar{s}^i,\Bar{a}^i)\in \mathcal{S}^i\times\mathcal{A}^i.
\]
The update of IQL and ITD for any agent $i\in[n]$ can be written as a general stochastic approximation algorithm:
\begin{align}\label{eq:IQL_ITD_SA}
     Q_{k+1}^i(h_1(S_k),h_2(A_k))
    =\; & Q_k^i(h_1(S_k),h_2(A_k)) \nonumber\\
    & +\alpha_k \left( \left[F^i(\Phi^i Q_k^i)\right](S_k,A_k)-Q_k^i\left(h_1(S_k),h_2(A_k)\right)+w_k^i\right),
\end{align}
and $Q_{k+1}^i(s^i,a^i)=Q_k^i (s^i,a^i)$ for all $(s^i,a^i)\neq (h_1(S_k),h_2(A_k))$. Here in Eq. (\ref{eq:IQL_ITD_SA}), $F^i: \mathbb{R}^{|\mathcal{S}||\mathcal{A}|}\rightarrow \mathbb{R}^{|\mathcal{S}||\mathcal{A}|}$ is an operator and can be defined differently for IQL and ITD. Note that $\{ (S_k,A_k)\}_{k\geq 0}$ is sampled using different policies, i.e., $\pi_b$ for IQL and $\pi_{(t)}$ for ITD. For simplicity, here we do not distinguish them in notation with the same assumption (Assumption \ref{assum_markov_chain}) on them.

For IQL, the operator $F^i(\cdot)$ is the Bellman optimality operator, which is defined for any $Q\in\mathbb{R}^{|\mathcal{S}||\mathcal{A}|}$ as
\begin{align}\label{eq:IQL_operator}
    [F^i(Q)](s,a)=\mathcal{R}^i(s^i,a^i)+ \gamma\mathbb{E}_{\Bar{s}\sim P_a (s,\cdot)} \left[\max_{\Bar{a}\in\mathcal{A}}Q(\Bar{s},\Bar{a})\right], \; \forall\, (s,a)\in\mathcal{S}\times\mathcal{A}.
\end{align}
Consequently, the noise sequence $w_k^i$ for IQL is defined as
\begin{align}\label{eq:IQL_noise}
    w_k^i=\mathcal{R}^i(S_k^i,A_k^i)+\gamma \max_{\Bar{a}}Q_k^i(h_1(S_{k+1}),h_2(\Bar{a}))-[F^i(\Phi^i Q_k^i)](S_k,A_k).
\end{align}
For ITD, we define the operator $F^i(\cdot)$ for any $Q\in\mathbb{R}^{|\mathcal{S}||\mathcal{A}|}$ as
\begin{align}\label{eq:ITD_operator}
    [F^i(Q)](s,a)=\mathcal{R}^i(s^i,a^i)+ \gamma\mathbb{E}_{\Bar{s}\sim P_a (s,\cdot),\Bar{a}\sim \pi(\cdot \mid \Bar{s})} \left[Q(\Bar{s},\Bar{a})\right], \; \forall \,(s,a)\in\mathcal{S}\times\mathcal{A},
\end{align}
where $\pi=\pi_{(t)}$ is the policy within the $t$-th outer loop in Algorithm \ref{algorithm:INAC}. In ITD, the noise sequence $w_k^i$ is defined as
\begin{align}\label{eq:ITD_noise}
    w_k^i=\mathcal{R}^i(S_k^i,A_k^i)+\gamma \mathbb{E}_{\Bar{a}\sim \pi(\cdot\mid S_{k+1})} [Q_k^i(h_1(S_{k+1}),h_2(\Bar{a}))]-[F^i(\Phi^i Q_k^i)](S_k,A_k).
\end{align}
After reformulating IQL and ITD as stochastic approximation with state aggregation, we will present the convergence results of IQL and ITD using \cite[Theorem 3.1]{lin2021multi}. We first introduce the following lemma that provides the required conditions for our results. Note that we only consider the $\ell_\infty$ norm here. Define $\mathcal{F}_k$ as the $\sigma$-algebra generated by $(S_0, A_0, \cdots, S_k,A_k )$.

\begin{lemma}\label{lemma:property_for_SA}
Suppose that Assumption \ref{assum_markov_chain} is satisfied. Let $F^i(\cdot)$ be defined as in Eq. \eqref{eq:IQL_operator} (or Eq. \eqref{eq:ITD_operator}), and let $w_k^i$ be defined in Eq. \eqref{eq:IQL_noise} (or Eq. \ref{eq:ITD_noise}). Then, we have the following results.
    \begin{enumerate}[(1)]
        \item The operator $F^i(\cdot)$ is a $\gamma$-contraction mapping with respect to $\|\cdot\|_{\infty}$. In addition, we have $\Vert F^i(Q) \Vert_\infty \leq \gamma \Vert Q\Vert_\infty+1$ for all $Q \in \mathbb{R}^{|\mathcal{S}||\mathcal{A}|}$.
        \item The random process $\{w_k^i\}$ is measurable with respect to $\mathcal{F}_{k+1}$, and satisfies $\mathbb{E}[w_k^i \mid \mathcal{F}_k]=0$. In addition, we have $\vert w_k^i \vert\leq 2/(1-\gamma)$ almost surely for all $k\geq 0$.
    \end{enumerate}
\end{lemma}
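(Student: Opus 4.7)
The plan is to verify the two parts of the lemma separately, handling both the IQL and ITD cases within each part since they differ only in whether a max or a policy-averaged expectation appears inside $F^i$.

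For part (1), I would first establish the $\gamma$-contraction property. For the IQL operator in Eq.~\eqref{eq:IQL_operator}, given any $Q_1,Q_2\in\mathbb{R}^{|\mathcal{S}||\mathcal{A}|}$, write
\[
|[F^i(Q_1)](s,a)-[F^i(Q_2)](s,a)| = \gamma\left|\mathbb{E}_{\bar s\sim P_a(s,\cdot)}\left[\max_{\bar a}Q_1(\bar s,\bar a)-\max_{\bar a}Q_2(\bar s,\bar a)\right]\right|,
\]
apply the elementary inequality $|\max_x f(x)-\max_x g(x)|\leq \max_x|f(x)-g(x)|$, then take suprema over $(s,a)$ to conclude $\|F^i(Q_1)-F^i(Q_2)\|_\infty\leq \gamma\|Q_1-Q_2\|_\infty$. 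For the ITD operator in Eq.~\eqref{eq:ITD_operator}, the same argument applies with the max replaced by an expectation under $\pi(\cdot\mid \bar s)$, and the contraction follows directly by Jensen's inequality (or the non-expansiveness of averaging). The auxiliary bound $\|F^i(Q)\|_\infty\leq \gamma\|Q\|_\infty+1$ is immediate from $\mathcal{R}^i\in[0,1]$ together with the fact that both the max and the policy average are bounded by $\|Q\|_\infty$.

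For part (2), measurability of $w_k^i$ with respect to $\mathcal{F}_{k+1}$ is by inspection: in either definition (Eq.~\eqref{eq:IQL_noise} or Eq.~\eqref{eq:ITD_noise}), $w_k^i$ is a deterministic function of $(S_k,A_k,S_{k+1})$ and $Q_k^i$, the latter being $\mathcal{F}_k$-measurable. The conditional mean property is essentially built into the construction of $F^i$: conditioning on $\mathcal{F}_k$, the only randomness in $S_{k+1}$ is $S_{k+1}\sim P_{A_k}(S_k,\cdot)$ (and the fresh draw $\bar a\sim \pi(\cdot\mid S_{k+1})$ in the ITD case), so taking $\mathbb{E}[\,\cdot\mid \mathcal{F}_k]$ of the first two terms in $w_k^i$ reproduces exactly $[F^i(\Phi^i Q_k^i)](S_k,A_k)$, yielding $\mathbb{E}[w_k^i\mid \mathcal{F}_k]=0$.

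The only part that requires any real work is the almost-sure bound $|w_k^i|\leq 2/(1-\gamma)$. I would first prove by induction that $\|Q_k^i\|_\infty\leq 1/(1-\gamma)$ for all $k\geq 0$. The base case is immediate since $Q_0^i=0$. For the inductive step, note that under the stepsize choice $\alpha_k=\alpha/(k+k_0)$ with $k_0\geq 4\alpha$ one has $\alpha_k\in[0,1]$, so the update rewrites as the convex combination
\[
Q_{k+1}^i(S_k^i,A_k^i)=(1-\alpha_k)Q_k^i(S_k^i,A_k^i)+\alpha_k\bigl(\mathcal{R}^i(S_k^i,A_k^i)+\gamma\cdot\text{(avg/max of }Q_k^i)\bigr),
\]
and the second argument is bounded in magnitude by $1+\gamma/(1-\gamma)=1/(1-\gamma)$ by the induction hypothesis, while untouched entries of $Q_k^i$ retain their bound. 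Combining this with part (1) gives $|w_k^i|\leq 1+\gamma/(1-\gamma)+\gamma\cdot 1/(1-\gamma)+\ldots\leq 2/(1-\gamma)$ after collecting terms. There is no substantive obstacle in this lemma; it is a verification step, and the only care needed is to ensure the stepsize condition is invoked when closing the induction that underlies the almost-sure bound on $w_k^i$.
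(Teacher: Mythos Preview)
Your proposal is correct and follows essentially the same route as the paper: the contraction is a standard Bellman-operator argument (the paper simply cites MDP theory), the conditional-mean-zero property follows from the Markov property exactly as you describe, and the almost-sure bound on $w_k^i$ rests on $\|Q_k^i\|_\infty\leq 1/(1-\gamma)$, which the paper invokes by citation while you spell out the induction using $\alpha_k\in[0,1]$. The only minor slip is that in the ITD noise there is no ``fresh draw'' of $\bar a$---the policy expectation is already taken in Eq.~\eqref{eq:ITD_noise}---but this does not affect the argument.
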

    \begin{proof}[Proof of Lemma \ref{lemma:property_for_SA}]
    \begin{enumerate}[(1)]
        \item The contraction mapping follows from standard results in MDP theory \cite{puterman2014markov}. Now, for any $Q\in\mathbb{R}^{|\mathcal{S}||\mathcal{A}|}$, we have
\begin{align*}
    \Vert F^i(Q) \Vert_\infty 
    \leq \Vert F^i(Q) -F^i(0)\Vert_\infty+\Vert F^i(0) \Vert_\infty
    \leq \gamma \Vert Q \Vert_\infty +1,
\end{align*}
where the last inequality follows from the fact that $F^i(\cdot)$ is a contraction mapping and $\Vert F^i(0) \Vert_\infty\leq \max_{(s^i,a^i)\in\mathcal{S}^i\times\mathcal{A}^i} \mathcal{R}^i(s^i,a^i)\leq 1$. 
\item The fact that $w_k^i$ is measurable with respect to $\mathcal{F}_k$ follows from the definition of $w_k^i$. We next show that $w_k^i$ is conditionally mean zero. For IQL, we have
\begin{align*}
    \mathbb{E}[w^i_k\mid \mathcal{F}_k]&=\mathbb{E} \left[\mathcal{R}^i(S_k^i,A_k^i)+\gamma \max_{\Bar{a}} Q_k^i(h_1(S_{k+1}),h_2(\Bar{a})) \,\middle|\, \mathcal{F}_k\right]-[F^i(\Phi^i Q_k^i)](S_k,A_k)]\\
    &= \mathcal{R}^i(S_k^i,A_k^i)+ \gamma\mathbb{E}_{\Bar{s}\sim P_{A_k} (S_k,\cdot)} \left[\max_{\Bar{a}}Q_k^i(\Bar{s},\Bar{a})\right]-[F^i(\Phi^i Q_k^i)](S_k,A_k)\\
    &=0,
\end{align*}
where the second equality follows from the Markov property and the tower property of conditional expectations. The proof for $\mathbb{E}[w^i_k\mid \mathcal{F}_k]=0$ in the case of ITD follows from an identical approach.

Finally, using the boundedness of $Q$-functions in finite MDPs \cite{gosavi2006boundedness}, i.e., $\Vert Q_k^i\Vert_\infty \leq 1/(1-\gamma)$, we have
\begin{align*}
    \vert w^i_k \vert\leq 2\mathcal{R}^i(S_k^i,A_k^i) + 2\gamma \Vert Q_k^i \Vert _\infty\leq 2+2\gamma/(1-\gamma)=2/(1-\gamma).
\end{align*}
    \end{enumerate}
\end{proof}

The results in Lemma \ref{lemma:property_for_SA}, together with Assumption \ref{assum_markov_chain}, provide the conditions required for applying \cite[Theorem 3.1]{lin2021multi}. Next, we present the convergence results of IQL and ITD. We will show that the stochastic approximation in Eq. (\ref{eq:IQL_ITD_SA}) converges to the unique fixed point of $F^i(\cdot)$, denoted by $\Tilde{Q}_*^i$ for IQL and $\Tilde{Q}^i$ for ITD. Specifically,  
$\Tilde{Q}_*^i$ is the unique solution to the following equation:
\begin{align*}
    \Pi_1 F^i(\Phi^ix)=x,
\end{align*}
where $\Pi_1:=((\Phi^i)^\mathsf{T} D_1 \Phi^i)^{-1}(\Phi^i)^\mathsf{T}D_1$
with $D_1\in\mathbb{R} ^{|\mathcal{S}||\mathcal{A}|\times |\mathcal{S}||\mathcal{A}|} :=\text{diag}(d_{\pi_b})$ and $F^i(\cdot)$ is defined in Eq.~\eqref{eq:IQL_operator}. As for ITD, $\Tilde{Q}^i$ is the unique solution to the equation
\begin{align*}
    \Pi_2 F^i(\Phi^i x)=x,
\end{align*}
where $\Pi_2:=((\Phi^i)^\mathsf{T}D_2\Phi^i)^{-1}(\Phi^i)^\mathsf{T} D_2$ with
$D_2 \in\mathbb{R} ^{|\mathcal{S}||\mathcal{A}|\times |\mathcal{S}||\mathcal{A}|} :=\text{diag}(d_{\pi_{(t)}})$ and $F^i(\cdot)$ is defined in Eq.~\eqref{eq:ITD_operator}. Note that $\Tilde{Q}^i$ and $\Tilde{Q}_*^i$ are well defined because the operator $\Pi_1 F^i(\Phi^i \cdot)$ and $\Pi_2 F^i(\Phi^i \cdot)$ are also contraction mappings with respect to $\|\cdot\|_\infty$ \cite[Proposition C.1]{lin2021multi}.
The convergence results of IQL and ITD are stated below.

\begin{theorem}\label{thm:IQL_ITD_convergence}
    Suppose that Assumption \ref{assum_markov_chain} is satisfied, and $\alpha_k=\frac{\alpha}{k+k_0}$ with $k_0=\max (4\alpha, 2M_2\log K)$ and $\alpha\geq \frac{2}{\sigma'(1-\gamma)}$. Then, we have the following convergence bounds.
    \begin{enumerate}[(1)]
        \item For $\{ Q_k \}_{k\geq 0}$ generated by IQL (cf. Algorithm \ref{algorithm:IQL}), we have with probability at least $1-\delta'/n$ that
        \begin{align*}
            \left\| Q_K^i-\Tilde{Q}_*^i\right\|_\infty\leq \frac{C'_a}{\sqrt{K+k_0}}+\frac{C_b}{K+k_0}=\Tilde{\mathcal{O}}\left( \frac{1}{\sqrt{K}} \right),
        \end{align*}
        \item For $\{ Q_k \}_{k\geq 0}$ generated by ITD (cf. Algorithm \ref{algorithm:INAC_inner}), we have with probability at least $1-\delta'/n$ that
        \begin{align*}
        \left\Vert Q_K^i-\Tilde{Q}^i \right\Vert_\infty \leq \frac{C'_a}{\sqrt{K+k_0}}+\frac{C_b}{K+k_0}=\Tilde{\mathcal
        O}\left(\frac{1}{\sqrt{K}}\right),
        \end{align*}
    \end{enumerate}
    where $C_a'$ and $C_b$ were defined in Theorem \ref{thm_IQL}.
\end{theorem}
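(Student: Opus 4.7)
The plan is to view Theorem \ref{thm:IQL_ITD_convergence} as a direct consequence of the reformulation in Eq.~\eqref{eq:IQL_ITD_SA} together with an off-the-shelf finite-sample result for stochastic approximation with state aggregation driven by a mixing Markov chain (e.g., \cite[Theorem 3.1]{lin2021multi}). Before invoking that result, I would check by hand that the four ingredients it requires are all in place: (i) the operator $F^i(\cdot)$ is a sup-norm $\gamma$-contraction and bounded-input/bounded-output, (ii) the driving noise $\{w_k^i\}$ is a bounded martingale difference sequence with respect to the natural filtration, (iii) the Markov chain $\{(S_k,A_k)\}_{k\ge 0}$ mixes geometrically with a strictly positive asymptotic visitation of the aggregate coordinate $(S_k^i,A_k^i)$, and (iv) the stepsize schedule $\alpha_k=\alpha/(k+k_0)$ with $\alpha\geq 2/[\sigma'(1-\gamma)]$ and $k_0=\max(4\alpha,2M_2\log K)$ meets the contraction/oversampling condition of the theorem.

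Items (i) and (ii) are exactly the content of Lemma \ref{lemma:property_for_SA}, so these are already in hand for both the Bellman optimality operator used in IQL (Eq.~\eqref{eq:IQL_operator}) and the policy-evaluation operator used in ITD (Eq.~\eqref{eq:ITD_operator}). For (iii), I would note that the mixing condition of Assumption \ref{assum_markov_chain} on the joint chain immediately transfers to the aggregated chain $\{(S_k^i,A_k^i)\}_{k\ge 0}$ by summing the uniform total-variation bound over all joint states that project to a given local state-action pair, and that the stationary visitation $d_{\pi_b}'$ (resp.\ $d_{\pi_{(t)}}'$) of the aggregated chain is lower bounded by $\sigma'>0$. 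These two facts together control the conditioning of the aggregation matrix $\Phi^i$ under $D_1$ (resp.\ $D_2$) and give the required contraction factor $\gamma$ for the projected operators $\Pi_1 F^i(\Phi^i\,\cdot\,)$ and $\Pi_2 F^i(\Phi^i\,\cdot\,)$, whose unique fixed points are precisely $\Tilde{Q}_*^i$ and $\Tilde{Q}^i$.

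With these verifications, the main task is a translation of parameters: plug the constants $\gamma$, $M_1$, $M_2$, $\sigma'$, and $1/(1-\gamma)$ (the uniform bound on the noise from Lemma \ref{lemma:property_for_SA}) into the finite-sample bound of \cite[Theorem 3.1]{lin2021multi}, with failure probability set to $\delta'/n$ and dimension $m=\max_i|\mathcal{S}^i||\mathcal{A}^i|$. The resulting bound is naturally split into a martingale/stochastic fluctuation term of order $\widetilde{\mathcal{O}}(1/\sqrt{K})$ (this is where $C_a'$ appears, after absorbing the $\log(4mnM_2K/\delta')$ and $\log\log K$ factors from a Freedman-type concentration inequality) and a bias/transient term of order $\widetilde{\mathcal{O}}(1/K)$ coming from the initial error together with the $\log$-type oversampling of $k_0$ (this is $C_b$). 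The forms of $C_a'$ and $C_b$ in the theorem statement are exactly what this bookkeeping produces.

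The hardest step in practice is (iii): giving a clean reduction from the geometric mixing of the joint chain to a mixing statement on the aggregate chain that is strong enough to justify the state-aggregation contraction argument, and carrying through Freedman's inequality over the Markovian noise with the inverse-time stepsize to get the logarithmic factors inside $C_a'$. Everything else is bookkeeping: IQL and ITD differ only in the choice of operator $F^i$, but since Lemma \ref{lemma:property_for_SA} treats both uniformly, items (1) and (2) of Theorem \ref{thm:IQL_ITD_convergence} follow from a single invocation of the underlying theorem applied to the two choices of $F^i$.
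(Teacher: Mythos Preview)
Your proposal is correct and follows essentially the same approach as the paper: verify via Lemma~\ref{lemma:property_for_SA} and Assumption~\ref{assum_markov_chain} that the hypotheses of \cite[Theorem~3.1]{lin2021multi} (restated in Appendix~\ref{SA_with_state_aggregation}) hold for the state-aggregated SA scheme of Eq.~\eqref{eq:IQL_ITD_SA}, then read off the bound with failure probability $\delta'/n$. One minor simplification: your step~(iii) is easier than you suggest, since the cited theorem requires geometric mixing of the \emph{joint} chain $\{(S_k,A_k)\}$ (which is Markovian and directly covered by Assumption~\ref{assum_markov_chain}), not of the non-Markovian aggregated process $\{(S_k^i,A_k^i)\}$; the aggregation is handled internally by the $\Phi^i$/$\Pi$ machinery, so no separate reduction is needed.
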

From Theorem~\ref{thm:IQL_ITD_convergence}, we see that both IQL and ITD converge, which is not a straightforward result for IL. In the following sections, we will use the results in Theorem~\ref{thm:IQL_ITD_convergence} to continue the proof of Theorem~\ref{thm_IQL} and Theorem~\ref{thm_INAC}.

\subsection{Proof of Theorem \ref{thm_IQL}}
Let $Q_k(s,a)=\sum_{i\in[n]}Q_k^i(s^i,a^i)$ for all $(s,a)\in \mathcal{S}\times\mathcal{A}$. Denote the optimal $Q$-function of agent $i$ under the separable MDP model $\hat{\mathcal{M}}$ as $\hat{Q}_{\hat{\pi}_*}^i$.

\begin{theorem}\label{thm_IQL_approxi_converge} 
Consider $\{Q_k\}_{k\geq 0}$ generated by Algorithm \ref{algorithm:IQL}. Suppose that Assumption~\ref{assum_markov_chain} is satisfied, and $\alpha_k=\frac{\alpha}{k+k_0}$ with $k_0=\max (4\alpha, 2M_2\log K)$ and $\alpha\geq \frac{2}{\sigma'(1-\gamma)}$. Then, with probability at least $1-\delta'$, we have
        $$\left\Vert Q_{K}-\hat{Q}_{\hat{\pi}_*}\right\Vert_\infty\leq \frac{nC'_a}{\sqrt{K+k_0}}+\frac{nC_b}{K+k_0}+\frac{2n\gamma \mathcal{E}}{(1-\gamma)^2},$$
where the terms $C'_a, C_b$ were defined in Theorem~\ref{thm_IQL}.
\end{theorem}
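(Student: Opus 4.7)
My plan is to decompose the error through the intermediate fixed points $\tilde Q_*^i$ provided by Theorem~\ref{thm:IQL_ITD_convergence}, and then absorb the model mismatch via a fixed-point perturbation estimate in which the dependence level $\mathcal{E}$ enters directly. Since $Q_K(s,a)=\sum_i Q_K^i(s^i,a^i)$ and the separability of $\hat{\mathcal M}$ gives $\hat Q_{\hat\pi_*}(s,a)=\sum_i \hat q_{\hat\pi_*}^i(s^i,a^i)$, the triangle inequality yields
\[
\|Q_K-\hat Q_{\hat\pi_*}\|_\infty \;\leq\; \sum_{i=1}^n \|Q_K^i-\tilde Q_*^i\|_\infty + \sum_{i=1}^n \|\tilde Q_*^i-\hat q_{\hat\pi_*}^i\|_\infty.
\]
The first sum is handled by applying Theorem~\ref{thm:IQL_ITD_convergence}(1) agent by agent at failure level $\delta'/n$ and taking a union bound over $i\in[n]$, which delivers the $nC'_a/\sqrt{K+k_0}+nC_b/(K+k_0)$ portion of the target bound with overall probability at least $1-\delta'$.

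The core of the argument is the second sum. I would rewrite $\hat q_{\hat\pi_*}^i$ as a fixed point of a projected operator $\Pi_1 \hat F^i(\Phi^i \cdot)$ that has the \emph{same} structure as the one defining $\tilde Q_*^i$ but with the original kernel $\mathcal P$ replaced by the separable kernel $\hat{\mathcal P}$. By separability, the local Bellman optimality equation $\hat q_{\hat\pi_*}^i(s^i,a^i)=\mathcal R^i(s^i,a^i)+\gamma\,\mathbb{E}_{s'^i\sim\hat P_{a^i}(s^i,\cdot)}[\max_{a'^i}\hat q_{\hat\pi_*}^i(s'^i,a'^i)]$ holds, and because its right-hand side depends only on $(s^i,a^i)$ the projection $\Pi_1$ leaves it invariant. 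Both $\Pi_1 F^i(\Phi^i\cdot)$ and $\Pi_1\hat F^i(\Phi^i\cdot)$ are $\gamma$-contractions in $\|\cdot\|_\infty$, so the standard fixed-point perturbation lemma gives $\|\tilde Q_*^i-\hat q_{\hat\pi_*}^i\|_\infty \leq \frac{1}{1-\gamma}\,\|\Pi_1(F^i-\hat F^i)(\Phi^i \hat q_{\hat\pi_*}^i)\|_\infty$. Since $\Pi_1$ is a conditional average, it is a non-expansion in $\|\cdot\|_\infty$, and the remaining operator difference reduces pointwise to $\gamma\sum_{s'}(P_a(s,s')-\hat P_a(s,s'))\max_{a'^i}\hat q_{\hat\pi_*}^i(s'^i,a'^i)$. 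Combining TV-duality with Definition~\ref{def:dependence_level} and the uniform bound $\|\hat q_{\hat\pi_*}^i\|_\infty\leq 1/(1-\gamma)$ shows this is at most $\frac{2\gamma\mathcal E}{1-\gamma}$, hence $\|\tilde Q_*^i-\hat q_{\hat\pi_*}^i\|_\infty\leq \frac{2\gamma\mathcal E}{(1-\gamma)^2}$, and summing over agents yields the target $\frac{2n\gamma\mathcal E}{(1-\gamma)^2}$.

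The main obstacle I anticipate is the reconciliation step between $\tilde Q_*^i$, which lives on $\mathcal S^i\times\mathcal A^i$ through a projection $\Pi_1$ built from the original-MDP stationary distribution of the behavior policy, and $\hat q_{\hat\pi_*}^i$, which is defined purely on the local state-action space through the factorized kernel $\hat P_{a^i}$. The key observation that resolves this mismatch is that on $\hat{\mathcal M}$ the Bellman image of a locally-supported $Q$-function is itself locally supported, so applying $\Pi_1$ is trivial and the two fixed-point operators really do differ only in their transition kernel. Once this identification is made precise, the rest reduces to routine contraction estimates and the TV-duality bound above; the $\gamma/(1-\gamma)^2$ factor in the final error then arises transparently from one factor of $\gamma/(1-\gamma)$ in the operator difference and one factor of $1/(1-\gamma)$ from the contraction-perturbation lemma.
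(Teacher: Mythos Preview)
Your proposal is correct and follows essentially the same route as the paper: decompose agent by agent through the intermediate fixed points $\tilde Q_*^i$, apply Theorem~\ref{thm:IQL_ITD_convergence}(1) at level $\delta'/n$ with a union bound, and then bound $\|\tilde Q_*^i-\hat q_{\hat\pi_*}^i\|_\infty$ via a contraction/perturbation argument using the $\ell_\infty$-nonexpansiveness of $\Pi_1$ and the TV bound from Definition~\ref{def:dependence_level}. The only cosmetic differences are that the paper works with $\hat Q_{\hat\pi_*}^i=\Phi^i\hat q_{\hat\pi_*}^i$ on the global state--action space (proving $\|\Phi^i\tilde Q_*^i-\hat Q_{\hat\pi_*}^i\|_\infty\le \tfrac{2\gamma\mathcal E}{(1-\gamma)^2}$ as a separate lemma) and unrolls the contraction step directly rather than invoking a named fixed-point perturbation lemma, but the underlying inequalities are identical.
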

Instead of directly considering the approximate convergence to the optimal $Q$- function of $\pi_*$ on the original MDP $\mathcal{M}$, we first bound the difference between $Q_K$ and the optimal $Q$-function on the separable MDP $\hat{\mathcal{M}}$ in Theorem \ref{thm_IQL_approxi_converge}. Our approach follows the roadmap illustrated in Figure \ref{fig:analysis_idea} to overcome the challenge of studying the convergence of IQL in the original MDP.
\begin{proof}[Proof of Theorem \ref{thm_IQL_approxi_converge}]
We first apply Theorem \ref{thm:IQL_ITD_convergence} to obtain
\begin{align*}
    \left \Vert Q_K^i-\Tilde{Q}_*^i \right\Vert_\infty\leq \frac{C'_a}{\sqrt{K+k_0}}+\frac{C_b}{K+k_0}=\Tilde{O}\left( \frac{1}{\sqrt{K}} \right)
\end{align*}
with probability at least $1-\delta'/n$. It remains to bound the difference between $\Phi^i \Tilde{Q}_*^i$ and $\hat{Q}^i_{\hat{\pi}_*}$.
\begin{lemma}[Proof in Appendix \ref{proof_of_lemma_IQL_gap_fixedpoint_real}]\label{lemma:IQL_gap_fixedpoint_real}
    It holds that $\| \Phi^i \Tilde{Q}_*^i-\hat{Q}_{\hat{\pi}_*}^i \|_\infty \leq \frac{2\gamma\mathcal{E}}{(1-\gamma)^2}$.
\end{lemma}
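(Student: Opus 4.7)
The plan is to exploit the separability of $\hat{\mathcal{M}}$ to realize both $\hat{Q}^i_{\hat{\pi}_*}$ and $\Phi^i\Tilde{Q}^i_*$ as liftings, via $\Phi^i$, of fixed points of two local Bellman optimality operators on $\mathcal{S}^i\times\mathcal{A}^i$, and then to bound the gap between those two local fixed points by a standard perturbation argument for $\gamma$-contractions. Because $\Phi^i$ merely copies entries, $\|\Phi^i r\|_\infty=\|r\|_\infty$, so the claim reduces to a sup-norm gap on $\mathcal{S}^i\times\mathcal{A}^i$.

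For the separable side, since $\hat{\mathcal{M}}$ decouples into $n$ independent local MDPs, its optimal policy $\hat{\pi}_*$ is separable and agent $i$'s trajectory under $\hat{\pi}_*$ depends only on $(s^i,a^i)$. Hence $\hat{Q}^i_{\hat{\pi}_*}(s,a)=\hat{q}^i_{\hat{\pi}_*}(s^i,a^i)$, where $\hat{q}^i_{\hat{\pi}_*}$ is the unique fixed point of the local Bellman optimality operator $\hat{T}^i_*$ defined by $[\hat{T}^i_* q](s^i,a^i)=\mathcal{R}^i(s^i,a^i)+\gamma\sum_{\bar{s}^i}\hat{P}_{a^i}(s^i,\bar{s}^i)\max_{\bar{a}^i}q(\bar{s}^i,\bar{a}^i)$. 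For $\Tilde{Q}^i_*$, I would expand $\Pi_1 F^i(\Phi^i q)$ directly: since $\Phi^i q$ depends on $(\bar{s},\bar{a})$ only through $(\bar{s}^i,\bar{a}^i)$, the outer maximum collapses to $\max_{\bar{a}^i}$ and the transition sum collapses to the marginal $P^i_a(s,\bar{s}^i):=\sum_{\bar{s}^{-i}}P_a(s,\bar{s})$, while the projection $\Pi_1$ averages over $(s,a)$ conditional on its marginal $(s^i,a^i)$ under $d_{\pi_b}$, with $\mathcal{R}^i(s^i,a^i)$ passing through unchanged. This identifies $\Tilde{Q}^i_*$ as the unique fixed point of a second local Bellman optimality operator $\tilde{T}^i_*$ of the same form as $\hat{T}^i_*$ but with transition kernel $\tilde{P}^i(\bar{s}^i\mid s^i,a^i):=\mathbb{E}_{(s,a)\sim d_{\pi_b}\mid(s^i,a^i)}[P^i_a(s,\bar{s}^i)]$, which is a genuine stochastic kernel on $\mathcal{S}^i$.

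It remains to compare the two kernels and propagate the perturbation. Total variation is monotone under marginalization, so $\|P^i_a(s,\cdot)-\hat{P}_{a^i}(s^i,\cdot)\|_{\text{TV}}\le\|P_a(s,\cdot)-\hat{P}_a(s,\cdot)\|_{\text{TV}}\le\mathcal{E}$; since $\tilde{P}^i(\cdot\mid s^i,a^i)$ is a convex mixture of such marginals with $\hat{P}_{a^i}(s^i,\cdot)$ constant across the mixing weights, the triangle inequality gives $\max_{(s^i,a^i)}\|\tilde{P}^i(\cdot\mid s^i,a^i)-\hat{P}_{a^i}(s^i,\cdot)\|_{\text{TV}}\le\mathcal{E}$. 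Combined with $\|\hat{q}^i_{\hat{\pi}_*}\|_\infty\le 1/(1-\gamma)$ (as $\mathcal{R}^i\in[0,1]$), this yields $\|(\tilde{T}^i_*-\hat{T}^i_*)\hat{q}^i_{\hat{\pi}_*}\|_\infty\le 2\gamma\mathcal{E}/(1-\gamma)$; the standard fixed-point perturbation bound $\|x_1-x_2\|_\infty\le\|(T_1-T_2)x_2\|_\infty/(1-\gamma)$ for $\gamma$-contractions $T_1,T_2$ with fixed points $x_1,x_2$ then delivers $\|\Tilde{Q}^i_*-\hat{q}^i_{\hat{\pi}_*}\|_\infty\le 2\gamma\mathcal{E}/(1-\gamma)^2$, which is the stated bound after lifting by $\Phi^i$. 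The main obstacle is the clean reduction of $\Pi_1 F^i(\Phi^i\cdot)$ to a local Bellman optimality operator, since both the outer maximum and the weighted projection must commute appropriately with the reward/transition decomposition; once that identification is in hand, the TV chain via the data-processing inequality and the contraction perturbation are essentially routine.
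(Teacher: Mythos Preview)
Your proof is correct and reaches the stated bound, but it takes a different route from the paper's. You reduce $\Pi_1 F^i(\Phi^i\cdot)$ to an explicit local Bellman optimality operator $\tilde{T}^i_*$ on $\mathcal{S}^i\times\mathcal{A}^i$ with kernel $\tilde{P}^i(\cdot\mid s^i,a^i)=\mathbb{E}_{(s,a)\sim d_{\pi_b}(\cdot\mid s^i,a^i)}[P^i_a(s,\cdot)]$, and then compare the fixed points of $\tilde{T}^i_*$ and $\hat{T}^i_*$ via the data-processing inequality for total variation. The paper instead stays in the global space $\mathbb{R}^{|\mathcal{S}||\mathcal{A}|}$ and sidesteps that identification entirely: it proves that $\Phi^i\Pi_1$ is nonexpansive in $\|\cdot\|_\infty$ (Lemma~\ref{lemma:projection_nonexpansive}), notes that $\hat{Q}^i_{\hat{\pi}_*}$ is constant on $(s^i,a^i)$-fibers and hence fixed by $\Phi^i\Pi_1$, and then runs the same contraction/triangle argument directly with $F^i$, bounding the Bellman residual $\|F^i(\hat{Q}^i_{\hat{\pi}_*})-\hat{Q}^i_{\hat{\pi}_*}\|_\infty$ using the global distance $\|P_a(s,\cdot)-\hat{P}_a(s,\cdot)\|_{\mathrm{TV}}\le\mathcal{E}$ with no marginalization step. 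Your route yields an explicit structural interpretation of $\Tilde{Q}^i_*$ as the optimal $Q$-function of a concrete local MDP, which is genuinely informative; the paper's route bypasses precisely the step you flagged as the ``main obstacle'' and produces a reusable $\ell_\infty$-nonexpansiveness lemma for the state-aggregation projection that is invoked again verbatim in the ITD analysis (Lemma~\ref{lemma:ITD_gap_fixedpoint_real}).
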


Combining the above results, we have with probability at least $1-\delta'$ that
\begin{subequations}\label{eq:bound_Q_K_opt_Q_pi}
    \begin{align}
    \left \Vert Q_K-\hat{Q}_{\hat{\pi}_*} \right \Vert_\infty
    & = \max_{s,a} \left\vert \sum_{i\in[n]} Q^i_K(s^i,a^i)-\sum_{i\in[n]}\hat{Q}_{\hat{\pi}_*}^i(s,a) \right\vert \nonumber \\
    & \leq \max_{s,a} \sum_{i\in[n]} \left\vert Q^i_K(s^i,a^i)-\hat{Q}_{\hat{\pi}_*}^i(s,a) \right\vert \nonumber \\
    & \leq \max_{s,a} \sum_{i\in[n]} \left( \left\vert Q^i_K(s^i,a^i)-\Tilde{Q}_*^i(s^i,a^i) \right\vert+\left\vert (\Phi^i \Tilde{Q}_*^i)(s,a)-\hat{Q}_{\hat{\pi}_*}^i(s,a) \right\vert \right) \label{optqkq_1}\\
    & \leq \sum_{i\in[n]}\left( \left\Vert Q^i_K-\Tilde{Q}_*^i \right\Vert_\infty+\left\Vert (\Phi^i \Tilde{Q}_*^i)-\hat{Q}_{\hat{\pi}_*}^i \right\Vert_\infty \right)\nonumber\\
    & \leq \sum_{i\in[n]} \left( \frac{C'_a}{\sqrt{K+k_0}}+\frac{C_b}{K+k_0}+\frac{2\gamma \mathcal{E}}{(1-\gamma)^2}\right)\label{optqkq_2} \\
    &= \frac{nC'_a}{\sqrt{K+k_0}}+\frac{nC_b}{K+k_0}+\frac{2n\gamma \mathcal{E}}{(1-\gamma)^2}\nonumber,
\end{align}
\end{subequations}
where Eq. (\ref{optqkq_1}) follows from the fact that $(\Phi^i\Tilde{Q}_*^i)(s,a)=\Tilde{Q}_*^i(s^i,a^i)$ and Eq. (\ref{optqkq_2}) follows from Lemma \ref{lemma:IQL_gap_fixedpoint_real} and Theorem \ref{thm:IQL_ITD_convergence}. The probability $1-\delta'$ comes from the union bound.
\end{proof}

Finally, to characterize the difference between the optimal $Q$-functions of the original MDP and the separable one, we need the following lemma. 
\begin{lemma}[Proof in Appendix \ref{proof_of_bound_optimality_gap}]\label{lemma:bound_optimality_gap}
Let $Q_{\pi_*}$ and $\hat{Q}_{\hat{\pi}_*}$ be the optimal Q-functions under the original MDP model $\mathcal{M}$ and the separable one $\hat{\mathcal{M}}$. Then, we have 
    \begin{align*}
        \left \Vert Q_{\pi_*}- \hat{Q}_{\hat{\pi}_*} \right \Vert_\infty \leq \frac{2n\gamma\mathcal{E}}{(1-\gamma)^2}.
    \end{align*}
\end{lemma}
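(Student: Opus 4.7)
The plan is to prove the stronger statement that $\|Q_\pi - \hat{Q}_\pi\|_\infty \leq \frac{2n\gamma \mathcal{E}}{(1-\gamma)^2}$ for \emph{every} joint policy $\pi$, and then pass to the optimal policies on each side via a short sandwich argument. This is essentially the simulation lemma specialized to the separable/original MDP pair, with extra bookkeeping for the $n$-agent reward scale.

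For the per-policy bound, I would compare the Bellman equations for $Q_\pi$ (under $\mathcal{M}$) and $\hat{Q}_\pi$ (under $\hat{\mathcal{M}}$). Since the reward functions in $\mathcal{M}$ and $\hat{\mathcal{M}}$ are identical, subtracting gives, for every $(s,a)$,
\begin{align*}
    Q_\pi(s,a) - \hat{Q}_\pi(s,a)
    =\;& \gamma\,\mathbb{E}_{s'\sim P_a(s,\cdot)}\!\left[V_\pi(s') - \hat{V}_\pi(s')\right] \\
    &+ \gamma\left(\mathbb{E}_{s'\sim P_a(s,\cdot)}[\hat{V}_\pi(s')] - \mathbb{E}_{s'\sim \hat{P}_a(s,\cdot)}[\hat{V}_\pi(s')]\right),
\end{align*}
where $V_\pi(s') = \mathbb{E}_{a'\sim\pi(\cdot\mid s')}[Q_\pi(s',a')]$ and similarly for $\hat{V}_\pi$. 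The first term is bounded in absolute value by $\gamma\|Q_\pi - \hat{Q}_\pi\|_\infty$. For the second, the standard total-variation inequality $|\mathbb{E}_P[f]-\mathbb{E}_Q[f]| \leq 2\|P-Q\|_{\mathrm{TV}}\|f\|_\infty$, combined with $\|P_a(s,\cdot) - \hat{P}_a(s,\cdot)\|_{\mathrm{TV}} \leq \mathcal{E}$ (by Definition~\ref{def:dependence_level}) and the crude bound $\|\hat{V}_\pi\|_\infty \leq n/(1-\gamma)$ (since the global per-step reward $\mathcal{R} = \sum_i \mathcal{R}^i$ lies in $[0,n]$), yields a contribution at most $2n\gamma\mathcal{E}/(1-\gamma)$. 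Taking $\sup$ over $(s,a)$ and rearranging gives $\|Q_\pi - \hat{Q}_\pi\|_\infty \leq \frac{2n\gamma\mathcal{E}}{(1-\gamma)^2}$ as claimed.

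To finish, I apply this bound twice, to $\pi_*$ and to $\hat{\pi}_*$. By optimality of $\pi_*$ on $\mathcal{M}$, $Q_{\pi_*} \geq Q_{\hat{\pi}_*}$ pointwise, so
\[
    Q_{\pi_*} - \hat{Q}_{\hat{\pi}_*} \leq Q_{\pi_*} - \hat{Q}_{\pi_*} + (\hat{Q}_{\pi_*} - \hat{Q}_{\hat{\pi}_*}) \leq \|Q_{\pi_*} - \hat{Q}_{\pi_*}\|_\infty \leq \tfrac{2n\gamma\mathcal{E}}{(1-\gamma)^2}.
\]
Symmetrically, optimality of $\hat{\pi}_*$ on $\hat{\mathcal{M}}$ gives $\hat{Q}_{\hat{\pi}_*} \geq \hat{Q}_{\pi_*}$, so $\hat{Q}_{\hat{\pi}_*} - Q_{\pi_*} \leq \hat{Q}_{\hat{\pi}_*} - Q_{\hat{\pi}_*} \leq \|Q_{\hat{\pi}_*} - \hat{Q}_{\hat{\pi}_*}\|_\infty$, again bounded by $\frac{2n\gamma\mathcal{E}}{(1-\gamma)^2}$ via the per-policy lemma. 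Combining the two directions gives the stated infinity-norm bound.

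The main obstacle is really just the simulation step; everything else is an optimality sandwich. The only quantitative subtlety is tracking the factor $n$: it appears because the cooperative reward is a sum of $n$ agent rewards rather than a single bounded reward, so the natural value-function range inflates by $n$, which propagates into both the final constant and the comparison with the IQL error term $E_2$ in Theorem~\ref{thm_IQL}.
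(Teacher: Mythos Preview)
Your argument is correct and reaches the same bound, but it follows a genuinely different route from the paper. The paper works directly with the Bellman \emph{optimality} operators: since $Q_{\pi_*}=\mathcal{T}_*Q_{\pi_*}$ and $\hat{Q}_{\hat{\pi}_*}=\hat{\mathcal{T}}_*\hat{Q}_{\hat{\pi}_*}$, and $\mathcal{T}_*$ is a $\gamma$-contraction in $\|\cdot\|_\infty$, one gets
\[
\|Q_{\pi_*}-\hat{Q}_{\hat{\pi}_*}\|_\infty \leq \frac{1}{1-\gamma}\|\mathcal{T}_*\hat{Q}_{\hat{\pi}_*}-\hat{\mathcal{T}}_*\hat{Q}_{\hat{\pi}_*}\|_\infty,
\]
and the right-hand side is bounded by $\frac{2n\gamma\mathcal{E}}{1-\gamma}$ via the TV inequality applied to $s'\mapsto \max_{a'}\hat{Q}_{\hat{\pi}_*}(s',a')$. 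No per-policy simulation lemma and no sandwich are needed. Your route instead proves the per-policy bound $\|Q_\pi-\hat{Q}_\pi\|_\infty\le \frac{2n\gamma\mathcal{E}}{(1-\gamma)^2}$ first (this is exactly the paper's separate Lemma~\ref{bound_model_difference}) and then sandwiches the two optimal $Q$-functions. The paper's approach is a bit more direct for this lemma; yours has the advantage that it manufactures Lemma~\ref{bound_model_difference} as a free byproduct, so you could merge the two lemmas into one.

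One small slip in your sandwich: the optimality facts you cite are swapped. In the display bounding $Q_{\pi_*}-\hat{Q}_{\hat{\pi}_*}$ from above, the inequality you actually use is $\hat{Q}_{\pi_*}\le \hat{Q}_{\hat{\pi}_*}$, which is optimality of $\hat{\pi}_*$ on $\hat{\mathcal{M}}$, not optimality of $\pi_*$ on $\mathcal{M}$. Symmetrically, in the reverse direction you need $Q_{\hat{\pi}_*}\le Q_{\pi_*}$, i.e.\ optimality of $\pi_*$ on $\mathcal{M}$. The mathematics is fine once the attributions are exchanged.
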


Using Lemma \ref{lemma:bound_optimality_gap}, we have with probability at least $1-\delta'$ that
\begin{align*}
    \Vert Q_K-Q_{\pi_*}\Vert_\infty \leq \Vert Q_K-\hat{Q}_{\hat{\pi}_*}\Vert_\infty+\Vert Q_{\pi_*}-\hat{Q}_{\hat{\pi}_*}\Vert _\infty  \leq \frac{nC'_a}{\sqrt{K+k_0}}+\frac{nC_b}{K+k_0}+\frac{4n\gamma \mathcal{E}}{(1-\gamma)^2}.
\end{align*}
To proceed and obtain the optimality gap in terms of policies, recall that we defined $\pi_K$ as the policy greedily induced by $Q_K$. Denote $a_{K,s}=\max_a Q_K(s,a)$ and $a_{*,s}=\max_{a} Q_{\pi_*}(s,a)$. When $a_{K,s}\neq a_{*,s}$, using the previous inequality, we have
\begin{align*}
    \vert Q_{\pi_*}(s,a_{K,s})-Q_{\pi_*}(s,\pi_{*,s}) \vert \leq \frac{2nC'_a}{\sqrt{K+k_0}}+\frac{2nC_b}{K+k_0} +\frac{8n\gamma\mathcal{E}}{(1-\gamma)^2}.
\end{align*}
Define the advantage function as $A_{\pi}(s,a)=Q_{\pi}(s,a)-V_{\pi}(s)$ for any policy $\pi$ and $(s,a)\in \mathcal{S}\times\mathcal{A}$. Then, we have
\begin{align*}
    \vert A_{\pi_*}(s,a_{K,s})\vert &=\vert Q_{\pi_*}(s,a_{K,s})-V_{\pi_*}(s)\vert\\
    & = \vert Q_{\pi_*}(s,a_{K,s})-Q_{\pi_*}(s,a_{*,s})\vert \\
    & \leq \frac{2nC'_a}{\sqrt{K+k_0}}+\frac{2nC_b}{K+k_0} +\frac{8n\gamma\mathcal{E}}{(1-\gamma)^2}.
\end{align*}
Using the performance difference lemma \cite{kakade2002approximately}, with probability at least $1-\delta'$, we have for any initial state distribution $\mu$ that
\begin{align*}
    V_{\pi_*}^\mu-V_{\pi_K}^\mu  = \frac{1}{1-\gamma} \mathbb{E}_{s\sim o_{\pi_K}^\mu}  A_{\pi_*}(s,a_{K,s})\leq \frac{1}{1-\gamma}\left(\frac{2nC'_a}{\sqrt{K+k_0}}+\frac{2nC_b}{K+k_0}\right) +\frac{8n\gamma\mathcal{E}}{(1-\gamma)^3},
\end{align*}
where $o_{\pi_K}^\mu$ is the occupancy measure defined as  $o_{\pi_K}^\mu (s)=(1-\gamma)\mathbb{E}_{\pi_K}[\sum_{k=0}^\infty\gamma^t \mathbb{P}(S_k=s \mid S_0\sim\mu)]$ for all $s\in \mathcal{S}$. The proof is complete.

\subsection{Proof of Theorem \ref{thm_INAC}}

To prove Theorem \ref{thm_INAC}, we first by analyzing the convergence error of the critic in Appendix \ref{subsec:proof_critic}. Then we analyze the convergence rate of the actor in Appendix \ref{subsec:proof_actor}. Finally, we combine the analysis of the actor and the critic to finish the proof.

\subsubsection{Analysis of the Critic}\label{subsec:proof_critic}
For simplicity of presentation, we first write down only the inner loop of Algorithm \ref{algorithm:INAC} in the following, where we may omit the subscript $t$. The results we derive for the inner loop can be easily combined with the analysis of the outer loop using the Markov property.

\begin{algorithm}[htbp]
	\caption{Inner Loop of Algorithm \ref{algorithm:INAC}}
	\label{algorithm:INAC_inner} 
	\begin{algorithmic}[1]
		\STATE \textbf{Input:} Integer $K$, policy $\pi^i:=\pi_{\theta^i}^i$ from the outer loop, and initialization $Q^i_0=0$.
		\FOR{$k=0,1,2,\cdots,K-1$}
		\STATE Implement $A_k^i\sim \pi^i(\cdot\mid S_k^i)$ (simultaneously with all other agents), and observes $S^i_{k+1}$.
        \STATE Update $Q$-function:
		\STATE          $Q_{k+1}^i(S_k^i,A_k^i)=(1-\alpha_k)Q_k^i(S_k^i,A_k^i)+\alpha_k(\mathcal{R}^i(S_k^i,A_k^i)+\gamma \mathbb{E}_{\Bar{a}^i\sim\pi^i(\cdot\mid S_{k+1}^i)}Q_k^i(S_{k+1}^i,\Bar{a}^i))$
		\ENDFOR
	\end{algorithmic}
\end{algorithm} 
Next, we provide the approximate convergence of the critic. 
\begin{theorem}\label{thm_TD_learning} 
Consider $\{Q_k\}_{k \geq 0}$ generated by Algorithm \ref{algorithm:INAC_inner} with input policy $\pi=(\pi^1,\pi^2,\cdots,\pi^n)$. Suppose that Assumption \ref{assum_markov_chain} is satisfied, and $\alpha_k=\frac{\alpha}{k+k_0}$ with $k_0=\max (4\alpha, 2M_2\log K)$ and $\alpha\geq \frac{2}{\sigma'(1-\gamma)}$. Then, with probability at least $1-\delta'$, we have
        $$\left\Vert Q_{K}-\hat{Q}_{\pi}\right\Vert_\infty\leq \frac{nC'_a}{\sqrt{K+k_0}}+\frac{nC_b}{K+k_0}+\frac{2n\gamma \mathcal{E}}{(1-\gamma)^2},$$
where the constants $C'_a$ and $C_b$ were defined in Theorem \ref{thm_IQL}.
\end{theorem}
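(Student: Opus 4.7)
The plan is to mirror the proof of Theorem \ref{thm_IQL_approxi_converge}, replacing the Bellman optimality operator by the Bellman evaluation operator under the fixed policy $\pi$. Since Lemma \ref{lemma:property_for_SA} already covers both the optimality and evaluation cases (it verifies the $\gamma$-contraction of $F^i(\cdot)$ defined in Eq.~\eqref{eq:ITD_operator} and the conditional-mean-zero and boundedness properties of the noise $w_k^i$ defined in Eq.~\eqref{eq:ITD_noise}), the stochastic-approximation machinery underlying Theorem~\ref{thm:IQL_ITD_convergence}(2) applies verbatim to the inner loop of Algorithm~\ref{algorithm:INAC_inner}. Therefore, for each agent $i \in [n]$, with probability at least $1-\delta'/n$ we have
\[
\bigl\|Q_K^i - \tilde{Q}^i\bigr\|_\infty \;\leq\; \frac{C'_a}{\sqrt{K+k_0}} + \frac{C_b}{K+k_0},
\]
where $\tilde{Q}^i$ is the unique solution of $\Pi_2 F^i(\Phi^i x) = x$ corresponding to the ITD operator.

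The next step is to bound the gap between $\Phi^i \tilde{Q}^i$ and $\hat{Q}_\pi^i$, which plays exactly the role that Lemma~\ref{lemma:IQL_gap_fixedpoint_real} played for IQL. I would prove the analogous statement $\|\Phi^i \tilde{Q}^i - \hat{Q}_\pi^i\|_\infty \leq \frac{2\gamma \mathcal{E}}{(1-\gamma)^2}$. The approach is the following: $\hat{Q}_\pi^i$ is the fixed point of the Bellman evaluation operator on the separable MDP $\hat{\mathcal{M}}$, while $\Phi^i \tilde{Q}^i$ is (via the projection identity) a fixed point of a related aggregated operator on the original MDP $\mathcal{M}$. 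Both maps are $\gamma$-contractions in $\|\cdot\|_\infty$, so a standard contraction perturbation argument reduces the bound to controlling the operator mismatch. That mismatch is exactly a discounted expectation of a Q-function against the difference $P_a(s,\cdot) - \hat{P}_a(s,\cdot)$, which by Definition~\ref{def:dependence_level} is at most $2\mathcal{E}$ in total variation. Combined with the sup-norm bound $\|Q\|_\infty \leq 1/(1-\gamma)$ on Q-functions and the $\gamma$ factor from the Bellman expectation, this yields the claimed $\tfrac{2\gamma \mathcal{E}}{(1-\gamma)^2}$ bound, exactly as in Lemma~\ref{lemma:IQL_gap_fixedpoint_real}.

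Finally, I would combine the two pieces. Using the decomposition $Q_K(s,a) = \sum_{i\in[n]} Q_K^i(s^i,a^i)$ and $\hat{Q}_\pi(s,a) = \sum_{i\in[n]} \hat{Q}_\pi^i(s,a)$, the triangle inequality gives, for every $(s,a)$,
\[
\bigl|Q_K(s,a) - \hat{Q}_\pi(s,a)\bigr| \;\leq\; \sum_{i\in[n]} \Bigl( \bigl|Q_K^i(s^i,a^i) - \tilde{Q}^i(s^i,a^i)\bigr| + \bigl|(\Phi^i\tilde{Q}^i)(s,a) - \hat{Q}_\pi^i(s,a)\bigr| \Bigr),
\]
after which we take the supremum over $(s,a)$, plug in the two per-agent bounds, and apply a union bound over $i \in [n]$ to aggregate the $1-\delta'/n$ high-probability events into a single event of probability at least $1-\delta'$. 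This yields the stated bound $\tfrac{nC'_a}{\sqrt{K+k_0}} + \tfrac{nC_b}{K+k_0} + \tfrac{2n\gamma \mathcal{E}}{(1-\gamma)^2}$.

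The only genuinely new ingredient relative to Theorem~\ref{thm_IQL_approxi_converge} is the evaluation analogue of Lemma~\ref{lemma:IQL_gap_fixedpoint_real}, and I expect this to be the main technical step: one must carefully verify that the two fixed points $\Phi^i \tilde{Q}^i$ and $\hat{Q}_\pi^i$ can be compared using a single $\gamma$-contraction perturbation, controlling the projection bias introduced by $\Pi_2$ (which uses the stationary distribution of $\pi$ on the original MDP) against the true transition kernel $\hat{\mathcal{P}}$ of the separable MDP. Once this lemma is in hand, the remainder of the argument is essentially bookkeeping that parallels Eq.~\eqref{eq:bound_Q_K_opt_Q_pi}.
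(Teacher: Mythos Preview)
Your proposal is correct and follows essentially the same approach as the paper: apply Theorem~\ref{thm:IQL_ITD_convergence}(2) for the per-agent stochastic-approximation bound, invoke the evaluation analogue of Lemma~\ref{lemma:IQL_gap_fixedpoint_real} (which the paper states as Lemma~\ref{lemma:ITD_gap_fixedpoint_real} and proves via exactly the contraction-perturbation and total-variation argument you outline), and finish with the triangle inequality plus union bound as in Eq.~\eqref{eq:bound_Q_K_Q_pi}. The ``main technical step'' you flag is indeed the only new ingredient, and your sketch of its proof matches the paper's argument.
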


\begin{proof}[Proof of Theorem \ref{thm_TD_learning}]\label{proof_of_thm_TD_learning} 
We have verified that ITD is a special case of the stochastic approximation algorithm with state aggregation. Therefore, apply Theorem \ref{thm:IQL_ITD_convergence} and we get
\begin{align*}
    \left\Vert Q_K^i-\Tilde{Q}^i \right\Vert_\infty \leq \frac{C'_a}{\sqrt{K+k_0}}+\frac{C_b}{K+k_0}=\Tilde{\mathcal
    O}\left(\frac{1}{\sqrt{K}}\right)
\end{align*}
Our next step is to bound the gap between $\Phi^i\Tilde{Q}^i$ and $\hat
Q_{\pi}^i$. 
\begin{lemma}[Proof in Appendix \ref{proof_of_lemma_ITD_gap_fixedpoint_real}]\label{lemma:ITD_gap_fixedpoint_real}
    It holds that $\| \Phi^i \Tilde{Q}^i-\hat{Q}_{\pi}^i \|_\infty \leq \frac{2\gamma\mathcal{E}}{(1-\gamma)^2}$.
\end{lemma}
It follows from Lemma \ref{lemma:ITD_gap_fixedpoint_real} and Theorem \ref{thm_TD_learning} that with probability at least $1-\delta'$, we have
\begin{subequations}\label{eq:bound_Q_K_Q_pi}
    \begin{align}
    \Vert Q_K-\hat{Q}_{\pi}\Vert_\infty
    & = \max_{s,a} \left\vert \sum_{i\in[n]} Q^i_K(s^i,a^i)-\sum_{i\in[n]}\hat{Q}_{\pi}^i(s,a) \right\vert  \nonumber\\
    & \leq \max_{s,a} \sum_{i\in[n]} \left\vert Q^i_K(s^i,a^i)-\hat{Q}_{\pi}^i(s,a) \right\vert  \nonumber\\
    & \leq \max_{s,a} \sum_{i\in[n]} \left( \left\vert Q^i_K(s^i,a^i)-\Tilde{Q}^i(s^i,a^i) \right\vert+\left\vert (\Phi^i \Tilde{Q}^i)(s,a)-\hat{Q}_{\pi}^i(s,a) \right\vert \right)  \label{qkqpi_1}\\
    & \leq \sum_{i\in[n]}\left( \left\Vert Q^i_K(s^i,a^i)-\Tilde{Q}^i(s^i,a^i) \right\Vert_\infty+\left\Vert (\Phi^i \Tilde{Q}^i)(s,a)-\hat{Q}_{\pi}^i(s,a) \right\Vert_\infty \right)\nonumber\\
    & \leq \sum_{i\in[n]} \left( \frac{C'_a}{\sqrt{K+k_0}}+\frac{C_b}{K+k_0}+\frac{2\gamma \mathcal{E}}{(1-\gamma)^2}\right)\label{qkqpi_2} \\
    &= \frac{nC'_a}{\sqrt{K+k_0}}+\frac{nC_b}{K+k_0}+\frac{2n\gamma \mathcal{E}}{(1-\gamma)^2}\nonumber,
\end{align}
\end{subequations}
where Eq. (\ref{qkqpi_1}) follows from the fact that $(\Phi^i\Tilde{Q}^i)(s,a)=\Tilde{Q}^i(s^i,a^i)$ and Eq. (\ref{qkqpi_2}) follows from Lemma \ref{lemma:ITD_gap_fixedpoint_real} and Theorem \ref{thm_TD_learning}. The probability $1-\delta'$ comes from the union bound.
\end{proof}

\subsubsection{Analysis of the Actor}\label{subsec:proof_actor}
For ease of presentation, we first write down the outer loop of Algorithm \ref{algorithm:INAC} in Algorithm \ref{algorithm:INAC_outer}. 

\begin{algorithm}[htbp]
	\caption{Outer Loop of Algorithm \ref{algorithm:INAC}}
	\label{algorithm:INAC_outer} 
	\begin{algorithmic}[1]
		\STATE \textbf{Input:} Integer $T$ and initialization $\theta_0^i=0$.
		\FOR{$t=0, 1, 2\cdots,T-1$}
        \STATE $\theta_{t+1}^i=\theta_t^i+\eta_t Q_{t,K}^i$
		\ENDFOR
	\end{algorithmic}
\end{algorithm} 

The following theorem characterizes the convergence rates of Algorithm \ref{algorithm:INAC_outer}.
\begin{theorem}\label{thm_NPG}
Consider $\{\pi_{(t)}=(\pi_{(t)}^1, \pi_{(t)}^2, \cdots, \pi_{(t)}^n)\}$ generated by Algorithm \ref{algorithm:INAC_outer}. Suppose that $\eta_t$ satisfies the condition specified in Theorem \ref{thm_INAC}. Then we have
   \begin{align*}
         \left\Vert Q_{\pi_*}-Q_{(t)} \right\Vert_\infty \leq  \frac{4n\gamma^{t-1}}{(1-\gamma)^2} + \frac{2}{1-\gamma}\sum_{j=0}^{t-1} \gamma^{t-j-1}  \Vert Q_{j,K}-\hat{Q}_{(j)} \Vert_\infty
        +\frac{4n\gamma\mathcal{E}}{(1-\gamma)^2}.
    \end{align*}
\end{theorem}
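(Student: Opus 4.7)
The plan is to view Algorithm \ref{algorithm:INAC_outer} as a single inexact $Q$-NPG update on the joint action space and analyze it using the separable MDP $\hat{\mathcal{M}}$ as an intermediate reference. The first step is an algebraic reduction: because each agent performs a softmax update with parameter increment $\eta_t Q_{t,K}^i$ and the joint policy $\pi_{(t)}(a\mid s)=\prod_i\pi_{(t)}^i(a^i\mid s^i)$ factorizes, a direct computation (paralleling the derivation in Section \ref{susbec:INAC_algorithm}) shows that
\begin{align*}
    \pi_{(t+1)}(a\mid s) = \frac{\pi_{(t)}(a\mid s)\exp\{\eta_t Q_{t,K}(s,a)\}}{\sum_{\bar a\in\mathcal{A}}\pi_{(t)}(\bar a\mid s)\exp\{\eta_t Q_{t,K}(s,\bar a)\}},
\end{align*}
where $Q_{t,K}(s,a)=\sum_i Q_{t,K}^i(s^i,a^i)$. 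Thus INAC is exactly a $Q$-NPG iteration on the joint policy space, in which $Q_{t,K}$ plays the role of an approximation of the $Q$-function $\hat{Q}_{(t)}$ of $\pi_{(t)}$ on the separable MDP $\hat{\mathcal{M}}$, with per-step error $\|Q_{t,K}-\hat{Q}_{(t)}\|_\infty$.

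Next, I would invoke a state-wise inexact $Q$-NPG convergence result on $\hat{\mathcal{M}}$ in the spirit of Khodadadian et al., Xiao, and Chen--Khodadadian--Maguluri. Using the Kakade--Langford performance-difference lemma together with the KL-divergence potential $\sum_{a}\hat{\pi}_*(a\mid s)\log[\hat{\pi}_*(a\mid s)/\pi_{(t)}(a\mid s)]$, one derives a state-wise recursion of the form $\|\hat{Q}_{\hat{\pi}_*}-\hat{Q}_{(t)}\|_\infty\leq \gamma\|\hat{Q}_{\hat{\pi}_*}-\hat{Q}_{(t-1)}\|_\infty + (\text{log-ratio drift})_t + \frac{2}{1-\gamma}\|Q_{t-1,K}-\hat{Q}_{(t-1)}\|_\infty$. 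Under the increasing stepsize rule $\eta_t\geq 2n\log|\mathcal{A}|\sum_{i<t}\eta_i/[(1-\gamma)\gamma^{2t-1}]$ the log-ratio drift contracts geometrically (the $n$ enters because $\log|\mathcal{A}|=\sum_i\log|\mathcal{A}^i|$ and $\pi_{(0)}$ is uniform since $\theta_0^i=0$), and unrolling the recursion yields
\begin{align*}
    \|\hat{Q}_{\hat{\pi}_*}-\hat{Q}_{(t)}\|_\infty \leq \frac{4n\gamma^{t-1}}{(1-\gamma)^2} + \frac{2}{1-\gamma}\sum_{j=0}^{t-1}\gamma^{t-j-1}\|Q_{j,K}-\hat{Q}_{(j)}\|_\infty.
\end{align*}

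Finally, I would transfer the bound back to the original MDP via the triangle inequality
\begin{align*}
    \|Q_{\pi_*}-Q_{(t)}\|_\infty \leq \|Q_{\pi_*}-\hat{Q}_{\hat{\pi}_*}\|_\infty + \|\hat{Q}_{\hat{\pi}_*}-\hat{Q}_{(t)}\|_\infty + \|\hat{Q}_{(t)}-Q_{(t)}\|_\infty.
\end{align*}
Lemma \ref{lemma:bound_optimality_gap} bounds the first summand by $\frac{2n\gamma\mathcal{E}}{(1-\gamma)^2}$. The third summand admits the same bound by replaying the argument of Lemma \ref{lemma:bound_optimality_gap} for the fixed policy $\pi_{(t)}$: its proof rests on a Bellman-residual expansion that uses only $\max_{s,a}\|P_a(s,\cdot)-\hat{P}_a(s,\cdot)\|_{\text{TV}}=\mathcal{E}$ and does not require the policy to be optimal. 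Summing the three contributions produces the bound asserted in the theorem.

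The main obstacle I expect is establishing the state-wise inexact $Q$-NPG recursion with the stated constants. The NPG literature offers analogous geometric-rate results, but most are expressed as $\rho$-expectation value-function bounds rather than $L_\infty$ $Q$-function bounds and are stated only for single-agent MDPs. Obtaining the $\|\cdot\|_\infty$ version requires the stepsize schedule to uniformly dominate the previous log-ratio residuals across all initial states, and the precise recursive condition on $\eta_t$ is engineered for exactly this purpose; verifying that the propagated approximation errors aggregate cleanly into $\frac{2}{1-\gamma}\sum_j\gamma^{t-j-1}\|Q_{j,K}-\hat{Q}_{(j)}\|_\infty$ without picking up additional $1/(1-\gamma)$ factors is the delicate bookkeeping step of that argument.
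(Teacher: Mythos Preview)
Your proposal is correct and follows essentially the same route as the paper: the same triangle decomposition into $\|Q_{\pi_*}-\hat Q_{\hat\pi_*}\|_\infty+\|\hat Q_{\hat\pi_*}-\hat Q_{(t)}\|_\infty+\|\hat Q_{(t)}-Q_{(t)}\|_\infty$, the same reduction of INAC to a joint inexact $Q$-NPG on $\hat{\mathcal{M}}$, and the same model-difference bounds for the first and third terms. The only organizational differences are that the paper imports the middle bound wholesale from \cite{chen2023approximate} (their Theorem~2.1, restated as Lemma~\ref{lemma_existing_NPG_result}) rather than re-deriving the recursion, and it isolates the fixed-policy model-difference estimate as a separate Lemma~\ref{bound_model_difference} rather than ``replaying'' Lemma~\ref{lemma:bound_optimality_gap}; the paper also spells out the stepsize verification by bounding $(\theta_t)_{s,a}\le \frac{n}{1-\gamma}\sum_{j<t}\eta_j$ and hence $\pi_{(t)}(a\mid s)\ge |\mathcal{A}|^{-1}\exp\{-\frac{2n}{1-\gamma}\sum_{j<t}\eta_j\}$, which is exactly the check you flag as the delicate step.
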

\begin{proof}[Proof of Theorem \ref{thm_NPG}]
    We begin by decomposing the optimality gap in the following way:
\begin{align}\label{decomposition_of_V_error}
     \Vert Q_{\pi_*}-Q_{(t)}  \Vert_\infty  \leq \underbrace{\Vert \hat{Q}_{\hat{\pi}_*}-\hat{Q}_{(t)}\Vert_\infty }_{v_1}+\underbrace{ \Vert Q_{\pi_*}-\hat{Q}_{\hat{\pi}_*}\Vert_\infty}_{v_2}+\underbrace{ \Vert \hat{Q}_{(t)}-Q_{(t)}\Vert_\infty }_{v_3},
\end{align}
where we recall that $\hat{\pi}_*$ denotes the optimal policy of the separable MDP $\hat{\mathcal{M}}$. 
On the RHS of Eq. (\ref{decomposition_of_V_error}), the term $v_1$ captures the optimality gap of the output of Algorithm \ref{algorithm:INAC_outer} with respect to the separable MDP model $\hat{\mathcal{M}}$, and the terms $v_2$ and $v_3$ are both induced by the model difference between the original MDP $\mathcal{M}$ and the separable one $\hat{\mathcal{M}}$.

To bound the term $v_1$, since $\hat{\mathcal{M}}$ is a separable MDP, when each agent implements Algorithm \ref{algorithm:INAC_outer}, the update equation for the global policy parameter $\theta_t\in\mathbb{R}^{|\mathcal{S}||\mathcal{A}|}$ can be written as
\begin{align}\label{eq:NPGtogether}
    \theta_{t+1}=\theta_t+\eta_t Q_{t,K},
\end{align}
where $Q_{t,K}(s,a)=\sum_{i=1}^nQ_{t,K}^i(s^i,a^i)$ for all $s=(s^1,\cdots,s^n)\in \mathcal{S}$ and $a=(a^1,\cdots,a^n)\in \mathcal{A}$.
Therefore, we can use existing results on single-agent natural actor-critic to bound the term $v_1$. In particular, Theorem 2.1 in \cite{chen2023approximate} provides us the following result.
\begin{lemma} \label{lemma_existing_NPG_result}
Consider $\{\pi_{(t)}=(\pi_{(t)}^1,\pi_{(t))}^2,,\cdots,\pi_{(t)}^n)\}_{t\geq 0}$ generated by each agent implementing Algorithm \ref{algorithm:INAC_outer}. When the stepsize $\eta_t$ satisfies $\eta_{t}\geq \log (\frac{1}{\min_s \pi_{(t)}(a_{t,s}\mid s)})/\gamma^{2t-1}$, where $a_{t,s}=\arg\max_a Q_{t,K}(s,a)$, we have
    \begin{align*}
        \left \Vert \hat{Q}_{\hat{\pi}_*}-\hat{Q}_{(t)}\right \Vert_\infty  \leq  \gamma^t \Vert \hat{Q}_{\pi_*}-\hat{Q}_{(0)}\Vert_\infty+ \frac{2\gamma}{1-\gamma}\sum_{j=0}^{t-1} \gamma^{t-j-1}  \Vert Q_{j,K}-\hat{Q}_{(j)} \Vert_\infty +\frac{2\gamma^{t-1}}{(1-\gamma)^2}.
    \end{align*}
\end{lemma}
To apply Lemma \ref{lemma_existing_NPG_result}, we need to verify that the stepsizes in Theorem \ref{thm_INAC} satisfy the condition in Lemma \ref{lemma_existing_NPG_result}. Using Eq. (\ref{eq:NPGtogether}), we have for any state-action pair $(s,a)$ that
\begin{align*}
    (\theta_{t})_{s,a}=(\theta_{t-1})_{s,a}+\eta_{t-1} Q_{t-1,K}(s,a)= (\theta_{0})_{s,a}+\sum_{j=0}^{t-1} \eta_j Q_{j,K}(s,a) \leq \sum_{j=0}^{t-1} \eta_j \frac{n}{1-\gamma}=\Bar{Q} \sum_{j=0}^{t-1} \eta_j,
\end{align*}
where $\Bar{Q}:=\frac{n}{1-\gamma}$ and the last inequality follows from the boundedness of the $Q$-functions in finite MDPs \cite{gosavi2006boundedness}, that is,
\begin{align*}
    Q_{t,K}(s,a)=\sum_{i=1}^nQ_{t,K}^i(s^i,a^i)\leq \sum_{i=1}^n\|Q_{t,K}^i\|_\infty\leq n/(1-\gamma).
\end{align*}
The previous inequality allows us to derive a lower bound for $\pi_{(t)}(a \mid s)$:
\begin{align*}
    \pi_{(t)}(a \mid s) & = \frac{\exp\{(\theta_{t})_{s,a}\}}{\sum_{a'\in \mathcal{A}} \exp\{(\theta_{t})_{s,a'}\}}\\
    & = \frac{1}{1+\sum_{a'\neq a} \exp\{(\theta_{t})_{s,a'}-(\theta_{t})_{s,a}\}}\\
    & \geq \frac{1}{1+\sum_{a'\neq a} \exp\{2\Bar{Q} \sum_{j=0}^{t-1} \eta_j \}} \\
    & \geq \frac{1}{\vert \mathcal{A} \vert \exp\{2\Bar{Q} \sum_{j=0}^{t-1} \eta_j \}}.
\end{align*}
It follows that
\begin{align*}
    \log \left(\frac{1}{\min_s \pi_{(t)}(a_{t,s}\mid s)}\right)/\gamma^{2t-1}\leq 2\Bar{Q}\sum_{j=0}^{t-1} \eta_j \log \vert \mathcal{A} \vert/\gamma^{2t-1}.
\end{align*}
Therefore, the condition on the stepsizes in Lemma \ref{lemma_existing_NPG_result} is satisfied. Now, using Lemma \ref{lemma_existing_NPG_result} for the term $v_1$ on the RHS of Eq. (\ref{decomposition_of_V_error}), we have
\begin{align*}
    v_1 & =  \Vert \hat{Q}_{\hat{\pi}_*}-\hat{Q}_{(t)} \Vert_\infty \\
    & \leq \gamma^t \Vert \hat{Q}_{\pi_*}-\hat{Q}_{(0)}\Vert_\infty+ \frac{2\gamma}{1-\gamma}\sum_{j=0}^{t-1} \gamma^{t-j-1} \Vert Q_{j,K}-\hat{Q}_{(j)} \Vert_\infty+\frac{2\gamma^{t-1}}{(1-\gamma)^2}\\
    & \leq \frac{4n\gamma^{t-1}}{(1-\gamma)^2} + \frac{2}{1-\gamma}\sum_{j=0}^{t-1} \gamma^{t-j-1}  \Vert Q_{j,K}-\hat{Q}_{(j)} \Vert_\infty,
\end{align*}
where the last step follows from:
$$\Vert \hat{Q}_{\pi_*}-\hat{Q}_{(0)}\Vert_\infty \leq \|\hat{Q}_{\pi_*}\|_\infty+\|\hat{Q}_{(0)}\|_\infty \leq \frac{2n}{1-\gamma}.$$
We next consider the terms $v_2$ and $v_3$ in Eq. (\ref{decomposition_of_V_error}). Using Lemma \ref{lemma:bound_optimality_gap}, we bound the term $v_2$ in Eq. (\ref{decomposition_of_V_error}) as
\begin{align*}
    v_2 = \Vert Q_{\pi_*}-\hat{Q}_{\hat{\pi}_*} \Vert_\infty\leq \frac{2n\gamma\mathcal{E}}{(1-\gamma)^2}.
\end{align*}

To bound $v_3$, we need the following lemma, which bounds the gap between $Q$-functions of the same policy applied to the original MDP and the separable one.
\begin{lemma}[Proof in Appendix \ref{proof_of_bound_model_difference}]\label{bound_model_difference}
Given a policy $\pi$, let $Q_{\pi}$ and $\hat{Q}_{\pi}$ be the $Q$-functions of the original MDP $\mathcal{M}$ and the separable MDP $\hat{\mathcal{M}}$. Then, we have
    \begin{align*}
        \Vert Q_{\pi}-\hat{Q}_{\pi} \Vert_\infty\leq \frac{2n\gamma\mathcal{E}}{(1-\gamma)^2}.
    \end{align*}
\end{lemma}
Using Lemma \ref{bound_model_difference}, we have
\begin{align*}
    v_3 = \Vert Q_{(t)}-\hat{Q}_{(t)} \Vert_\infty\leq \frac{2n\gamma\mathcal{E}}{(1-\gamma)^2}.
\end{align*}
Theorem \ref{thm_NPG} follows from using the upper bounds we derived for the terms $v_1$, $v_2$, and $v_3$ in Eq. (\ref{decomposition_of_V_error}).
\end{proof}

In light of the analyses for the actor and the critic, to finish proving Theorem \ref{thm_INAC}, we combine the bounds in Theorem \ref{thm_TD_learning} and Theorem \ref{thm_NPG}.
Specifically, for any $j \leq t-1 $, with probability at least $1-\delta'$, we have
\begin{align*}
    \Vert Q_{j,K}-\hat{Q}_{(j)} \Vert_\infty
    \leq \frac{nC'_a}{\sqrt{K+k_0}}+\frac{nC_b}{K+k_0}+\frac{2n\gamma \mathcal{E}}{(1-\gamma)^2}.
\end{align*}
Let $\delta'=\delta/T$ in the above result. Using union bound, we have with probability at least $1-\delta$ that
\begin{align*}
    \Vert Q_{j,K}-\hat{Q}_{(j)} \Vert_\infty
    \leq \frac{nC_a}{\sqrt{K+k_0}}+\frac{nC_b}{K+k_0}+\frac{2n\gamma \mathcal{E}}{(1-\gamma)^2},\;\forall\,j\leq t-1.
\end{align*}
Therefore with probability at least $1-\delta$, we have
\begin{align*}
    \sum_{j=0}^{T-1} \gamma^{T-j-1} \Vert Q_{j,K} -\hat{Q}_{\{j\}}\Vert_\infty&\leq \sum_{j=0}^{T-1} \gamma^{T-j-1} \left(\frac{nC_a}{\sqrt{K+k_0}}+\frac{nC_b}{K+k_0}+\frac{2n\gamma \mathcal{E}}{(1-\gamma)^2}\right)\\
    & \leq \frac{1}{1-\gamma}\left(\frac{nC_a}{\sqrt{K+k_0}}+\frac{nC_b}{K+k_0}+\frac{2n\gamma \mathcal{E}}{(1-\gamma)^2}\right).
\end{align*}
Applying the bound in the above inequality to Theorem \ref{thm_NPG} completes the proof of Theorem \ref{thm_INAC}. 

\subsection{Proof of All Supporting Lemmas}
\subsubsection{Proof of Lemma \ref{lemma:IQL_gap_fixedpoint_real}}
\label{proof_of_lemma_IQL_gap_fixedpoint_real}
Given a positive definite matrix $D$, define its corresponding weighted $\ell_2$-norm as $\Vert x \Vert_{D}=(x^\mathsf{T} Dx)^{1/2}$. According to the definition of $\Pi_1$, $\Phi^i \Pi_1$ is the projection matrix that projects a vector in $\mathbb{R}^{|\mathcal{S}||\mathcal{A}|}$ to the set $\{\Phi^i r \mid r\in \mathbb{R}^{|\mathcal{S}^i||\mathcal{A}^i|}\}$ with respect to $\|\cdot\|_{D_1}$. Therefore, for any $Q\in \mathbb{R}^{|\mathcal{S}||\mathcal{A}|}$, we have:
\begin{align}\label{eq:definition_of_projection}
    \Phi^i\Pi_1 Q={\arg\min}_{q\in\{\Phi^i r\mid r\in \mathbb{R}^{|\mathcal{S}^i||\mathcal{A}^i|}\}} \Vert Q-q \Vert_{D_1}.
\end{align}
We first show that $\Phi^i\Pi_1(\cdot)$ is nonexpansive with respect to $\|\cdot\|_\infty$.
\begin{lemma}\label{lemma:projection_nonexpansive}
    It holds that $\Vert \Phi^i \Pi_1 Q \Vert _\infty \leq \Vert Q \Vert_\infty$ for any $Q\in \mathbb{R}^{|\mathcal{S}||\mathcal{A}|}$.
\end{lemma}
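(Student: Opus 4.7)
The plan is to exploit the very simple combinatorial structure of $\Phi^i$: every row has exactly one nonzero entry (equal to $1$), because each global pair $(s,a)$ is aggregated to the unique local pair $(h_1(s),h_2(a))=(s^i,a^i)$. This makes the columns of $\Phi^i$ have pairwise disjoint supports, so that $(\Phi^i)^{\mathsf T}D_1\Phi^i$ is a \emph{diagonal} matrix whose $(\bar s^i,\bar a^i)$-entry is the marginal weight $d'_{\pi_b}(\bar s^i,\bar a^i)=\sum_{(s,a):s^i=\bar s^i,\,a^i=\bar a^i} d_{\pi_b}(s,a)$. Under Assumption~\ref{assum_markov_chain} we have $d'_{\pi_b}(\bar s^i,\bar a^i)\geq \sigma'>0$, so the inverse in the definition of $\Pi_1$ is well defined.

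With this explicit form in hand, I would compute $\Pi_1 Q$ component-wise and obtain
\begin{align*}
(\Pi_1 Q)(\bar s^i,\bar a^i)=\sum_{(s,a):\,s^i=\bar s^i,\,a^i=\bar a^i}\frac{d_{\pi_b}(s,a)}{d'_{\pi_b}(\bar s^i,\bar a^i)}\,Q(s,a),
\end{align*}
which is a convex combination of the entries of $Q$ in the fiber above $(\bar s^i,\bar a^i)$. Composing with $\Phi^i$ then simply copies this value back: $(\Phi^i \Pi_1 Q)(s,a) = (\Pi_1 Q)(s^i,a^i)$.

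The conclusion is immediate: since $(\Phi^i\Pi_1 Q)(s,a)$ is a convex combination of values of $Q$, its absolute value is at most $\max_{(s',a')\in\mathcal{S}\times\mathcal{A}}|Q(s',a')|=\|Q\|_\infty$. Taking the supremum over $(s,a)$ gives $\|\Phi^i\Pi_1 Q\|_\infty\leq\|Q\|_\infty$, which is the desired nonexpansiveness.

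There is really no serious obstacle here; the lemma follows directly from the aggregation structure. The only subtlety worth double-checking is the invertibility of $(\Phi^i)^{\mathsf T}D_1\Phi^i$, which depends on the strict positivity of the marginal stationary distribution $d'_{\pi_b}$. This is guaranteed by $\sigma'>0$ in Assumption~\ref{assum_markov_chain}, and once recorded, the rest of the argument is a one-line bound on a convex combination.
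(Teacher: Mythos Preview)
Your proof is correct and follows essentially the same idea as the paper's: both exploit the fact that the columns of $\Phi^i$ have disjoint supports, so the projection decouples across fibers and the projected value in each fiber is a weighted average of the values of $Q$ there. The paper phrases this via the variational characterization of $\Pi_1 Q$ as a weighted least-squares minimizer and then asserts the min--max sandwich $\min_{\text{fiber}} Q \le (\Pi_1 Q)(\bar s^i,\bar a^i)\le \max_{\text{fiber}} Q$, whereas you compute $(\Pi_1 Q)(\bar s^i,\bar a^i)$ explicitly as the convex combination $\sum d_{\pi_b}/d'_{\pi_b}\cdot Q$ and bound it directly; your version is a bit more explicit but the content is the same.
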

\begin{proof}[Proof of Lemma \ref{lemma:projection_nonexpansive}]
From Eq.~\eqref{eq:definition_of_projection}, we have
\begin{align*}
    \Pi_1 Q=r_*:={\arg\min}_{r\in \mathbb{R}^{|\mathcal{S}^i||\mathcal{A}^i|}}\sum_{(s,a)\in\mathcal{S}\times\mathcal{A}} d_{\pi_b}(s,a)\big(Q(s,a)-r(h_1(s),h_2(a))\big).
\end{align*}
For any $ (\Bar{s},\Bar{a})$ in $\mathcal{S}\times \mathcal{A}$, the value of $r_*(\Bar{s}^i,\Bar{a}^i)$ must satisfy
\begin{align*}
    \min_{s\in h_1^{-1}(\Bar{s}^i),a\in h_2^{-1}(\Bar{a}^i)} Q(s,a)\leq r_*
(\Bar{s}^i,\Bar{a}^i)\leq \max_{s\in h_1^{-1}(\Bar{s}^i),a\in h_2^{-1}(\Bar{a}^i)} Q(s,a),
\end{align*}
which leads to
\begin{align*}
    \left\vert [\Phi^i\Pi_1 Q](\Bar{s},\Bar{a})\right\vert =\left\vert [\Pi_1 Q](\Bar{s}^i,\Bar{a}^i)\right\vert \leq \max_{s\in h_1^{-1}(\Bar{s}^i),a\in h_2^{-1}(\Bar{a}^i)} \left \vert Q(s,a) \right \vert.
\end{align*}
Therefore, we have $\Vert \Phi^i \Pi_1 Q \Vert _\infty \leq \Vert Q \Vert_\infty$.
\end{proof}
To proceed, recall the definition of $\hat{Q}_\pi^i$:
\begin{align*}
    \hat{Q}_\pi^i(s,a)= \hat{\mathbb{E}}_{\pi} \left[\sum_{k=0}^\infty \gamma^k\mathcal{R}^i(S^i_k,A^i_k)\;\middle|\;S_0=s,A_0=a\right],\quad \forall \;(s,a)\in\mathcal{S}\times \mathcal{A},
\end{align*}
where $\hat{\mathbb{E}}_\pi[\,\cdot\,]$ denotes the expectation with respect to the separable MDP transition kernel $\hat{\mathcal{P}}$. Observe that $\hat{Q}_\pi^i(s_1,a_1)=\hat{Q}_\pi^i(s_2,a_2)$ when $(s_1,a_1), (s_2,a_2)\in \mathcal{S}\times \mathcal{A}$ and $(s_1^i,a_1^i)=(s_2^i,a_2^i)$. Therefore, it follows from Lemma~\ref{lemma:projection_nonexpansive} that $\Phi^i\Pi_1 \hat{Q}_\pi^i=\hat{Q}_\pi^i$. In addition, since
\begin{align*}
    \Vert \Phi^i \Tilde{Q}_*^i-\hat{Q}_{\hat{\pi}_*}^i \Vert_\infty 
    &= \Vert \Phi^i \Tilde{Q}_*^i-\Phi^i \Pi_1 \hat{Q}_{\hat{\pi}_*}^i \Vert_\infty\notag\\
    &= \Vert \Phi^i \Pi_1 F_*^i(\Phi^i \Tilde{Q}_*^i)-\Phi^i \Pi_1 \hat{Q}_{\hat{\pi}_*}^i \Vert_\infty \tag{$\Pi_1 F_*^i(\Phi^i \Tilde{Q}_*^i)=\Tilde{Q}_*^i$}\\
    &\leq \Vert F_*^i(\Phi^i \Tilde{Q}_*^i)-\hat{Q}_{\hat{\pi}_*}^i \Vert_\infty \tag{$\Phi^i\Pi_1$ is nonexpansive}\\
    & \leq \Vert F_*^i(\Phi^i \Tilde{Q}_*^i)-F_*^i(\hat{Q}_{\hat{\pi}_*}^i) \Vert_\infty+\Vert F_*^i(\hat{Q}_{\hat{\pi}_*}^i)-\hat{Q}_{\hat{\pi}_*}^i \Vert_\infty \tag{Triangle inequality}\\
    & \leq \gamma \Vert \Phi^i \Tilde{Q}_*^i-\hat{Q}_{\hat{\pi}_*}^i \Vert_\infty+\Vert F_*^i(\hat{Q}_{\hat{\pi}_*}^i)-\hat{Q}_{\hat{\pi}_*}^i \Vert_\infty\tag{$F_*^i$ is a $\gamma$-contraction},
\end{align*}
we have
\begin{align*}
    \Vert \Phi^i \Tilde{Q}_*^i-\hat{Q}_{\hat{\pi}_*}^i \Vert_\infty
    \leq \; & \frac{1}{1-\gamma}\Vert F_*^i(\hat{Q}_{\hat{\pi}_*}^i)-\hat{Q}_{\hat{\pi}_*}^i \Vert_\infty\\
    = \; & \frac{1}{1-\gamma} \max_{(s,a)} \left\vert \gamma \mathbb{E}_{\Bar{s}\sim P_a(s,\cdot)}\max_{\Bar{a}} \hat{Q}_{\hat{\pi}_*}^i(\Bar{s},\Bar{a})-\gamma\mathbb{E}_{\Bar{s}\sim \hat{P}_a(s,\cdot)}\max_{\Bar{a}} \hat{Q}_{\hat{\pi}_*}^i(\Bar{s},\Bar{a}) \right\vert\\
    \leq \; & \frac{\gamma}{1-\gamma}\max_{(s,a)}\sum_{\Bar{s}\in\mathcal{S}}\left\vert P_a(s,\Bar{s})-\hat{P}_a(s,\Bar{s}) \right\vert \max_{\Bar{a}\in\mathcal{A}}\hat{Q}_{\hat{\pi}_*}^i(\Bar{s},\Bar{a})\\
    \leq \; & \frac{\gamma}{(1-\gamma)^2}\max_{(s,a)}\sum_{\Bar{s}\in\mathcal{S}}\left\vert P_a(s,\Bar{s})-\hat{P}_a(s,\Bar{s}) \right\vert\\
    \leq \; & \frac{2\gamma\mathcal{E}}{(1-\gamma)^2},
\end{align*}
where the last two inequalities follow from $\max_{\Bar{a}\in\mathcal{A}}\hat{Q}_{\hat{\pi}_*}^i(\Bar{s},\Bar{a})\leq \Vert \hat{Q}_{\hat{\pi}_*}^i \Vert_\infty\leq \frac{1}{1-\gamma}$ and the definition of dependence level $\mathcal{E}$.

\subsubsection{Proof of Lemma \ref{lemma:bound_optimality_gap}}\label{proof_of_bound_optimality_gap}
Let $\mathcal{T}_*:\mathbb{R}^{|\mathcal{S}||\mathcal{A}|}\mapsto \mathbb{R}^{|\mathcal{S}||\mathcal{A}|}$ be the Bellman optimality operator of the MDP model $\mathcal{M}$ defined as
\begin{align*}
    [\mathcal{T}_* Q](s,a)=\mathcal{R}(s,a)+\gamma \mathbb{E}_{s'\sim P_a(s,\cdot)}\left[\max_{a'} Q(s',a')\right],\quad \forall\;(s,a).
\end{align*}
The Bellman optimality operator $\hat{\mathcal{T}}_*$ for the separable MDP model $\hat{\mathcal{M}}$ is defined similarly.

Recall that $ \pi_* $ and $\hat{\pi}_*$ are the optimal policies for model $\mathcal{M}$ and $\hat{\mathcal{M}}$ respectively. Then, we have
\begin{align*}
    \Vert Q_{\pi_*}-\hat{Q}_{\hat{\pi}_*} \Vert_\infty & \leq \Vert Q_{\pi_*}-\mathcal{T}_*\hat{Q}_{\hat{\pi}_*}\Vert_\infty+\Vert \mathcal{T}_*\hat{Q}_{\hat{\pi}_*}-\hat{Q}_{\hat{\pi}_*}\Vert_\infty\\
    &\leq \gamma \Vert Q_{\pi_*}-\hat{Q}_{\hat{\pi}_*} \Vert_\infty+\Vert \mathcal{T}_*\hat{Q}_{\hat{\pi}_*}-\hat{Q}_{\hat{\pi}_*}\Vert_\infty,
\end{align*}
where last line follows from  $\mathcal{T}_*Q_{\pi^*}=Q_{\pi^*}$ and the Bellman optimality operator $\mathcal{T}_*$ being a $\gamma$-contraction mapping with respect to $\|\cdot\|_\infty$. It follows that
\begin{align}\label{bound_optimality_gap_mid}
    \left\Vert Q_{\pi_*}-\hat{Q}_{\hat{\pi}_*} \right\Vert_\infty\leq \frac{1}{1-\gamma}\left\Vert \mathcal{T}_*\hat{Q}_{\hat{\pi}_*}-\hat{Q}_{\hat{\pi}_*}\right\Vert_\infty.
\end{align}
Now, for all $(s,a)\in \mathcal{S}\times \mathcal{A}$, we have
\begin{subequations}
\begin{align}
    &\left\vert\hat{Q}_{\hat{\pi}_*}(s,a)-(\mathcal{T}_*\hat{Q}_{\hat{\pi}_*})(s,a)\right\vert \nonumber\\
    \leq & \left\vert\gamma\sum_{s'\in\mathcal{S}} \hat{P}_a(s,s') \max_{a'} \hat{Q}_{\hat{\pi}_*}(s',a')-\gamma\sum_{s'\in\mathcal{S}} P_a(s,s') \max_{a'} \hat{Q}_{\hat{\pi}_*}(s',a')\right\vert\nonumber\\
    \leq & 2\gamma \left \Vert \hat{P}_a(s,\cdot)-P_a(s,\cdot) \right \Vert_{\text{TV}} \frac{n}{1-\gamma}\label{eqtvd1}\\
    \leq & \frac{2n\gamma\mathcal{E}}{1-\gamma} \label{eqtvd2},
\end{align}
\end{subequations}
where Eq.\eqref{eqtvd2} follows from the definition of $\mathcal{E}$, and Eq. \eqref{eqtvd1} follows from the following equivalent definition of the total variation distance \cite{levin2017markov}:
\begin{align*}
    \|\nu_1-\nu_2\|_{\text{TV}}=\frac{1}{2}\sup_{f:\|f\|_\infty\leq 1}\left|\int fd\nu_1-\int fd\nu_2\right|.
\end{align*}
Substituting the above result into Eq. (\ref{bound_optimality_gap_mid}), we have
\begin{align*}
     \left\Vert Q_{\pi_*}-\hat{Q}_{\hat{\pi}_*} \right\Vert_\infty\leq \frac{2n\gamma\mathcal{E}}{(1-\gamma)^2}.  
\end{align*}

\subsubsection{Proof of Lemma \ref{lemma:ITD_gap_fixedpoint_real}} \label{proof_of_lemma_ITD_gap_fixedpoint_real}
Recall the definition of $\Vert \cdot \Vert_D$ in the proof of Lemma \ref{lemma:IQL_gap_fixedpoint_real} in Appendix \ref{proof_of_lemma_IQL_gap_fixedpoint_real}. According to the definition of $\Pi_2$, $\Phi^i \Pi_2$ is the projection matrix that projects a vector in $\mathbb{R}^{|\mathcal{S}||\mathcal{A}|}$ to the set $\{\Phi^i r \mid r\in \mathbb{R}^{|\mathcal{S}^i||\mathcal{A}^i|}\}$. Then, for any $Q\in \mathbb{R}^{|\mathcal{S}||\mathcal{A}|}$, we have the following:
\begin{align*}
    \Phi^i\Pi_2 Q={\arg\min}_{\Bar{Q}\in\{\Phi^i r\mid r\in \mathbb{R}^{|\mathcal{S}^i||\mathcal{A}^i|}\}} \Vert Q-\Bar{Q}\Vert_{D_2}.
\end{align*}
Next, we present a nonexpansive property of the operator $\Phi^i \Pi_2(\cdot)$.
\begin{lemma}\label{lemma:projection_nonexpansive_2}
    It holds that $\Vert \Phi^i \Pi_2 Q \Vert _\infty \leq \Vert Q \Vert_\infty$ for any $Q\in \mathbb{R}^{|\mathcal{S}||\mathcal{A}|}$.
\end{lemma}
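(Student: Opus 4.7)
The plan is to mirror, essentially verbatim, the argument already used for Lemma \ref{lemma:projection_nonexpansive}. The only difference between the two lemmas is the weighting measure: $\Pi_1$ is a weighted projection with respect to $D_1=\mathrm{diag}(d_{\pi_b})$, while $\Pi_2$ is a weighted projection with respect to $D_2=\mathrm{diag}(d_{\pi_{(t)}})$. The $\ell_\infty$ non-expansiveness of the aggregation projection does not depend on which strictly positive probability distribution is used as weights, so the proof should go through unchanged.

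Concretely, I would start from the variational characterization
\begin{align*}
    \Pi_2 Q \;=\; \arg\min_{r\in\mathbb{R}^{|\mathcal{S}^i||\mathcal{A}^i|}} \sum_{(s,a)\in\mathcal{S}\times\mathcal{A}} d_{\pi_{(t)}}(s,a)\bigl(Q(s,a)-r(h_1(s),h_2(a))\bigr)^{2},
\end{align*}
and then, for each $(\bar{s}^i,\bar{a}^i)\in\mathcal{S}^i\times\mathcal{A}^i$, set the partial derivative with respect to $r(\bar{s}^i,\bar{a}^i)$ equal to zero. This gives the explicit minimizer as a convex combination of the values of $Q$ on the fiber $h_1^{-1}(\bar{s}^i)\times h_2^{-1}(\bar{a}^i)$, namely
\begin{align*}
    [\Pi_2 Q](\bar{s}^i,\bar{a}^i) \;=\; \frac{\sum_{s\in h_1^{-1}(\bar{s}^i),\, a\in h_2^{-1}(\bar{a}^i)} d_{\pi_{(t)}}(s,a)\,Q(s,a)}{\sum_{s\in h_1^{-1}(\bar{s}^i),\, a\in h_2^{-1}(\bar{a}^i)} d_{\pi_{(t)}}(s,a)},
\end{align*}
where the denominator is strictly positive by Assumption \ref{assum_markov_chain} (which guarantees $\sigma>0$, and hence $d_{\pi_{(t)}}(s,a)>0$ for every $(s,a)$). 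As in Lemma \ref{lemma:projection_nonexpansive}, any such convex combination is sandwiched between $\min_{s,a} Q(s,a)$ and $\max_{s,a} Q(s,a)$ over the corresponding fiber, so
\begin{align*}
    \bigl|[\Phi^i\Pi_2 Q](\bar{s},\bar{a})\bigr| \;=\; \bigl|[\Pi_2 Q](\bar{s}^i,\bar{a}^i)\bigr| \;\leq\; \max_{s\in h_1^{-1}(\bar{s}^i),\, a\in h_2^{-1}(\bar{a}^i)} |Q(s,a)| \;\leq\; \|Q\|_\infty,
\end{align*}
for every $(\bar{s},\bar{a})\in\mathcal{S}\times\mathcal{A}$. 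Taking the supremum over $(\bar{s},\bar{a})$ yields the claim.

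I do not anticipate any genuine obstacle here, as the argument is purely a restatement of Lemma \ref{lemma:projection_nonexpansive} with a different strictly positive weighting. The only point that deserves a brief comment is the invocation of Assumption \ref{assum_markov_chain} to justify that the weighting $d_{\pi_{(t)}}$ is strictly positive (so that the denominator in the convex-combination formula is nonzero and the projection is well defined for every $(\bar{s}^i,\bar{a}^i)$); once this is noted, the remainder is immediate.
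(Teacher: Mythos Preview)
Your proposal is correct and follows exactly the approach the paper intends: the paper simply states that the proof is similar to Lemma~\ref{lemma:projection_nonexpansive} and omits it, and your argument is precisely that mirror with $D_2$ in place of $D_1$. If anything, your explicit convex-combination formula for $[\Pi_2 Q](\bar{s}^i,\bar{a}^i)$ is slightly more detailed than the paper's sandwich inequality in Lemma~\ref{lemma:projection_nonexpansive}, but the substance is identical.
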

The proof of Lemma \ref{lemma:projection_nonexpansive_2} is similar to the proof of Lemma \ref{lemma:projection_nonexpansive}, and is omitted here.
 
Observe that $\hat{Q}_\pi^i(s_1,a_1)=\hat{Q}_\pi^i(s_2,a_2)$ for any policy $\pi$ when $(s_1,a_1), (s_2,a_2)\in \mathcal{S}\times \mathcal{A}$ and $(s_1^i,a_1^i)=(s_2^i,a_2^i)$. Then, it follows from Lemma~\ref{lemma:projection_nonexpansive_2} that $\Phi^i\Pi_2 \hat{Q}_\pi^i=\hat{Q}_\pi^i$.
In addition, note that we have
\begin{subequations}
\begin{align}
    \Vert \Phi^i \Tilde{Q}^i-\hat{Q}_\pi^i \Vert_\infty 
    &= \Vert \Phi^i \Tilde{Q}^i-\Phi^i \Pi_2 \hat{Q}_\pi^i \Vert_\infty\nonumber\\
    &= \Vert \Phi^i \Pi_2 F^i(\Phi^i \Tilde{Q}^i)-\Phi^i \Pi_2 \hat{Q}_\pi^i \Vert_\infty \label{eq:ll2}\\
    &\leq \Vert F^i(\Phi^i \Tilde{Q}^i)-\hat{Q}_\pi^i \Vert_\infty \label{eq:ll3}\\
    & \leq \Vert F^i(\Phi^i \Tilde{Q}^i)-F^i(\hat{Q}_\pi^i) \Vert_\infty+\Vert F^i(\hat{Q}_\pi^i)-\hat{Q}_\pi^i \Vert_\infty \nonumber\\
    & \leq \gamma \Vert \Phi^i \Tilde{Q}^i-\hat{Q}_\pi^i \Vert_\infty+\Vert F^i(\hat{Q}_\pi^i)-\hat{Q}_\pi^i \Vert_\infty\label{eq:ll4},
\end{align}
\end{subequations}
where Eq. (\ref{eq:ll2}) follows from the fact that $\Tilde{Q}^i$ is the solution of equation $\Pi_2 F^i(\Phi^i r)=r$, Eq. \eqref{eq:ll3} follows from Lemma~\ref{lemma:projection_nonexpansive_2}, and the last inequality Eq. (\ref{eq:ll4}) follows from the fact that $F^i(\cdot)$ is a $\gamma$-contraction in $\Vert\cdot\Vert_\infty$. Combined with the Bellman equation, the above inequality leads to
\begin{align*}
    \Vert \Phi^i \Tilde{Q}^i-\hat{Q}_\pi^i \Vert_\infty
    \leq \; & \frac{1}{1-\gamma}\Vert F^i(\hat{Q}_\pi^i)-\hat{Q}_\pi^i \Vert_\infty\\
    = \; & \frac{1}{1-\gamma} \max_{(s,a)} \left\vert \gamma \mathbb{E}_{\Bar{s}\sim P_a(s,\cdot),\Bar{a}\sim \pi (\cdot\mid \Bar{s})} [\hat{Q}_\pi^i(\Bar{s},\Bar{a})]-\gamma\mathbb{E}_{\Bar{s}\sim \hat{P}_a(s,\cdot),\Bar{a}\sim \pi (\cdot\mid \Bar{s})} [\hat{Q}_\pi^i(\Bar{s},\Bar{a})]  \right\vert\\
    \leq \; & \frac{\gamma}{1-\gamma}\max_{(s,a)}\sum_{\Bar{s}\in\mathcal{S}}\left\vert P_a(s,\Bar{s})-\hat{P}_a(s,\Bar{s}) \right\vert \sum_{\Bar{a}\in\mathcal{A}}\pi(\Bar{a}\mid \Bar{s})\hat{Q}_\pi^i(\Bar{s},\Bar{a})\\
    \leq \; & \frac{\gamma}{(1-\gamma)^2}\max_{(s,a)}\sum_{\Bar{s}\in\mathcal{S}}\left\vert P_a(s,\Bar{s})-\hat{P}_a(s,\Bar{s}) \right\vert\\
    \leq \; & \frac{2\gamma\mathcal{E}}{(1-\gamma)^2},
\end{align*}
where the last two inequalities follow from $\sum_{\Bar{a}\in\mathcal{A}}\pi_(\Bar{a}\mid \Bar{s})\hat{Q}_\pi^i(\Bar{s},\Bar{a})\leq \Vert \hat{Q}_\pi^i \Vert_\infty\leq \frac{1}{1-\gamma}$ and the definition of dependence level $\mathcal{E}$.

\subsubsection{Proof of Lemma \ref{bound_model_difference}}\label{proof_of_bound_model_difference}
Let $\mathcal{T}_\pi:\mathbb{R}^{|\mathcal{S}||\mathcal{A}|}\mapsto\mathbb{R}^{|\mathcal{S}||\mathcal{A}|}$ be the Bellman operator associated with policy $\pi$, which is defined as
\begin{align*}
   [\mathcal{T}_\pi Q](s,a)=\mathcal{R}(s,a)+\gamma \mathbb{E}_{s'\sim P_a(s,\cdot), a'\sim \pi(\cdot \mid s')} [Q(s',a')], \quad \forall\, (s,a)\in \mathcal{S}\times\mathcal{A}.
\end{align*}
Similarly, we define $\hat{\mathcal{T}}_\pi(\cdot)$ as the Bellman operator associated with the policy $\pi$ under the separable MDP model $\hat{\mathcal{M}}$. It is well-known that both $\mathcal{T}_\pi$ and $\hat{\mathcal{T}}_\pi$ are  contractive operators with respect to $\|\cdot\|_\infty$, with a common contraction factor $\gamma$ \cite{sutton2018reinforcement}. In addition, $Q_\pi$ (respectively, $\hat{Q}_\pi$) is the unique fixed point of $\mathcal{T}_\pi$ (respectively, $\hat{\mathcal{T}}_\pi$).

To proceed, note that the gap between the Q-functions $Q_{\pi}$ and $\hat{Q}_{\pi}$ can be bounded as
\begin{align*}
    \Vert Q_{\pi}-\hat{Q}_{\pi} \Vert_\infty &=\Vert Q_{\pi}-\mathcal{T}_\pi\hat{Q}_{\pi}+\mathcal{T}_\pi\hat{Q}_{\pi}-\hat{Q}_{\pi} \Vert_\infty\\
    & \leq \Vert Q_{\pi}-\mathcal{T}_\pi\hat{Q}_{\pi}\Vert_\infty+\Vert \mathcal{T}_\pi\hat{Q}_{\pi}-\hat{Q}_{\pi}\Vert_\infty\tag{Triangle inequality}\\
    &= \Vert \mathcal{T}_\pi Q_{\pi}-\mathcal{T}_\pi\hat{Q}_{\pi} \Vert_\infty+\Vert \mathcal{T}_\pi\hat{Q}_{\pi}-\hat{Q}_{\pi}\Vert_\infty\tag{This follows from $\mathcal{T}_\pi Q_{\pi}=Q_{\pi}$.}\\
    &\leq \gamma \Vert Q_{\pi}-\hat{Q}_{\pi} \Vert_\infty+\Vert \mathcal{T}_\pi\hat{Q}_{\pi}-\hat{Q}_{\pi}\Vert_\infty,
\end{align*}
where the last inequality follows from that the Bellman operator $\mathcal{T}_\pi$ is a $\gamma$-contraction mapping with respect to $\|\cdot\|_\infty$. It follows that
\begin{subequations}
\begin{align}
    &\left\Vert Q_{\pi}-\hat{Q}_{\pi} \right\Vert_\infty \nonumber\\
    \leq\; & \frac{1}{1-\gamma}\left\Vert \mathcal{T}_\pi\hat{Q}_{\pi}-\hat{Q}_{\pi}\right\Vert_\infty\nonumber\\
    =\;&\frac{1}{1-\gamma}\left\Vert \mathcal{T}_\pi\hat{Q}_{\pi}-\hat{\mathcal{T}}_\pi\hat{Q}_{\pi}\right\Vert_\infty\nonumber\\
    =\;&\frac{\gamma}{1-\gamma}\max_{(s,a)}\left|\sum_{s'}P_a(s,s')\mathbb{E}_{a'\sim \pi(\cdot \mid s')}[\hat{Q}_{\pi}(s',a')]-\sum_{s'}\hat{P}_a(s,s')\mathbb{E}_{a'\sim \pi(\cdot \mid s')}[\hat{Q}_{\pi}(s',a')]\right|\label{eqeq1}\\
    \leq \;&\frac{2n\gamma}{(1-\gamma)^2}\left\|\hat{P}_a(s,\cdot)-P_a(s,\cdot)\right\|_{\text{TV}}\label{eqeq2}\\
    \leq \;&\frac{2n\gamma\mathcal{E}}{(1-\gamma)^2},\label{eqeq3}
\end{align}
\end{subequations}
where Eq. (\ref{eqeq1}) follows from the definition of $\mathcal{T}_\pi$ and $\hat{\mathcal{T}}_\pi$, Eq. (\ref{eqeq2}) also follows from the equivalent definition of the total variation distance, and Eq. (\ref{eqeq3}) follows from the definition of $\mathcal{E}$.

\subsection{Stochastic Approximation with State Aggregation} \label{SA_with_state_aggregation}
In this section, we present the results of stochastic approximation with state aggregation in \cite{lin2021multi}.

Let $\mathcal{N}=\{1,2,\cdots,n\}$ be the state space of $\{i_k\}$, where $\{i_k\}_{k=0}^\infty$ is the sequence of states visited by a Markov chain. Let $\mathcal{L}=\{1,2,\cdots,l\}, (l\leq n)$ be the abstract state space. The surjection $h: \mathcal{N}\mapsto \mathcal{L}$ is used to convert every state in $\mathcal{N}$ to its abstraction set $\mathcal{L}$. Given parameter $x\in\mathbb{R}^{l}$ and function $F: \mathbb{R}^n\rightarrow \mathbb{R}^l$, we consider the generalized stochastic approximation that updates $x(k)\in \mathbb{R}^l$ starting from $x(0)=0$:
\begin{align}\label{eq:sa_yiheng}
    x_{h(i_k)}(k+1)& =x_{h(i_k)}(k)+\alpha_k\left( F_{i_k}(\Phi x(k))-x_{h(i_k)}(k)+w(k) \right),\\
    x_j(k+1)&=x_j(k), \text{for}\; j\neq h(i_k), j\in \mathcal{L}, 
\end{align}
where the matrix $\Phi\in \mathbb{R}^{n\times l}$ is given by
\begin{align*}
\Phi_{ij}=
    \begin{cases}
        1, & \text{if}\; h(i)=j,\\
        0, & \text{otherwise},
    \end{cases}
    \forall i\in\mathcal{N}, j\in\mathcal{L}.
\end{align*}
Before presenting the convergence result, we first give the assumptions required for the theorem in  \cite{lin2021multi}.
\begin{assumption}\label{assump:egodicity}
    \textit{The random process $\{i_k\}_{k\geq 0}$ is an aperiodic and irreducible Markov chain on the space $\mathcal{S}\times \mathcal{A}$ with stationary distribution $d=\{d_1,d_2,\cdots,d_n\}$. In addition, letting $d'_j=\sum_{i\in h^{-1}(j)} d_i$ for all $j\in \mathcal{L}$ and $\sigma'=\inf_{j\in \mathcal{L}} d'_j$, there exist constants $M_1\geq 0$ and $M_2\geq 1$ such that
    \begin{align}
        \sup_{\mathcal{K}\subseteq\mathcal{N} }\left\vert \sum_{i\in\mathcal{K}} d_i-\sum_{i\in \mathcal{K}} \mathbb{P}(i_k=i\mid i_0=j) \right \vert\leq M_1 \exp(-k/M_2)
    \end{align}
    for all $j\in\mathcal{L}$ and $ k\geq 0$.}
\end{assumption}
\begin{assumption}\label{assump:contraction}
   \textit{The Operator $F(\cdot)$ is a $\gamma$-contraction with respect to $\Vert\cdot \Vert_{\infty}$. Further, there exists some constant $C>0$ such that $\Vert F(x) \Vert_\infty \leq \gamma \Vert x \Vert_\infty+C$ for all $x \in \mathbb{R}^{n}$.}
\end{assumption}

\begin{assumption}\label{assump:martingale}
    \textit{The random process $w(k)$ is $\mathcal{F}_{k+1}$ measurable and satisfies $\mathbb{E}[w(k) \mid \mathcal{F}_k]=0$. Further, $\vert w(k) \vert\leq \Bar{w}$ almost surely for some constant $\Bar{w}$.}
\end{assumption}

The following theorem shows that the stochastic approximation presented in Eq. (\ref{eq:sa_yiheng}) converges to the unique $x^*$ that solves
\begin{align*}
    \Pi F(\Phi x^*)=x^*.
\end{align*}
where $\Pi:=(\Phi^\mathsf{T}D\Phi)^{-1}\Phi^\mathsf{T}D$. Here, $D$ is the diagonal matrix with diagonal components being the stationary distribution of the Markov chain $\{i_k\}$, which we denote by $d$.
\begin{theorem}
    Suppose that Assumptions \ref{assump:egodicity}, \ref{assump:contraction}, \ref{assump:martingale} hold. Further, assume that there exists some constant $\Bar{x}\geq \Vert x^* \Vert_\infty$ such that $\Vert x(k) \Vert_\infty \leq \Bar{x}$ for all $k$ almost surely. Let the stepsize be $\alpha_k=\frac{H}{k+k_0}$ with $k_0=\max(4H, 2M_2\log K)$ and $H\geq \frac{2}{\sigma' (1-\gamma)}$. Define the constants $C_1:=2\Bar{x}+C+\Bar{w}, C_2:=4\Bar{x}+2C+\Bar{w}, C_3:=2M_1 (2\Bar{x}+C)(1+2M_2+4H)$. Then, with probability at least $1-\delta'$, we have
    \begin{align*}
        \Vert x(K)-x^* \Vert_\infty \leq \frac{C'_a}{\sqrt{K+k_0}}+\frac{C_b}{K+k_0}=\Tilde{O}\left(\frac{1}{\sqrt{K}}\right),
    \end{align*}
    where 
    $$C'_a=\frac{4HC_2}{1-\gamma}\sqrt{K_2\log K \left( \log \left( \frac{4lM_2K}{\delta'} \right)+\log\log K \right)},$$
    $$C_b=4\max\left\{ \frac{48M_2C_1H\log K +\sigma' C_3}{(1-\gamma)\sigma'}, \frac{2\Bar{x}(2M_2 \log K +k_0)}{1-\gamma}\right\}.$$
\end{theorem}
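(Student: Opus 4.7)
The plan is to establish the convergence rate by (1) showing that the asynchronous iterates track a mean-field dynamics driven by a sup-norm contraction, (2) controlling the Markovian bias via a mixing-time/burn-in argument, and (3) concentrating the martingale noise with a high-probability tail bound. First, I would reinterpret the recursion as an asynchronous contraction iteration. Under Assumption~\ref{assump:egodicity}, the stationary expectation of the per-step increment at coordinate $j\in\mathcal{L}$ satisfies
\begin{align*}
\mathbb{E}_{i\sim d}\bigl[\mathbb{1}\{h(i)=j\}(F_i(\Phi x)-x_j)\bigr]
= d'_j\bigl([\Pi F(\Phi x)]_j-x_j\bigr),
\end{align*}
so the averaged dynamics is $x \mapsto x+\alpha_k D'(\Pi F(\Phi x)-x)$ with $D'=\mathrm{diag}(d')$ and $\min_j d'_j \geq \sigma'>0$. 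Because $\Phi\Pi$ is sup-norm nonexpansive (exactly as in Lemma~\ref{lemma:projection_nonexpansive}) and $F$ is a $\gamma$-contraction by Assumption~\ref{assump:contraction}, the composite map $\Pi F(\Phi\cdot)$ is a $\gamma$-contraction in $\|\cdot\|_\infty$ with unique fixed point $x^*$. The condition $H\geq 2/(\sigma'(1-\gamma))$ is chosen precisely so that a single deterministic step contracts the gap to $x^*$ by a factor of $1-\Theta(\alpha_k\sigma'(1-\gamma))$, giving the drift that ultimately produces the $C_b/(K+k_0)$ term.

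Next, I would decompose the one-step error as
\begin{align*}
x(k+1)-x^*
= \bigl[I-\alpha_k D^{(k)}\bigr](x(k)-x^*)
+ \alpha_k D^{(k)}\bigl(F_{i_k}(\Phi x(k))-x_{h(i_k)}^*\bigr)
+ \alpha_k D^{(k)} w(k),
\end{align*}
where $D^{(k)}=\mathrm{diag}(\mathbb{1}\{h(i_k)=j\})_j$. Adding and subtracting the stationary expectation of the middle term splits it into a contractive drift plus a Markovian bias $B_k$. To bound $B_k$, I would use the exponential mixing in Assumption~\ref{assump:egodicity}: for any window of length $L=\Theta(M_2\log K)$, conditioning on $\mathcal{F}_{k-L}$ shrinks the bias by $\exp(-L/M_2)\leq 1/K$. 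This is the reason the burn-in $k_0\geq 2M_2\log K$ appears in the stepsize; once $k\geq k_0$, the cumulative Markovian bias integrates to $O(M_2\log(K)/(K+k_0))$, absorbed into the term involving $C_1$ and $C_3$. The uniform bounds $\|x(k)\|_\infty\leq\bar x$ and $\|F(\Phi x)\|_\infty\leq \gamma\bar x+C$ provide the per-step magnitudes defining $C_1,C_2,C_3$.

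The third and most delicate step is concentrating the martingale noise and the refined Markovian component. Under Assumption~\ref{assump:martingale} the sequence $\{w(k)\}$ is a bounded martingale-difference sequence, and the accumulated effect on coordinate $j$ up to time $K$ is a weighted martingale with conditional variance of order $\sum_{k}\alpha_k^2 d'_j\bar w^2=O(H^2\log K/(K+k_0))$. Applying a Freedman-type tail inequality yields a deviation of order $H\bar w\sqrt{\log(l/\delta')\log K/(K+k_0)}$; taking a union bound over the $l$ coordinates of $\mathcal{L}$ (producing the $\log(4lM_2K/\delta')$ factor) and over $O(\log K)$ dyadic time windows used in the mixing decomposition (giving the $\log\log K$ factor) produces the $C'_a/\sqrt{K+k_0}$ term, matching the stated constant up to absolute factors. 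Iterating the resulting one-step recursion of the form $\|x(k+1)-x^*\|_\infty \leq (1-c\alpha_k)\|x(k)-x^*\|_\infty + \text{noise}_k$ with $\alpha_k=H/(k+k_0)$ then gives the claimed bound.

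The main obstacle will be combining the sup-norm contraction, which couples coordinates through $F$, with the asynchronous update and concentration, which act coordinate-wise. I would resolve this by tracking each coordinate separately along the sample path, applying Freedman plus the $\log l$ union bound to control noise uniformly, and invoking the contraction only at the level of the scalar sup-norm recursion; this is the route that yields the $\log l$ and $\log\log K$ factors actually appearing in $C'_a$, rather than a polynomial-in-$l$ dependence that would arise from an $\ell_2$-based potential.
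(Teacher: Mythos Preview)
The paper does not prove this theorem: it is quoted verbatim from \cite{lin2021multi} (see the opening sentence of Appendix~\ref{SA_with_state_aggregation}, ``we present the results of stochastic approximation with state aggregation in \cite{lin2021multi}''), and no argument beyond the statement is given. So there is no in-paper proof to compare your proposal against.

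That said, your outline is a faithful sketch of the standard route for this class of results and matches how \cite{lin2021multi} proceeds: (i) identify the mean-field update and use that $\Pi F(\Phi\cdot)$ is a $\gamma$-contraction in $\|\cdot\|_\infty$ (this is exactly the content the paper attributes to \cite[Proposition~C.1]{lin2021multi}); (ii) handle the Markovian bias by conditioning $\Theta(M_2\log K)$ steps back, which is why $k_0\ge 2M_2\log K$ is required; (iii) concentrate the bounded martingale-difference noise coordinatewise via an Azuma/Freedman bound and union-bound over the $l$ abstract states, producing the $\log(4lM_2K/\delta')$ factor; and (iv) close with the scalar recursion $\|x(k+1)-x^*\|_\infty \le (1-c\alpha_k)\|x(k)-x^*\|_\infty+\text{noise}_k$. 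Your reading of the constants $C_1,C_2,C_3$ and of the role of $H\ge 2/(\sigma'(1-\gamma))$ is correct. The only place to be careful is your attribution of the $\log\log K$ factor to ``dyadic time windows'': in the original argument it arises from the number of mixing blocks of length $\Theta(M_2\log K)$ inside $[0,K]$ entering the union bound, not from a dyadic chaining; but this does not affect the structure of your proof.
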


\subsection{Extensions}\label{subsec:beyond_IL}
Our proof follows from the following blueprint: (1) constructing a separable MDP to approximate the original MDP, (2) analyzing the algorithm as if it were implemented on the separable MDP, and (3) bounding the dap due to model difference between the separable MDP and the original one. This framework can be used to study other learning scenarios beyond IL considered in this paper. We briefly elaborate on some of them in the following.

\paragraph{Partially Observable Markov Decision Processes (POMDPs)} In sequential decision-making problems, sometimes the agent (or agents) is not able to observe the complete state and must work with a partial observation of the state \cite{littman2009tutorial}. More specifically, instead of observing a trajectory of state-actions $\{(S_k,A_k)\}$, the agent actually observes $\{(O_k,A_k)\}$, where $O_k \in \mathcal{O}$ denotes the partial observation at time $k$. In this case, unlike $\{(S_k,A_k)\}$, the trajectory $\{(O_k,A_k)\}$ does \textit{not} form a Markov chain, which imposes major technical difficulties in analyzing POMDPs. In light of our analysis framework, if we can approximate the non-Markovian trajectory $\{(O_k,A_k)\}$ by a Markov chain that is defined on $\mathcal{O} \times \mathcal{A}$, then, it would enable a tractable analysis of POMDPs. In this case, the approximation error (which is analogous to our dependence level) will appear in the analysis. Studying POMDPs is another future direction of this work.

\paragraph{General Non-Markovian Stochastic Iterative Algorithms} RL algorithms, in general, belong to the broad framework of stochastic iterative algorithms, also known as stochastic approximation algorithms \cite{robbins1951stochastic}, which can be represented in the following form
\begin{align*}
    x_{k+1}=x_k+\alpha_k G(x_k,Y_k),
\end{align*}
where $Y_k$ is the noise and $G(\cdot,\cdot)$ is some properly defined operator, depending on the problem of interest. As we see in Section \ref{subsec:proof_critic}, the update equation for the joint Q-function can be written as a stochastic approximation in the form above. Moreover, the popular stochastic gradient descent algorithm in large-scale continuous optimization is also a special case of stochastic approximation, where $G(\cdot,\cdot)$ is a noisy version of the negative gradient operator. While stochastic approximation has been extensively studied in the literature, most if not all existing results assume the noise sequence $\{Y_k\}$ is either an i.i.d. sequence or forms a Markov chain. In light of our proof framework, we are able to analyze stochastic approximation algorithms driven by non-Markovian samples, as long as the trajectory $\{Y_k\}$ can be approximated by a Markov chain. Rigorously developing asymptotic and finite-sample convergence results for non-Markovian stochastic approximation algorithms is also an interesting future direction.

\section{Examples and Experiments}\label{append:examples_and_experiemnt_setting}
This section provides the details of the illustrative example presented in Section \ref{ex:artificial} and the numerical simulations.

\subsection{Details of the Synthetic MDP}\label{append:detailed_examples1}
\paragraph{Original MDP.} The state of the three agents is a $3$-dimension vector denoted as $s=(s^1,s^2,s^3)$, where $s^i\in \{0,1\}$ for $i\in \{1,2,3\}$. The action is denoted as $a=(a^1,a^2,a^3)$, where $a^i\in \{0,1\}$ for $i\in \{1,2,3\}$. For each agent, one unit of reward can be obtained when the value of the state variable stays unchanged and $0$ is obtained otherwise. Agents $1$ and $2$ are strongly coupled and always have the same state value. Their state values remain unchanged when they take the same actions. Otherwise, their state values will transit from $0$ to $1$ or from $1$ to $0$ simultaneously. That means the following holds when $s_k^1=s_k^2$,
\begin{align*}
    \mathbb{P}(s^1_{k+1}=s^1_k,s^2_{k+1}=s^2_k\mid a^1_k=a^2_k)=1,\quad
    \mathbb{P}(s_{k+1}^1\ne s_k^1,s_{k+1}^2\ne s_k^2\mid a_k^i\ne a_k^2)=1.
\end{align*}
For agent 3, any action leads to the state value $0$ or $1$ with probability $0.5$ when (1) its state value is $1$ or (2) its state value is $0$ and it applies the same action to agent $2$. However, when its state value is 0 and it applies a different action compared with agent 2, the next state will be $1$ with probability $1$. The transition model for agent 3 is summarized as
\begin{align*}
    \mathbb{P}(s^3_{k+1}=0\mid s_k^3=0,a^2_k\ne a^3_k)=\;&0,\quad& \mathbb{P}(s^3_{k+1}=1\mid s_k^3=0,a^2_k\ne a^3_k)=\;&1,\\
    \mathbb{P}(s^3_{k+1}=0\mid s_k^3=0,a^2_k = a^3_k)=\;&0.5,\quad &\mathbb{P}(s^3_{k+1}=1 \mid s_k^3=0,a^2_k= a^3_k)=\;&0.5,\\
    \mathbb{P}(s^3_{k+1}=0\mid s_k^3=1)=\;&0.5,\quad &\mathbb{P}(s^3_{k+1}=1 \mid s_k^3=1)=\;&0.5.
\end{align*}
The different dependencies among agents induce different dependence levels with different grouping options.

\paragraph{The Dependence level.} Suppose that we group agents $1$ and $2$ together, that is, they can be seen as one agent controlling 2 action variables. Then, it is easy to derive $\mathcal{E}=0.5$ by solving the optimization problem in Eq. \eqref{trans_approx}. Suppose that agents $2$ and $3$ are grouped together. Then, we have $\mathcal{E}=0.75$. Similarly, we have $\mathcal{E}=0.875$ when agents $1$ and $3$ are grouped together. The detailed separable MDPs for different grouping options are shown in Figure \ref{fig:example1_new_model}.
\begin{figure}[H]
    \centering
    \includegraphics[width=6in]{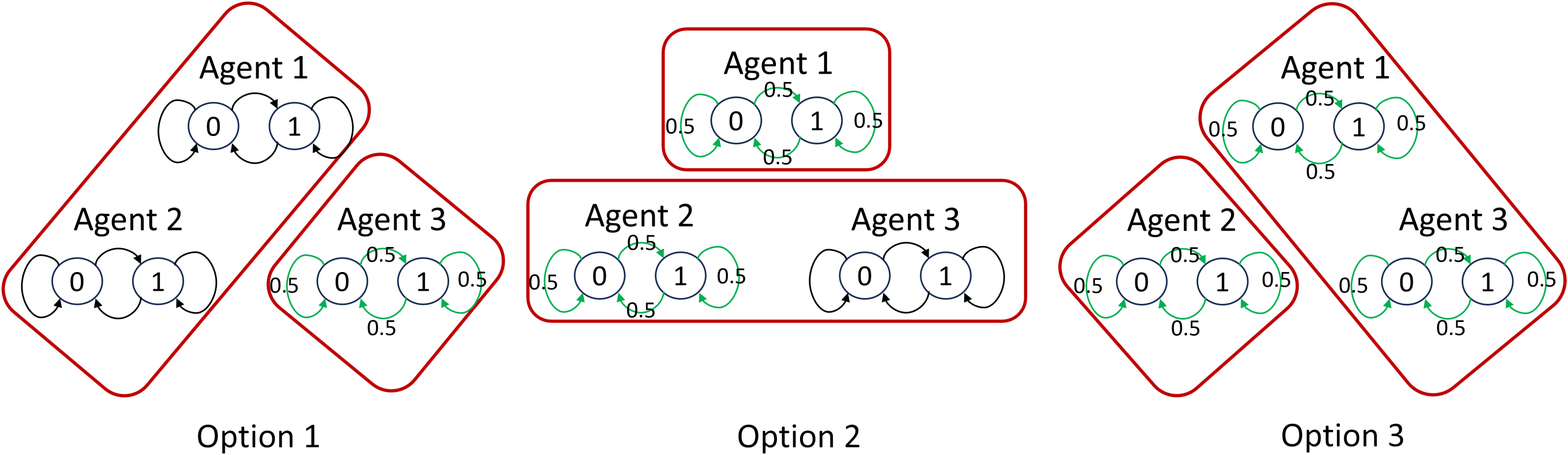}
    \caption{The separable MDPs for different grouping options}
    \label{fig:example1_new_model}
\end{figure}
In Figure \ref{fig:example1_new_model}, black arrows mean the same transition probabilities as the original MDP, and green arrows mean random transition probabilities with 0.5 to any state under any state and action. 

\subsection{The EV Charging Problem} \label{append:detailed_examples2}
Consider the problem where there are multiple charging stations in a distributed network and the total charging power must remain within a time-varying capacity to guarantee safety. A natural approach to address this problem is to formulate charging coordination as a hierarchical resource allocation problem, where the resource is the charging capacity. Each charging station receives an allocated charging capacity which denotes the maximum power that can be used to charge an EV. Consequently, safety is guaranteed because the total charging power consumed by all stations is upper bounded by the total charging capacity.

Consider allocating the charging capacity through a tree structure shown in Figure \ref{fig:binary_tree}, where decisions are made layer by layer. Agents in the same layer perform IL and decide the charging capacity assigned to the children. Specifically, as illustrated in Figure \ref{fig:binary_tree}, the maximum charging power is first allocated to the top agent and it needs to decide the capacities allocated to its left and right children. Each child receives a maximum charging capacity, which means that the total charging power induced by the charging stations on the child's side must be within the limit. Then, similar allocations are performed on all lower layers. Absolute safety is guaranteed since the charging capacity is allocated layer by layer to make sure the total charging power is within the upper limit. Finally, all charging stations on leaves charge the connected EVs no more than the maximum charging capacity they receive.
\begin{figure}[htbp]
    \centering
    \includegraphics[width=4in]{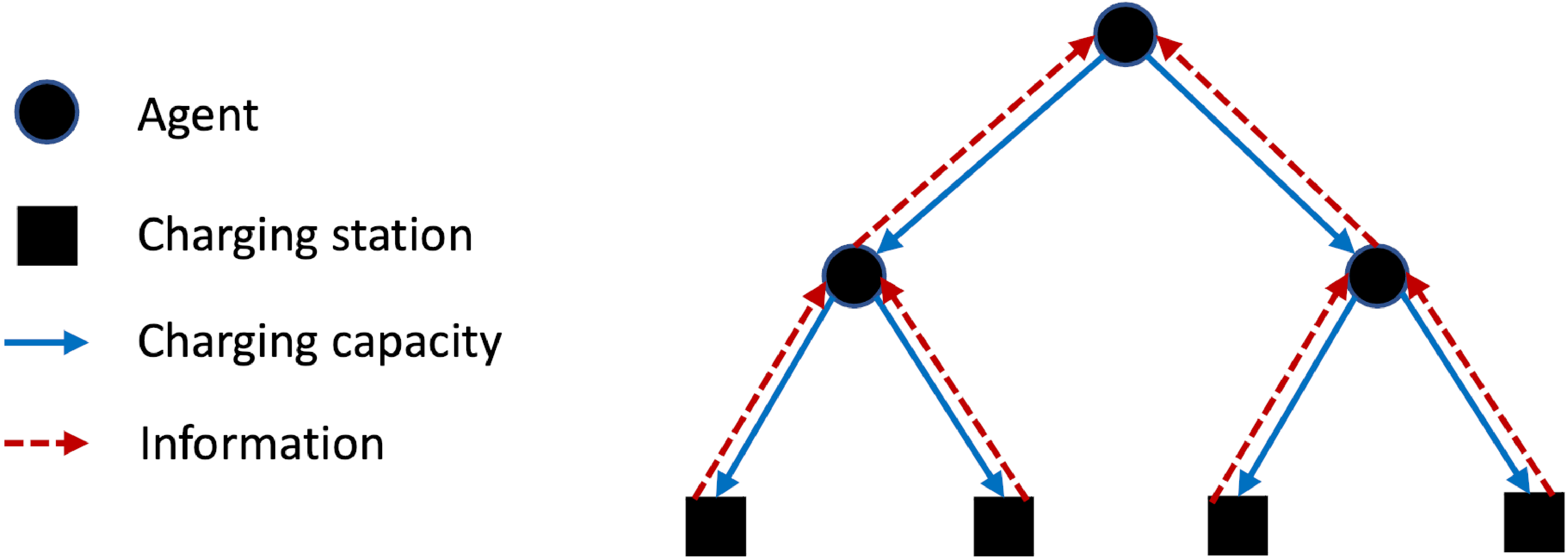}
    \caption{Illustration of a binary tree to allocate the charging capacity.}
    \label{fig:binary_tree}
\end{figure}

To make decisions, each agent observes the charging capacity assigned to it and the information of its children. The information from children is contracted from high dimensions to low dimensions for scalability before being transferred to the parents. 
In this formulation, the problem of EV charging is captured by an MARL with various dependence levels. To see this, consider agents in the same layer. Their parents' and children's policies can be seen as stationary by our design of the multi-scale learning rates (see Section \ref{append:detailed_experiment_setting} for a detailed discussion). Information contraction controls the dependence level $\mathcal{E}$ among the agents in the same layer, which leads to different performance gaps in our analysis. To further elaborate, consider the following two extreme cases. In one extreme case, the contraction keeps the information unchanged, in which case the agents within the same layer are strongly correlated because their parent makes decisions based on their full information. In another extreme case, no information is communicated to the parents. As a result, all agents in the same layer are completely independent. 

In our setting, all agents cooperate to maximize the long-term discounted total reward of all charging locations. Each agent can only communicate locally with its parent or children.

Each charging station $j$ has its local information $I^j(t)=(p^j(t),d^j(t),f^j(t),l^j(t))$ of the connected EV at time $t$, where $p^i(t)$ is the rated charging power, $d^j(t)$ is the proportion of satisfied demand, $f^j(t)$ and $l^j(t)$ are the additional time needed to fully charge the EV and the remaining time before the EV leaves. All variables are 0 if no EV is connected or the connected EV is fully charged. The charging station $j$ can induce a reward of
$r^j(t)=\Delta d^j(t)=d^j(t+1)-d^j(t)$
after charging the connected EV with power $P_c^j(t)=\min\{P^j(t),p^j(t)\}$, where $P^j(t)$ denotes the maximum charging capacity it receives. The reward is $r^j(t)=0$ if the connected EV is not charged or no EV is connected.

We define the states, actions, and rewards for the agents as follows:
\begin{itemize}
    \item \textbf{States.} For each agent $i$, the state includes the information from its children and the charging capacity allocated to it, which can be written as $s^i(t)=(I^{C_l^i}(t),I^{C_r^i}(t),P^i(t))$, where $C_l^i, C_r^i$ denote the left and right children of agent $i$. The information of agent $i$ is constructed recursively from the bottom level to the top as a function of its children's information. Specifically, we define $I^i(t)=\mathbf{W}^i_1(t) (I^{C_l^i}(t))^T+\mathbf{W}^i_2(t) (I^{C_r^i}(t))^T$, where $\mathbf{W}^i_1(t),\mathbf{W}^i_2(t)$ are both matrices that can be designed to control the information communicated to the parents. We will discuss more about how to control the dependence level with different choices of $\mathbf{W}^i_1(t),\mathbf{W}^i_2(t)$ below.
    \item \textbf{Actions.} The action of agent $i$ at time $t$ is the proportion of charging capacity allocated to the left child $a^i(t)\in[0,1]$. Consequently, the proportion of charging capacity allocated to the right child is $1-a^i(t)$.
    \item \textbf{Rewards.} The reward of agent $i$ is defined recursively as the sum of the rewards of its children from the bottom level to the top level. That is, $r^i(t)=r^{C_l^i}(t)+r^{C_r^i}(t)$. Thus, the reward of agent $i$ is equal to the total reward of all charging stations that have $i$ as an ancestor.
\end{itemize}

\subsection{Numerical Simulations}\label{append:detailed_experiment_setting}
In this section, we present the details of our numerical experiments.
\subsubsection{Experimental Setting for the Synthetic MDP}\
In this case, we consider the synthetic MDP in Example \ref{ex:artificial}. Three agents are grouped into two with different options according to Figure \ref{fig:example1}. For both IQL and INAC, we will repeat the training and testing 100 times, respectively. Each run includes 3000 steps for training and 1000 steps for testing. The discount factor is $\gamma=0.99$. Specifically, the inner loop of INAC has $K=100$ steps. We set $\alpha=0.05, k_0=4\alpha$ and $\alpha_k=\alpha/(k+k_0)$ for the critic. The stepsize for the actor is set as $\eta_0=0.2$ and $\eta_t=\sum_{i<t} \eta_i/\gamma^{2t-1}$. To encourage exploration during the training process, each agent takes action uniformly at random with probability $\epsilon_k=(1-k/K)/10$. For IQL, we also set $\alpha=0.05, k_0=4\alpha$ and $\alpha_k=\alpha/(k+k_0)$. The policy $\pi_b$ is set as the uniform distribution over the action space. We calculate the normalized reward relative to the optimal expected reward of each episode consisting of 100 steps. The optimal reward can be easily computed with the model described in Appendix \ref{append:detailed_examples1}.

\subsubsection{Experimental Setting for EV Charging}
\begin{figure}[htbp]
    \centering
    \includegraphics[width=4.5in]{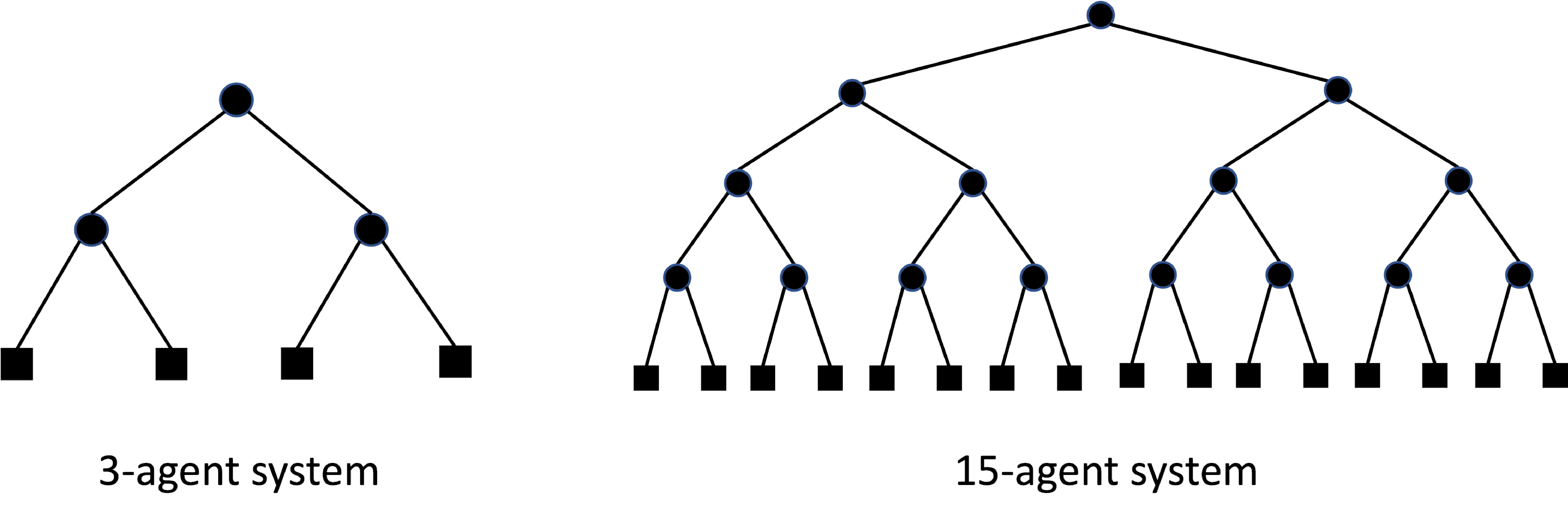}
    \caption{The structures of the 3-agent system (left) and the 15-agent system (right). The square nodes represent the charging stations and the circle nodes represent the agents. Note that our results do not require the interaction structure to be full binary trees.}
    \label{fig:3_15_agent_systems}
\end{figure}
Here we consider the 3-agent and 15-agent systems shown in Figure \ref{fig:3_15_agent_systems}. Recall that $\mathbf{W}^i_1(t),\mathbf{W}^i_2(t)$ can be designed to derive different information contractions with different dependence levels among agents within the same layer. We design three types of information contractions to perform IL, including full information, average information, and no information. Full information means that each agent communicates the full information from its children to its parent, where the dimensions of agents' states increase exponentially from the bottom to the top. Average information means that the information communicated to the parent is taken as a weighted average of the information from its children. No information means that each agent communicates nothing to its parent. If $\mathbf{W}^i_1(t)=0,\mathbf{W}^i_2(t)=0$, the children do not send any information to their parents. The agent can also send average information by mixing the information of its children with $\mathbf{W}^i_1(t)=\text{diag}(1,e_l^i(t),e_l^i(t),e_l^i(t))$ and $\mathbf{W}^i_2(t)=\text{diag}(1,e_r^i(t),e_r^i(t),e_r^i(t))$, where
\begin{align*}
    e_l^i(t)=\frac{p^{C_l^i}(t)}{p^{C_l^i}(t)+p^{C_r^i}(t)},\text{ and }e_r^i(t)=\frac{p^{C_r^i}(t)}{p^{C_l^i}(t)+p^{C_r^i}(t)}.
\end{align*}
We set $e_l^i(t)=e_r^i(t)=0$ if $p^{C_l^i}(t)=p^{C_r^i}(t)=0$. In other words, the contracted $d^i(t), f^i(t),$ and $l^i(t)$ are the weighted average of information from its children according to the rated charging power. The agent sends the full information to its children when $\mathbf{W}^i_1(t)=(\mathbf{I}_{n^i},\mathbf{0})^T\in \mathbb{R}^{2n^i \times n^i},\mathbf{W}^i_2(t)=(\mathbf{0},\mathbf{I}_{n^i})^T\in\mathbb{R}^{2n^i\times n^i}$, where $\mathbf{I}_{n^i}$ is the identity matrix, and $n^i$ is the number of dimensions of the information from agent $i$'s children.

EV arrivals are simulated with fixed arrival rates sampled from $(0,1)$ for each charging station. Each EV is set with random charging demand, maximum charging power, and remaining time before leaving. We set the remaining time before leaving longer than the time needed to fully charge the EV. The time interval between two decisions is set to 1 hour.

Two other policies are considered for comparison to our algorithms. The first one is the offline optimal policy, which is non-causal and is computed via linear programming using all information collected throughout the time window. The second is a heuristic baseline policy similar to the business-as-usual policy that charges the EV immediately upon arrival \cite{sadeghianpourhamami2019definition}, but due to the tree structure and the safety concern, the baseline policy here selects actions based on the proportion of rated power of the children, i.e., $a^i(t)=e_l^i(t)$.

\paragraph{Parameters of IQL.} Considering the continuity of the state space, we use two neural networks to learn the $Q$-functions. The current network is used to provide decisions and interact with the environment, and the target network is used to learn the optimal $Q$-fucntion. The parameters of the current network are set as the copy of the target network every 1000 steps. The target network is updated every 5 steps. The action space $[0,1]$ is discretized as $\{0, 0.1, \cdots, 1\}$. We also apply multiscale learning rates in different layers to make sure the agents in other layers are relatively stationary to the agents in each layer. Then, only the agents within the same layer are relatively non-stationary to each other. The learning rates are set to increase with depth by a multiplier of 10 for Q-networks and policy networks. To encourage exploration, the actions are given randomly with a linearly decreasing probability during the training process. The detailed parameters are listed in Table~\ref{table:IQL_parameter}.

\begin{table}[]
\begin{center}
\caption{Parameters of IQL for 3-agent and 15-agent systems}
\label{table:IQL_parameter}
\begin{tabular}{c|cc} 
\hline
                          & 3-agent system             & 15-agent system            \\ \hline
Network                   & \multicolumn{2}{c}{3 hidden layers, 64 neurons for each layer} \\ \hline
Learning rate (Top agent) & $10^{-4}$ & $10^{-5}$ \\ \hline
Max. exploration prob.    & 1                          & 0.1                        \\ \hline
Min. exploration prob.    & 0.03                       & 0.03                       \\ \hline
Batch size                & 32                         & 64                         \\ \hline
Buffer size               & 5000                       & 6000 
\\ \hline
\end{tabular}
\end{center}
\end{table}

\paragraph{Parameters of INAC.} Due to the continuity of the state and action space, we also use neural networks to act as the actor and critic. Similarly to IQL, we apply multiscale learning rates in different layers for INAC. The learning rates are also set to increase with depth by a multiplier of 10 for Q-networks and policy networks. To encourage exploration, we add Guassian noise to the action with mean 0 and decreasing variance with steps. The detailed parameters are shown in Table~\ref{table:INAC_parameter}.

\begin{table}[]
\begin{center}
\caption{Parameters of INAC for 3-agnet and 15-agent systems}
\label{table:INAC_parameter}
\begin{tabular}{c|cc}
\hline
                                     & 3-agent system             & 15-agent system            \\ \hline
Network                              & \multicolumn{2}{c}{3 hidden layers, 64 neurons for each layer} \\ \hline
Learning rate for critic (Top agent) & $10^{-4}$ & $10^{-6}$ \\ \hline
Learning rate for actor (Top agent)  & $5\times 10^{-5}$                          & $5\times 10^{-7}$                          \\ \hline
Max. variance                        & 10                         & 10                         \\ \hline
Min. variance                        & 0.4                        & 0.4                        \\ \hline
Batch size                           & 32                         & 32                         \\ \hline
Buffer size                          & 4000                       & 4000                       \\ \hline
\end{tabular}
\end{center}
\end{table}

In our simulations, both IQL and INAC are trained and tested for 20 times. Each run includes 40000 steps, and each episode includes 10 days (240 steps). The first 125 episodes are for training, and the rest of the episodes are for testing. The normalized reward is calculated relative to the optimal average reward of the offline optimal policy.

\subsubsection{Results for EV Charging}\label{subsec:ev_charging_results}
We repeated the experiments 20 times in both the 3-agent system and the 15-agent system, respectively, and the results are shown in Figure \ref{fig:IQL_combined_result} and Figure \ref{fig:INAC_combined_result}. The rewards are also normalized by the average optimal reward calculated by the offline optimal policy. The results in the 3-agent system show that IQL and INAC exhibit similar performances. Their performances with different information contractions are very close and much better than the baseline policy, and the performances with full information are slightly better. This phenomenon indicates that we can aggressively contract the information without compromising the optimality. In addition, the performance of IQL is significantly better than that of INAC.

\begin{figure}[htbp]
    \centering
    \includegraphics[width=5.5in]{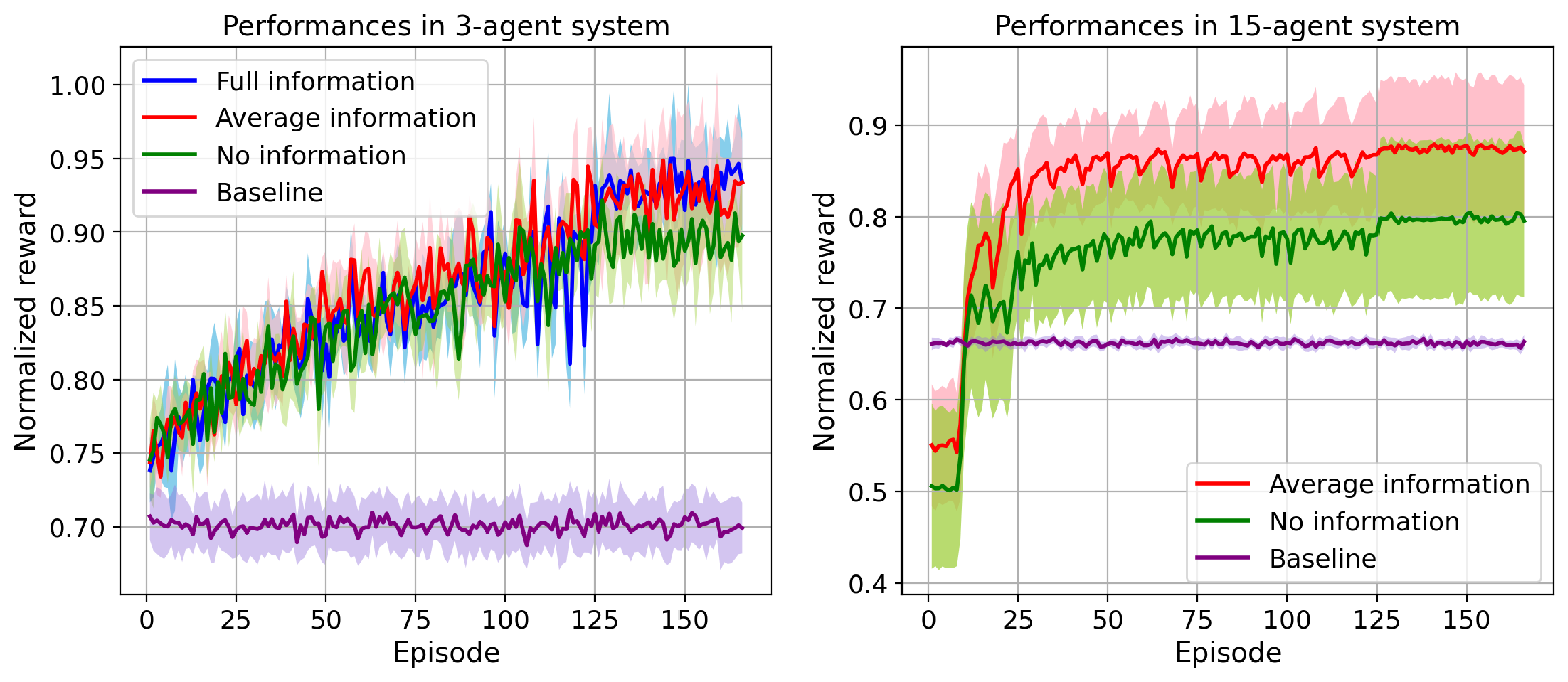}
    \caption{Performance of IQL with different contractions in the 3-agent (left) and 15-agent (right) systems.}
    \label{fig:IQL_combined_result}
\end{figure}

\begin{figure}[h]
    \centering
    \includegraphics[width=5.5in]{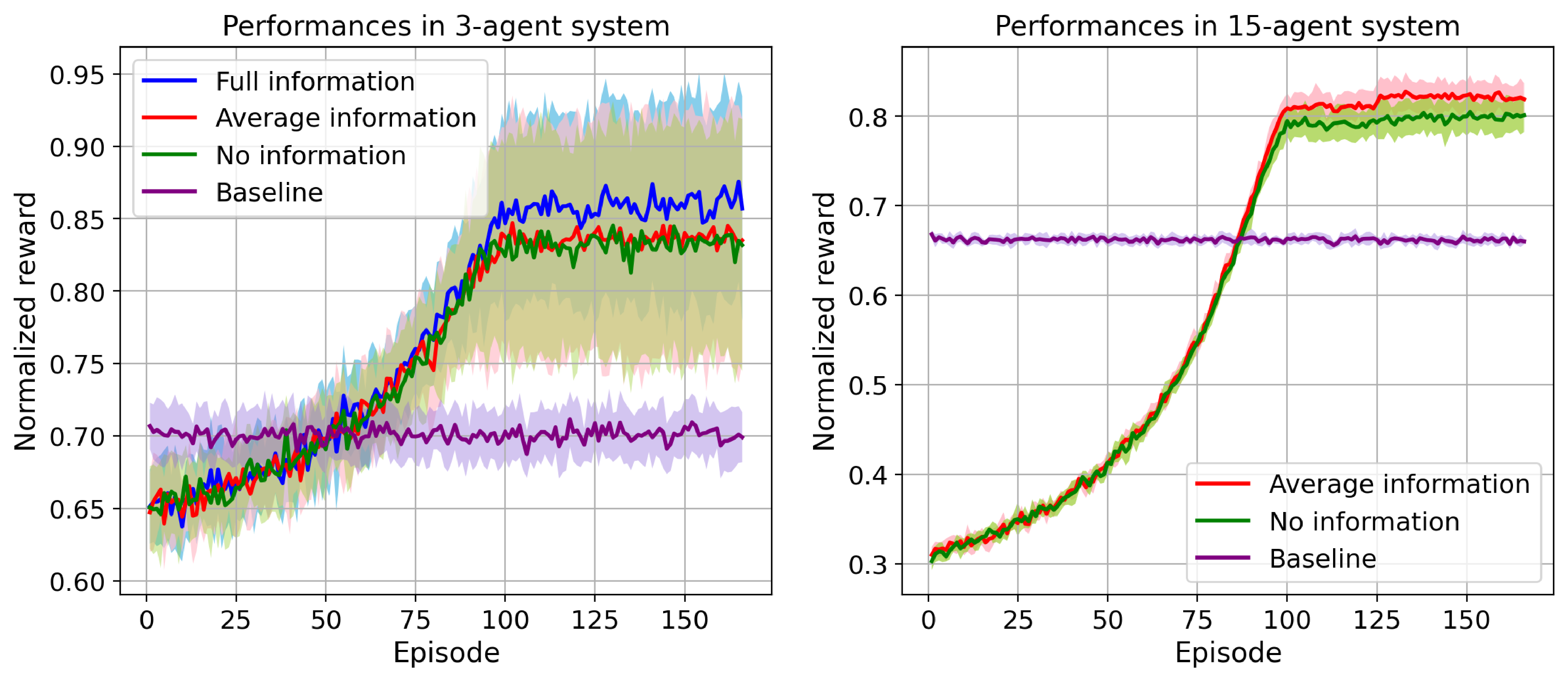}
    \caption{Performance of INAC with different contractions in the 3-agent (left) and 15-agent (right) systems.}
    \label{fig:INAC_combined_result}
\end{figure}

In the 15-agent system, IQL or INAC with no information contraction is computationally intractable due to the curse of dimensionality. The results show that the performance of INAC with average information and no information are both around 80\% of the optimal rewards, where the optimal rewards are calculated by an offline optimal algorithm (see Appendix \ref{append:detailed_experiment_setting}), and the former is slightly better. The performance of IQL with average information can significantly outperform others.

\end{document}